\DeclareMathOperator*{\argmin}{arg\,min}
\newcommand{\PP}{\mathbb P}
\newcommand{\BR}{\mathds 1}
\newcommand{\F}{\mathcal F}
\newcommand{\E}{\mathbb E}
\newcommand{\R}{Z}
\newcommand{\BPeer}{\BR_\text{peer}}
\newcommand{\lPeer}{\ell_\text{peer}}
\newcommand{\AlphaPeer}{\BR_\text{$\alpha$-peer}}
\newcommand{\AlphaStar}{\BR_\text{$\alpha^*$-peer}}
\newcommand{\lAlphaPeer}{\ell_\text{$\alpha$-peer}}
\newcommand{\lAlphaStar}{\ell_\text{$\alpha^*$-peer}}
\def\BState{\State\hskip-\ALG@thistlm}
 \renewcommand{\ALG@name}{Mechanism}
\newtheorem{theorem}{Theorem}
\newtheorem{lemma}{Lemma}
\newtheorem{proposition}[theorem]{Proposition}
\newtheorem{definition}{Definition}
\newtheorem{example}{Example}
\newcommand{\squishlist}{
\begin{list}{{{\small{$\bullet$}}}}
{\setlength{\itemsep}{3pt}      \setlength{\parsep}{1pt}
\setlength{\topsep}{1pt}       \setlength{\partopsep}{0pt}
\setlength{\leftmargin}{1em} \setlength{\labelwidth}{1em}
\setlength{\labelsep}{0.5em} } }
\newcommand{\squishend}{  \end{list}  }
\newcommand{\rev}[1]{{\color{blue}#1}}
\newcommand{\com}[1]{\textbf{\color{red}(COMMENT: #1)}}
\newcommand{\clar}[1]{\textbf{\color{green}(NEED CLARIFICATION: #1)}}
\newcommand{\response}[1]{\textbf{\color{magenta}(RESPONSE: #1)}}
\newcommand{\rev}[1]{#1}
\newcommand{\com}[1]{}
\newcommand{\clar}[1]{}
\newcommand{\response}[1]{}
\newcommand{\RNum}[1]{\uppercase\expandafter{\romannumeral #1\relax}}
\begin{document}

\icmltitlerunning{Peer Loss Functions}
\twocolumn[
\icmltitle{Peer Loss Functions:\\ Learning from Noisy Labels without Knowing Noise Rates}
\begin{icmlauthorlist}
\icmlauthor{Yang Liu}{ucsc}
\icmlauthor{Hongyi Guo}{sjtu}
\end{icmlauthorlist}

\icmlaffiliation{ucsc}{Computer Science and Engineering, UC Santa Cruz, Santa Cruz, CA, USA}
\icmlaffiliation{sjtu}{Computer Science and Engineering, Shanghai Jiao Tong University, China}

\icmlcorrespondingauthor{Yang Liu}{yangliu@ucsc.edu}
\icmlcorrespondingauthor{Hongyi Guo}{guohongyi@sjtu.edu.cn}

\icmlkeywords{}
\vskip 0.3in
]
\begin{NoHyper}
\printAffiliationsAndNotice{}
\end{NoHyper}


\begin{abstract}
Learning with noisy labels is a common challenge in supervised learning. Existing approaches often require practitioners to specify \emph{noise rates}, i.e., a set of parameters controlling the severity of label noises in the problem, and the specifications are either assumed to be given or estimated using additional steps. In this work, we introduce a new family of loss functions that we name as \emph{peer loss functions}, which enables learning from noisy labels and does not require a priori specification of the noise rates.
Peer loss functions work within the standard empirical risk minimization (ERM) framework. We show that, under mild conditions, performing ERM with peer loss functions on the noisy data leads to the optimal or a near-optimal classifier as if performing ERM over the clean training data, which we do not have access to.
We pair our results with an extensive set of experiments. Peer loss provides a way to simplify model development when facing potentially noisy training labels, and can be promoted as a robust candidate loss function in such situations.
\end{abstract}
\section{Introduction}
The quality of supervised learning models depends on the quality of the training dataset $\{(x_n,y_n)\}_{n=1}^N$.
In practice, label noise can arise due to a host of reasons. For instance, the observed labels $\Tilde{y}_n$s may represent human observations of a ground truth label. In this case, human annotators may observe the label imperfectly due to differing degrees of expertise or measurement error, see e.g., medical examples such as labeling MRI images from patients. There exist extensive prior works in the literature that aim to develop algorithms to learn models that are robust to label noise \citep{bylander1994learning,cesa1999sample,cesa2011online,ben2009agnostic,scott2013classification,natarajan2013learning,scott2015rate}. 
Typical solutions that have theoretical guarantees often require \emph{a priori} knowledge of \emph{noise rates}, i.e., a set of parameters that control the severity of label noise.
Working with unknown noise rates is difficult in practice:
Usually, one must estimate the noise rates from data, which may require additional data collection or requirement \citep{natarajan2013learning,scott2015rate,van2015learning} (e.g., a set of ground truth labels for tuning these parameters) and may introduce estimation error that can affect the final model in less predictable ways. Our main goal is to provide an alternative that does not require the specification of the noise rates, nor an additional estimation step for the noise. This target solution benefits the practitioner when he or she does not have access to reliable estimates of the noise rates (e.g., when the training data has a limited size for the estimation tasks, or when the training data is already collected in a form that makes the estimation hard to perform).

In this paper, we introduce a new family of loss functions, \emph{peer loss functions}, to empirical risk minimization (ERM), for a broad class of learning with noisy labels problems. Peer loss functions operate under different noise rates without requiring either \emph{a priori} knowledge of the embedded noise rates, or an estimation procedure. This family of loss functions builds on approaches developed in the \emph{peer prediction} literature \citep{MRZ:2005,dasgupta2013crowdsourced,shnayder2016informed}, which studies how to elicit information from self-interested agents without verification.
Results in the peer prediction literature focused on designing scoring functions to score each reported data using another noisy reference answer, without accessing ground truth information. We borrow this idea and the associated scoring functions via making a connection through treating each classifier's predictions as an agent's private information to be elicited and evaluated, and the noisy labels as imperfect reference answers reported from a ``noisy label agent". \rev{The specific form of peer loss evaluates classifiers' prediction using noisy labels on both the samples to-be-evaluated and carefully constructed ``peer" samples. The evaluation on the constructed peer sample encodes \emph{implicitly} the information about the noise as well as the underlying true labels, which helps us offset the effects of label noise. The peer sample evaluation returns us a favorable property that the expected risk of peer loss computed on the noisy distribution turns to be an affine transformation of the true risk of the classifier defined on the clean distribution. In other words, peer loss is invariant to label noise when optimizing with it. This effect helps us get rid of the estimation of noise rates.}

The main contributions of this work are:
\squishlist
    \item[1.] \rev{
    We propose a new family of loss functions that can easily adapt to existing ERM framework that i) is robust to \emph{asymmetric} label noise with formal theoretical guarantees and ii) requires no prior knowledge or estimation of the noise rates (\emph{no need for specifying noise rates}). We believe having the second feature above is non-trivial progress, and it features a promising solution to deploy in an unknown noisy training environment.}
    \item[2.] We present formal results showing that performing ERM with a peer loss function can recover an optimal, or a near-optimal classifier $f^*$ as if performing ERM on the clean data (Theorem \ref{THM:EQUAL}, \ref{THM:pneq}, \ref{THM:weightedPeer}). We also provide peer loss functions' risk guarantees (Theorem \ref{THM:converge1}, \ref{THM:GEN}).
    
    \item[3.] \rev{We present extensive experimental results to validate the usefulness of peer loss functions (Section \ref{sec:exp} and Appendix). This result is encouraging as it demonstrates the practical effectiveness in removing the requirement of error rates of noise before many of the existing training methods can be applied. We also provide preliminary results on how peer loss generalizes to multi-class classification problems.}
    
    \item[4.] Our implementation of peer loss functions is available at \hyperlink{https://github.com/gohsyi/PeerLoss}{https://github.com/gohsyi/PeerLoss}. 
    
\squishend

Due to space limit, the full version of this paper with all proof and experiment details can be found in \citep{Liu_2019}.

\subsection{Related Work}
We go through the most relevant works.\footnote{We provide more detailed discussions in the Appendix.} 

\paragraph{Learning from Noisy Labels}
Our work fits within a stream of research on learning with noisy labels.
A large portion of research on this topic works with the \emph{random classification noise} (RCN) model, where observed labels are flipped independently with probability $\in [0,\tfrac{1}{2}]$ 
\citep{bylander1994learning,cesa1999sample,cesa2011online,ben2009agnostic}. Recently, learning with asymmetric noisy data (or also referred as \emph{class-conditional} random classification noise (CCN)) for binary classification problems has been rigorously studied in \citep{stempfel2009learning,scott2013classification,natarajan2013learning,scott2015rate,van2015learning,menon2015learning}.

For RCN, where the noise parameters are symmetric, there exist works that show symmetric loss functions \citep{manwani2013noise,ghosh2015making,ghosh2017robust,van2015learning} are robust to the underlying noise, without specifying the noise rates.
Our focus departs from this line of works and we exclusively focus on asymmetric noise setting, and study the possibility of an approach that can ignore the knowledge of noise rates. Follow-up works include \citep{du2013clustering,van2015average,menon2015learning,charoenphakdee2019symmetric}.

\paragraph{More Recent Works}
 More recent developments include an importance re-weighting algorithm \citep{liu2016classification}, a noisy deep neural network learning setting \citep{sukhbaatar2014learning,han2018co,song2019selfie}, and learning from massive noisy data for image classification \citep{xiao2015learning,goldberger2016training,zhang2017mixup,jiang2017mentornet,jenni2018deep,yi2019probabilistic}, robust cross entropy loss for neural network \citep{zhang2018generalized}, loss correction \citep{patrini2017making}, among many others. Loss or sample correction has also been studied in the context of learning with unlabeled data with weak supervisions \citep{lu2018minimal}. Most of the above works either lacks theoretical guarantees of the proposed method against asymmetric noise rates \citep{sukhbaatar2014learning,zhang2018generalized}, or require estimating the noise rate (or transition matrix between the noisy and true labels) \citep{liu2016classification,xiao2015learning,patrini2017making,lu2018minimal}.
 
A recent work \citep{xu2019l_dmi} proposes an information theoretical loss, an idea adapted from an earlier theoretical contribution \citep{kong2018water}, which is also robust to asymmetric noise rates. We aimed for a simple-to-optimize loss function that can easily adapt to existing ERM solutions. 
\paragraph{Peer Prediction}
Our work builds on the literature of peer prediction \citep{prelec2004bayesian,MRZ:2005,witkowski2012robust,radanovic2013,Witkowski_hcomp13,dasgupta2013crowdsourced,shnayder2016informed,sub:ec17}.
Most relevant to us is \citep{dasgupta2013crowdsourced,shnayder2016informed} where a correlated agreement (CA) type of mechanism was proposed. CA evaluates a report's correlations with another reference agent - its specific form inspired our peer loss.

\section{Preliminaries}
Suppose $(X,Y) \in \mathcal X \times \mathcal Y$ are drawn from a joint distribution $\mathcal D$, with their marginal distributions denoted as $\PP_X, \PP_Y$. We assume $\mathcal X \subseteq \mathbb R^d$, and $\mathcal Y = \{-1,+1\}$, that is we consider a binary classification problem. Denote by $p:=\PP(Y=+1) \in (0,1)$. There are $N$ training samples $(x_1,y_1),...,(x_N, y_N)$ drawn i.i.d. from $\mathcal D$. For positive integer $n$, denote by $[n]:=\{1,2,...,n\}$. 

Instead of observing $y_n$s, the learner can only collect a noisy set of training labels $\tilde{y}_n$s, generated according to $y_n$s and a certain error rate model; that is we observe a dataset $\{(x_n,\tilde{y}_n)\}_{n=1}^N.$
We assume uniform error for all the training samples we collect, in that errors in $\tilde{y}_n$s follow the same error rate model: denoting the random variable for noisy labels as $\tilde{Y}$ and we define 
\begin{align*}
    &e_{+1} := \PP(\tilde{Y} = -1| Y = +1),~e_{-1} := \PP(\tilde{Y} = +1| Y = -1)
\end{align*}
Label noise is conditionally independent from the features, that is the error rate is uniform across $x_n$s:
$\PP(\tilde{Y} = y'| Y = y) =\PP(\tilde{Y} = y'| X, Y = y), \forall y,y' \in \{-1,+1\}.
$

We assume $0 \leq e_{+1} + e_{-1} <1$ - this condition is not unlike the ones imposed in the existing learning literature \citep{natarajan2013learning}, and it simply implies that the noisy labels are positively correlating with the true labels (informative about the true labels).  Denote the distribution of the noisy data $(X,\tilde{Y})$ as $\tilde{\mathcal D}$. 

$f: \mathcal X \rightarrow \mathbb R$ is a real-valued decision function, and its risk w.r.t. the 0-1 loss is defined as $\underline{\mathbb E_{(X,Y) \sim \mathcal D}[\BR(f(X),Y)]}$.
The Bayes optimal classifier $f^*$ is the one that minimizes the 0-1 risk: 
$
\underline{f^* = \text{argmin}_f ~\mathbb E_{(X,Y) \sim \mathcal D}[\BR(f(X),Y)]}
$.
Denote this optimal risk as $R^*$.
Instead of minimizing the above 0-1 risk, the learner often seeks a surrogate loss function $\ell:  \mathbb R \times \{-1,+1\} \rightarrow \mathbb R_{+}$, and finds a $f \in \F$ that minimizes the following error:
$
\underline{\mathbb E_{(X,Y) \sim \mathcal D}[\ell(f(X),Y)]}
$. $\mathcal F$ is the hypothesis space for $f$. Denote the following measures: $R_{\mathcal D}(f) = \mathbb E_{(X,Y) \sim \mathcal D}[\BR(f(X),Y)]$ and $ R_{\ell,\mathcal D}(f) = \mathbb E_{(X,Y) \sim \mathcal D}[\ell(f(X),Y)]$. 

 
 When there is no confusion, we will also short-hand $\mathbb E_{(X,Y) \sim \mathcal D}[\ell(f(X),Y)]$ as $\mathbb E_{\mathcal D}[\ell(f(X),Y)]$.
Denoting $D$ a dataset collected from distribution $\mathcal D$ (correspondingly $\tilde{D}:=\{(x_n,\tilde{y}_n)\}_{n \in [N]}$ from $\tilde{\mathcal D}$), the empirical risk measure for $f$ is defined as
\underline{$\hat{R}_{\ell,D}(f) = \frac{1}{|D|} \sum_{(x,y) \in D}\ell(f(x),y)~.
$} 

\subsection{Learning with Noisy Labels}

Typical methods for learning with noisy labels include developing noise correction surrogates loss function to learn with noisy data \citep{natarajan2013learning}. For instance, \cite{natarajan2013learning} tackles this problem by defining the following \emph{un-biased surrogate loss functions} over $\ell$ to help ``remove" noise in expectation:
$
\tilde{\ell}(t,y) := \frac{(1-e_{-y}) \cdot \ell (t,y) - e_{y}\cdot \ell(t,-y)}{1-e_{-1}-e_{+1}}, \forall t,y.
$
$\tilde{\ell}$ is identified such that when a prediction is evaluated against a noisy label using this surrogate loss function, the prediction is as if evaluated against the ground-truth label using $\ell$ in expectation. Hence the loss of the prediction is ``unbiased'', that is $\forall$ prediction $t$, $\mathbb E_{\tilde{Y}|y}[\tilde{\ell}(t,\tilde{Y})] = \ell(t,y)$ [Lemma 1, \citep{natarajan2013learning}].

One important note to make is most, if not all, existing solutions require the knowledge of the error rates $e_{-1},e_{+1}$. Previous works either assumed the knowledge of it, or needed additional assumptions, clean labels or redundant noisy labels to estimate them. This becomes the bottleneck of applying these great techniques in practice. Our work is also motivated by the desire to remove this limitation. 

\subsection{Peer Prediction}\label{sec:ca}

Peer prediction is a technique developed to truthfully elicit information when there is no ground truth verification.
Suppose we are interested in eliciting private observations about a binary event $Y \in \{-1,+1\}$ generated according to a random variable $Y$.
There are $K$ agents indexed by $[K]$. Each of them holds a noisy observation of the truth $Y=y$, denoted as $y^A \in \{-1,+1\},\, A \in [K]$. We would like to elicit the $y^A$s, but they are completely private and we will not observe $y$ to evaluate agents' reports. Denote by $r^A$ the reported data from each agent $A$. $r^A \neq y^A$ if agents are not compensated properly for their information. 

Results in \emph{peer prediction} have proposed scoring or reward functions that evaluate an agent's report using the reports of other peer agents. For example, a peer prediction mechanism may reward agent $A$ for her report $r^A$ using $S(r^A, r^B)$ where $r^B$ is the report of a randomly selected reference agent $B \in [K]\backslash \{A\}$.
The scoring function $S$ is designed so that truth-telling is a strict Bayesian Nash Equilibrium (implying other agents truthfully report their $y^B$), that is,
$
\mathbb E_{y^B}[S( y^A, y^B )|y^A] > 
\mathbb E_{y^B}[S(r^A, y^B)| y^A],~\forall r^A \neq y^A.
$

\textbf{Correlated Agreement} \citep{shnayder2016informed,dasgupta2013crowdsourced} (CA) is an established peer prediction mechanism for a multi-task setting\footnote{We provide other examples of peer prediction functions in the Appendix.}. CA is also the core and the focus of our subsequent sections on developing peer loss functions. This mechanism builds on a $\Delta$ matrix that captures the stochastic correlation between the two sources of predictions $y^A$ and $y^B$. 
Denote the following relabeling function: $g(1) = -1, g(2) = +1$, $\Delta \in \mathbb R^{2 \times 2}$ is a squared matrix with its entries defined as follows: $\forall ~k,l=1,2$
\begin{small}
\begin{align*}
    &\Delta_{k,l}  = \PP\bigl(y^A=g(k),y^B=g(l)\bigr)- \PP\bigl(y^A = g(k)\bigr) \PP\bigl(y^B= g(l)\bigr), 
\end{align*}
\end{small}
The intuition of above $\Delta$ matrix is that each $(k,l)$ entry of $\Delta$ captures the marginal correlation between the two predictions $y^A$ and $y^B$. When there is no confusion in the text, we will always follow this relabeling function to map a $-1$ label to $1$ and $+1$ to $2$ when defining or calling an entry in the $\Delta$ matrix without explicitly spelling out $g(\cdot)$, that is we will write
\begin{align*}
    &\Delta_{k,l}  = \PP\bigl(y^A=k,y^B=l\bigr)- \PP\bigl(y^A = k\bigr) \cdot \PP\bigl(y^B= l\bigr), 
\end{align*}
as well as 
$$
\Delta_{y,y'}  = \PP\bigl(y^A=y,y^B=y'\bigr)- \PP\bigl(y^A = y\bigr) \cdot \PP\bigl(y^B= y'\bigr).
$$
We further define $M: \{-1,+1\} \times \{-1,+1\} \rightarrow \{0,1\}$ as the sign matrix of $\Delta$: 
\begin{align}
M(y, y') =:  \text{Sgn}\left(\Delta_{y,y'}\right), \label{eqn:ms}
\end{align}
$\text{where}~\text{Sgn}(x)=1, x > 0; ~\text{Sgn}(x)=0,$ otherwise. 

CA requires each agent $A$ to perform multiple tasks: denote agent $A$'s predictions for the $N$ tasks as $y^A_1,\dots,y^A_N$. 
Ultimately the scoring function $S(\cdot)$ for each task $n$ that is shared between $A,B$ is defined as follows: randomly draw two tasks $n_1,n_2~, n_1 \neq n_2$,
\begin{align*}
S\bigl(y^A_n,y^B_n\bigr) :=& M\bigl(y^A_n, y^B_n\bigr) - M\bigl(y^A_{n_1},y^B_{n_2}\bigr).
\end{align*}
A key difference between the first and second $M(\cdot)$ terms is that the second term is defined for two independent peer tasks $n_1,n_2$ (as the reference answers). It was established in \citep{shnayder2016informed} that CA is truthful at a Bayesian Nash Equilibrium (Theorem 5.2, \cite{shnayder2016informed}.) \footnote{To be precise, it is an informed truthfulness. We refer interested readers to \citep{shnayder2016informed} for details.}; in particular, if $y^B$ is \emph{categorical} w.r.t. $y^A$:
$
\PP(y^B=y'|y^A = y) < \PP(y^B=y'), \forall A,B \in [K],~y' \neq y
$
then $S(\cdot) $ is strictly truthful (Theorem 4.4,  \cite{shnayder2016informed}). 

\section{Learning with Noisy Labels: a Peer Prediction Approach}\label{sec:pp}

In this section, we show that peer prediction scoring functions, when specified properly, will adopt Bayes optimal classifier as their maximizers (or minimizers for the corresponding loss form). 

\rev{\subsection{Learning with Noisy Labels as an Elicitation Problem}}
We first state our problem of learning with noisy labels as a peer prediction problem.
The connection is made by firstly rephrasing the two data sources, the classifiers' predictions and the noisy labels, from agents' perspective. For a task $Y \in \{-1, +1\}$, say $+1$ for example, denote the noisy labels $\tilde{Y}$ as $\R(X), X \sim \PP_{X|Y=1}$. In general, $\R(X)$ can be interpreted as the agent that ``observes" $\tilde{y}_1,...,\tilde{y}_N$ for a set of randomly drawn feature vectors $x_1,...,x_N$:
$
\tilde{y}_n \sim \R(X)$. Denote the following error rates for the agent's observations (similar to the definition of $e_{+1},e_{-}$):
$
    {\PP}(\R(X) = -1| Y = +1) = e_{+1},~
    {\PP}(\R(X) = +1| Y = -1) = e_{-1}.
$
There is another agent whose observations ``mimic'' the Bayes optimal classifier $f^*$. Again denote this optimal classifier agent as $\R^*(X):=f^*(X)$:
$
    {\PP}(\R^*(X) = -1| Y = +1) = e^*_{+1},~
{\PP}(\R^*(X) = +1| Y = -1) = e^*_{-1}.
$
\begin{figure}[!ht]
\centering
\includegraphics[width=0.7\linewidth]{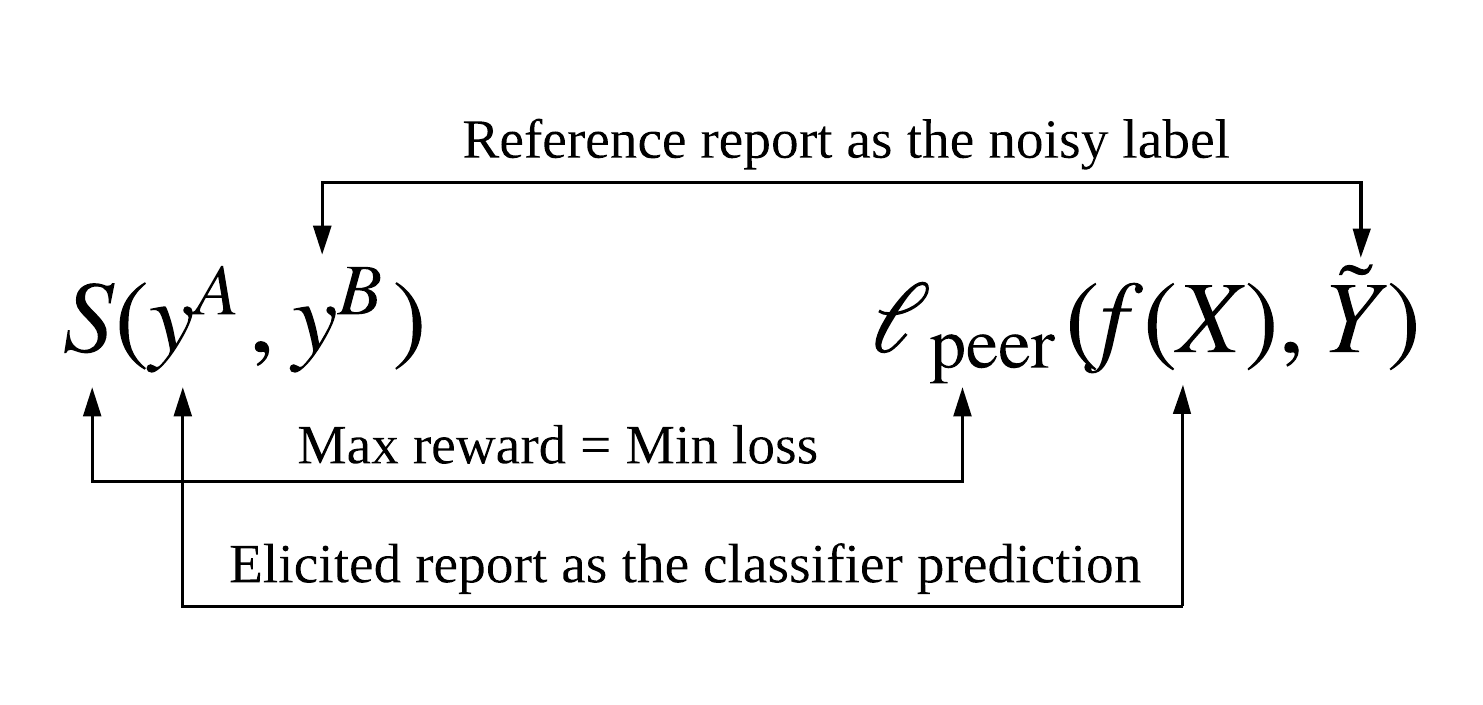}
\vspace{-0.1in}
\caption{$S$ is the peer prediction function; $\lPeer$ is to ``evaluate" a classifier's prediction using a noisy label.}\label{mechanism:ill}
\end{figure}

Suppose we would like to elicit predictions from the optimal classifier agent $\R^*$, while the reports from the noisy label agent $\R$ will serve as the reference reports. Both $\R$ and $\R^*$ are randomly assigned a task $X=x$, and each of them observes a signal $\R(x)$ and $\R^*(x)$ respectively. Denote the report from agent $\R^*$ as $r^*$. A scoring function $S:\mathbb R \times \mathbb R \rightarrow \mathbb R$ is called to induce truthfulness if the following fact holds: $\forall r^*(X) \neq \R^*(X)$,
\begin{align}
\mathbb E_{ X}\bigl [S\bigl(\R^*(X),\R(X)\bigr)\bigr] \geq \mathbb E_{X}\bigl [S\bigl (r^*(X),\R(X)\bigr)\bigr]. \label{eqn:truthful}
\end{align}
Taking the negative of $S(\cdot)$ (changing a reward score one aims to maximize to a loss to minimize) we also have
$$
\mathbb E_{ X}\bigl [-S\bigl(\R^*(X),\R(X)\bigr)\bigr] \leq \mathbb E_{X}\bigl [-S\bigl (r^*(X),\R(X)\bigr)\bigr],
$$ implying when taking $-S(\cdot)$ as the loss function, minimizing $-S(\cdot)$ w.r.t. $\R$ will the Bayes optimal classifier $f^*$. 

Our idea is summarized in Figure \ref{mechanism:ill}.

\rev{\subsection{Peer Prediction Mechanisms Induce Bayes Optimal Classifier}}
When there is no ambiguity, we will shorthand $\R(X), \R^*(X)$ as $\R,\R^*$, with keeping in mind that $\R,\R^*$ encode the randomness in $X$. In the elicitation setting, a potentially misreported classifier $f(X)$ only disagrees with $f^*(X)$ according to its local observation $f^*(X)$ but not $Y$ (unobservable), that is $\PP(f(X)\neq f^*(X)|f^*(X)=l,Y=+1)=\PP(f(X)\neq f^*(X)|f^*(X)=l,Y=-1), ~l \in \{-1,+1\}$. Denote this reporting space of $f$ as $\mathcal F_{\textsf{report}}$. Suppose $\R^*$ has the correct prior $p$ of $Y$. Then we have:
\begin{theorem}\label{THM:MAIN} 
 Suppose $S(\cdot)$ induces truthful $f^*$ (Eqn. (\ref{eqn:truthful})), that is $S(\cdot)$ is able to elicit the Bayes optimal classifier $f^*$ (agent $\R^*$) using $\R$. Then
$
f^* = \text{argmin}_{f \in \mathcal F_{\textsf{report}}}~ \mathbb E_{(X,\tilde{Y}) \sim \tilde{\mathcal D}}\bigl[-S(f(X),\tilde{Y})\bigr].
$
\end{theorem}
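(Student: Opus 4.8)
The plan is to reduce the claimed optimality to the truthfulness hypothesis (\ref{eqn:truthful}) by means of two identifications: first, that sampling $(X,\tilde{Y})\sim\tilde{\mathcal D}$ is the same as drawing $X\sim\PP_X$ and letting the noisy-label agent report $\R(X)$; and second, that letting $f$ range over $\mathcal F_{\textsf{report}}$ is the same as letting the report strategy $r^*$ of the optimal-classifier agent $\R^*$ range over all of its feasible (mis)reports, with $f=f^*$ corresponding to truth-telling $r^*=\R^*$. Once both identifications are in place, the theorem is an immediate consequence of (\ref{eqn:truthful}) after negating.

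First I would establish the distributional identity. By construction $\R$ is defined so that $\PP(\R(X)=-1\mid Y=+1)=e_{+1}$ and $\PP(\R(X)=+1\mid Y=-1)=e_{-1}$, which are exactly the error rates of $\tilde{Y}$; combined with the conditional-independence assumption $\PP(\tilde{Y}=y'\mid X,Y=y)=\PP(\tilde{Y}=y'\mid Y=y)$ and the fact that $\R(X)$ and $\tilde{Y}$ are generated from $Y$ through the same channel, the joint law of $(X,\R(X))$ coincides with $\tilde{\mathcal D}$. Hence for every $f$,
\[
\E_{(X,\tilde{Y})\sim\tilde{\mathcal D}}\bigl[-S(f(X),\tilde{Y})\bigr]=\E_{X}\bigl[-S(f(X),\R(X))\bigr].
\]

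Next I would identify the feasible strategy set. A classifier $f\in\mathcal F_{\textsf{report}}$ is, by the definition stated just above the theorem, one whose disagreement with $f^*$ is conditionally independent of $Y$ given $f^*(X)$; this is precisely the statement that $f(X)$ is produced from $\R^*(X)=f^*(X)$ through a channel that does not depend on $Y$, i.e., $f$ is an admissible report strategy $r^*$ of agent $\R^*$, and $f^*$ itself is the truthful strategy. Applying (\ref{eqn:truthful}) to every such $f$ and negating gives $\E_X[-S(f^*(X),\R(X))]\le \E_X[-S(f(X),\R(X))]$ for all $f\in\mathcal F_{\textsf{report}}$. Chaining this with the display above yields that $f^*$ is a minimizer of $\E_{(X,\tilde{Y})\sim\tilde{\mathcal D}}[-S(f(X),\tilde{Y})]$ over $\mathcal F_{\textsf{report}}$, which is the claim.

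I expect the only genuine subtlety — and the step I would write most carefully — to be the second identification: one must verify that ``$f$ disagrees with $f^*$ only through $f^*(X)$, not $Y$'' is exactly the closure property that makes $f$ an admissible peer-prediction report of $\R^*$, so that the premise (\ref{eqn:truthful}), which quantifies over misreports $r^*$, transfers verbatim to a quantifier over $\mathcal F_{\textsf{report}}$; the requirement that $\R^*$ holds the correct prior $p$ is used implicitly here, as it is part of what makes $S(\cdot)$ induce truthful $f^*$ in the first place. The distributional identity is routine once the noise model is unpacked, and if (\ref{eqn:truthful}) holds strictly for $r^*\neq\R^*$ one additionally obtains that $f^*$ is the unique minimizer.
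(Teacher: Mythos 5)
Your proposal is correct and follows essentially the same route as the paper: the paper's proof is precisely an explicit execution of your ``second identification,'' constructing for each $f'\in\mathcal F_{\textsf{report}}$ a randomized report strategy $\R'$ (flipping $f^*(X)$ with probabilities $\epsilon^{+},\epsilon^{-}$ equal to the measures of the disagreement sets) whose expected score matches that of $f'$ via the conditional independence of $\tilde Y$ and $X$ given $Y$, and then invoking truthfulness. The only cosmetic difference is that the paper further decomposes $\R'$'s score as a combination of the truthful strategy and the two constant-report strategies before applying (strict) truthfulness, whereas you apply the truthfulness inequality to $\R'$ directly --- both are valid.
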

This proof can be done via showing that any non-optimal Bayes classifier corresponds to a non-truthful misreporting strategy.
We emphasize that it is not super restrictive to have a truthful peer prediction scoring function $S$. We provide discussions in Appendix. \rev{Theorem \ref{THM:MAIN} provides a conceptual connection and can serve as an anchor point when connecting a peer prediction score function to the problem of learning with noisy labels. So far we have not discussed a specific form of how we construct a loss function using ideas from peer prediction, and have not mentioned the requirement of knowing the noise rates. We will provide the detail about a particular \emph{peer loss} in the next section, and explain its independence of noise rates. }

\section{Peer Loss Function}\label{sec:main}
\rev{We now present peer loss, a family of loss functions inspired by a particular peer prediction mechanism, the correlated agreement (CA), as presented in Section \ref{sec:ca}. We are going to show that peer loss is able to induce the minimizer of a hypothesis space $\F$, under a broad set of non-restrictive conditions. In this Section, we do not restrict to Bayes optimal classifiers, nor do we impose any restrictions on the loss functions' elicitation power.}

\rev{
\subsection{Preparing CA for Noisy Learning Problem}}
To give a gentle start, we repeat the setting of CA for our classification problem. 

\paragraph{$\Delta$ and scoring matrix} First recall that $\Delta \in \mathbb R^{2 \times 2}$ is a squared matrix with entries defined between $\R^*$ (the $f^*$) and $\R$ (i.e., the noisy labels $\tilde{Y}$): $\forall k,l=1,2$
\begin{align*}
&\Delta_{k,l}  = \PP\bigl(f^*(X)=k,\tilde{Y}=l\bigr) -  \PP\bigl(f^*(X)= k\bigr)\PP\bigl(\tilde{Y} = l\bigr), ~
\end{align*}
\rev{$\Delta$ characterizes the ``marginal" correlations between the optimal classifier' prediction and the noisy label $\tilde{Y}$. Then the following scoring matrix $M$ is computed using $\text{Sgn}(\Delta)$, the sign matrix of $\Delta$.}
\begin{example}
\label{exa:delta}
Consider a binary class label case: $\PP(Y=-1) = 0.4, \PP(Y=+1) = 0.6$, the noise in the labels are $e_{-1} = 0.3, e_{+1} = 0.4$ and $e^*_{-1} = 0.2, e^*_{+1} = 0.3$. Then we have
\rev{
$
\Delta_{1,1} = 0.036, ~~\Delta_{1,2} = -0.036, ~~\Delta_{2,1} = -0.036, ~~ \Delta_{2,2} = 0.036.
$ The details of the calculation can be found in the Appendix.
And:
}
$\Delta = 
\begin{bmatrix}
    0.036     & -0.036\\
    -0.036    &  0.036  
\end{bmatrix}
\Rightarrow  \text{Sgn}(\Delta)  = \begin{bmatrix}
    1 & 0\\
    0  & 1 
\end{bmatrix}.
$ 
\end{example}

\paragraph{Peer samples} For each sample $(x_n,\tilde{y}_n)$, randomly draw another two samples $
(x_{n_1},\tilde{y}_{n_1}),(x_{n_2},\tilde{y}_{n_2})
$
such that $n_1\neq n_2.$ We will name $(x_{n_1},\tilde{y}_{n_1}),(x_{n_2},\tilde{y}_{n_2})$ as $n$'s peer samples. \rev{After pairing $x_{n_1}$ with $\tilde{y}_{n_2}$ (two independent instances), the scoring function $S(\cdot)$ for each sample point $x_n$ is defined as follows: 
$$
S(f(x_n),\tilde{y}_n))
=M\bigl(f(x_n),\tilde{y}_n\bigr) - M\bigl(f(x_{n_1}),\tilde{y}_{n_2}\bigr)
.
$$ 
Define loss function $\tilde{\ell}(\cdot)$ as the negative of $S(\cdot)$, which we will name as the (\textbf{Generic Peer Loss})
\begin{small}
\begin{align}
\tilde{\ell}\bigl(f(x_n),&\tilde{y}_n\bigr):= \bigl (1- M \bigl(f(x_n),\tilde{y}_n\bigr)\big) - \bigl (1- M\bigl(f(x_{n_1}),\tilde{y}_{n_2}\bigr)\bigr).\label{eqn:peerloss1}
\end{align}
\end{small}
The first term above evaluates the classifier's prediction on $x_n$ using noisy label $\tilde{y}_n$, and the second ``peer" term defined on two independent tasks $n_1,n_2$ ``punishes'' the classifier from overly agreeing with the noisy labels. We will see this effect more clearly.}



\rev{
\subsection{Peer Loss}

We need to know $\text{Sgn}(\Delta)$ in order to specify $M$ and $\tilde{\ell}$, which requires certain information about $f^*$ and $\tilde{Y}$. We show that Example 1 is not a special case, and for the scenarios that the literature is broadly interested in, $\text{Sgn}(\Delta)$ is simply the identify matrix:} 
\begin{lemma}
When $e_{-1}+e_{+1} < 1$, we have $\text{Sgn}(\Delta) = I_{2 \times 2}$, the identity matrix.\label{LEMMA:SGN}
\end{lemma}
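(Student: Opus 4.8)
The plan is to show that $\Delta_{1,1} > 0$ and $\Delta_{1,2} < 0$ (and by symmetry $\Delta_{2,1} < 0$, $\Delta_{2,2} > 0$), which immediately gives $\text{Sgn}(\Delta) = I_{2\times 2}$. First I would write each entry of $\Delta$ in terms of the joint law of $(f^*(X), \tilde{Y})$. Using $p = \PP(Y=+1)$, the noise rates $e_{+1}, e_{-1}$, and the classifier error rates $e^*_{+1}, e^*_{-1}$, and crucially the conditional independence of $\tilde{Y}$ and $f^*(X)$ given $Y$ (both noise models are conditionally independent of the features, and the misreport-free $f^*$ depends on $X$ only), I can expand
\begin{align*}
\PP\bigl(f^*(X)=-1, \tilde{Y}=-1\bigr) &= p\, e^*_{+1} e_{+1} + (1-p)(1-e^*_{-1})(1-e_{-1}),\\
\PP\bigl(f^*(X)=-1\bigr) &= p\, e^*_{+1} + (1-p)(1-e^*_{-1}),\\
\PP\bigl(\tilde{Y}=-1\bigr) &= p\, e_{+1} + (1-p)(1-e_{-1}),
\end{align*}
and similarly for the other entries.

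Next I would substitute into $\Delta_{1,1} = \PP(f^*=-1,\tilde{Y}=-1) - \PP(f^*=-1)\PP(\tilde{Y}=-1)$ and simplify. The key algebraic fact is that, after cancellation, $\Delta_{1,1}$ factors as a positive constant times $\bigl(1-e^*_{-1}-e^*_{+1}\bigr)\bigl(1-e_{-1}-e_{+1}\bigr)$; concretely one gets $\Delta_{1,1} = p(1-p)\,(1-e^*_{-1}-e^*_{+1})\,(1-e_{-1}-e_{+1})$. Since $p\in(0,1)$, the assumption $e_{-1}+e_{+1} < 1$ gives the third factor positive, and the optimal classifier $f^*$ must also satisfy $e^*_{-1}+e^*_{+1} < 1$ (otherwise flipping $f^*$ would strictly reduce the $0$-$1$ risk, contradicting optimality — I would state this as a short observation). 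Hence $\Delta_{1,1} > 0$. The same computation shows $\Delta_{2,2} = \Delta_{1,1} > 0$ and $\Delta_{1,2} = \Delta_{2,1} = -\Delta_{1,1} < 0$, because the row and column sums of $\Delta$ are zero (each is a centered indicator, so $\sum_l \Delta_{k,l} = 0$ and $\sum_k \Delta_{k,l} = 0$). Therefore $\text{Sgn}(\Delta)$ has $1$s on the diagonal and $0$s off-diagonal, i.e. $\text{Sgn}(\Delta) = I_{2\times 2}$.

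The main obstacle I anticipate is not the algebra itself but getting the conditional-independence bookkeeping exactly right: one must be careful that $f^*(X)$ and $\tilde{Y}$ are independent \emph{conditional on $Y$} (they decouple through the label), so that the joint probabilities factor through a sum over $y\in\{-1,+1\}$ as written above; and one must separately justify $e^*_{-1}+e^*_{+1} < 1$ for the Bayes optimal classifier. Once those two points are pinned down, the factorization $\Delta_{1,1} = p(1-p)(1-e^*_{-1}-e^*_{+1})(1-e_{-1}-e_{+1})$ and the zero-sum structure of $\Delta$ finish the proof, and Example~\ref{exa:delta} is recovered as the special case $p=0.6$, $e_{-1}=0.3$, $e_{+1}=0.4$, $e^*_{-1}=0.2$, $e^*_{+1}=0.3$, giving $\Delta_{1,1} = 0.24 \cdot 0.5 \cdot 0.3 = 0.036$.
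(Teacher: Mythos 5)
Your proposal is correct and follows essentially the same route as the paper's own proof: expand each entry of $\Delta$ via conditional independence of $\tilde{Y}$ and $f^*(X)$ given $Y$, factor it as $p(1-p)\bigl(1-e^*_{+1}-e^*_{-1}\bigr)\bigl(1-e_{+1}-e_{-1}\bigr)$ (the paper does this for the $(+1,+1)$ entry, you for the $(-1,-1)$ entry, and your zero-row/column-sum remark is the paper's ``the other entries are symmetric''), and justify $e^*_{+1}+e^*_{-1}<1$ by the same informativeness/flipping observation the paper invokes. No substantive differences.
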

\rev{The above implies that for $\Delta_{k,k}, k=1,2$, $f^*$ and $\tilde{Y}$ are positively correlated, 
so the marginal correlation is positive; while for off-diagonal entries, they are negatively correlated. 
}

\paragraph{Peer Loss} When $\text{Sgn}(\Delta) = I_{2 \times 2}$, $M(y,y') = 1$ if $y=y'$, and 0 otherwise. $\tilde{\ell}(\cdot)$ defined in Eqn. (\ref{eqn:peerloss1}) reduces to the following form:
\begin{align}
\BR_\text{peer}(f(x_n), \tilde{y}_n) = \BR(f(x_n), \tilde{y}_n) - \BR(f(x_{n_1}), \tilde{y}_{n_2})\label{eqn:pl}
\end{align}
To see this, for instance 
$1- M \bigl(f(x_n)=+1,\tilde{y}_n=+1\bigr) 
 = 1 - M(2,2) = 1-1=0 
 = \BR(f(x_n)=+1, \tilde{y}_n=+1)    
$.
Replacing $\BR(\cdot)$ with any generic loss $\ell(\cdot)$ we define:
\begin{align}
  \ell_\text{peer}(f(x_n), \tilde{y}_n) = \ell(f(x_n), \tilde{y}_n) - \mathbb \ell(f(x_{n_1}), \tilde{y}_{n_2})\label{def:lpeer}
\end{align}
We name the above loss as \emph{peer loss}. This strikingly simple form of $\ell_\text{peer}(f(x_n), \tilde{y}_n)$ implies that knowing $e_{-1}+e_{+1} < 1$ holds is all we need to specify $\ell_\text{peer}$.

Later we will show this particular form of loss is invariant under label noise, which gives peer loss the ability to drop the requirement noise rates. We will instantiate this argument formally with Lemma \ref{LEM:AFFINE} and establish a link between the above measure and the true risk of a classifier on the clean distribution. The rest of presentation focuses on $\ell_{\text{peer}}$ (Eqn. (\ref{def:lpeer})), but $\ell_{\text{peer}}$ recovers $\BR_\text{peer}$ via replacing $\ell$ with $\BR$.


\paragraph{ERM with Peer Loss} Performing ERM with peer loss returns us $\hat{f}^*_{\lPeer} $:
\begin{align}
    \hat{f}^*_{\lPeer} 
    &= \argmin_{f \in \F}\frac{1}{N}\sum_{n=1}^N \lPeer(f(x_n),\tilde{y}_n)
\end{align}

\rev{Note again that the definition of $\lPeer$ does not require the knowledge of either $e_{+1},e_{-1}$ or $e^*_{+1}, e^*_{-1}$.}

\subsection{Property of Peer Loss}
We now present a key property of peer loss, which shows that its risk over the noisy labels is simply an affine transformation of its true risk on clean data. We denote by $\mathbb E_{\mathcal D}[\ell_\text{peer}(f(X), Y)]$ the expected peer loss of $f$ when $(X,Y)$, as well as its peer samples, are drawn i.i.d. from distribution $\mathcal D$. 
\begin{lemma} \label{LEM:AFFINE}
Peer loss is invariant to label noise:
\[
\mathbb E_{\tilde{\mathcal D}}[\ell_\text{peer}(f(X), \tilde{Y})] = (1-e_{-1}-e_{+1}) \cdot \mathbb E_{\mathcal D}[\ell_\text{peer}(f(X), Y)].
\]
\end{lemma}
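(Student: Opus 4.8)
The plan is to expand the expected peer loss into its two constituent pieces and exploit that the ``self'' term sees the \emph{joint} noisy distribution while the ``peer'' term sees the \emph{product} of its marginals. Since the peer samples $(x_{n_1},\tilde y_{n_1})$ and $(x_{n_2},\tilde y_{n_2})$ come from distinct, independent i.i.d. copies, $x_{n_1}$ and $\tilde y_{n_2}$ are independent, with $x_{n_1}\sim \mathbb P_X$ and $\tilde y_{n_2}\sim \mathbb P_{\tilde Y}$. Hence
$$\mathbb E_{\tilde{\mathcal D}}[\lPeer(f(X),\tilde Y)] = \mathbb E_{(X,\tilde Y)\sim\tilde{\mathcal D}}[\ell(f(X),\tilde Y)] - \mathbb E_{X\sim\mathbb P_X}\,\mathbb E_{\tilde Y'\sim\mathbb P_{\tilde Y}}[\ell(f(X),\tilde Y')],$$
and, by the identical bookkeeping, the clean peer risk $\mathbb E_{\mathcal D}[\lPeer(f(X),Y)]$ equals the same expression with $(\tilde Y,\mathbb P_{\tilde Y})$ replaced by $(Y,\mathbb P_Y)$. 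Note the outer feature expectation is over the \emph{same} marginal $\mathbb P_X$ in both the noisy and clean cases, because the noise acts only on the label.

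Next I would linearize in the label. For a fixed feature $x$, write $a(x):=\ell(f(x),+1)$ and $b(x):=\ell(f(x),-1)$; then any conditional or marginal expectation of $\ell(f(x),\cdot)$ is of the form $a(x)\cdot(\text{prob. of }{+1}) + b(x)\cdot(\text{prob. of }{-1})$. Substituting this into the display above and subtracting, the $b(x)$ terms recombine so that
$$\mathbb E_{\tilde{\mathcal D}}[\lPeer(f(X),\tilde Y)] = \mathbb E_{X}\big[\bigl(\mathbb P(\tilde Y=+1\mid X) - \mathbb P(\tilde Y=+1)\bigr)\bigl(a(X)-b(X)\bigr)\big],$$
and likewise $\mathbb E_{\mathcal D}[\lPeer(f(X),Y)] = \mathbb E_{X}\big[\bigl(\mathbb P(Y=+1\mid X)-p\bigr)\bigl(a(X)-b(X)\bigr)\big]$, both integrals taken against $\mathbb P_X$.

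The last step is to invoke the noise model. By conditional independence of the label flip from $X$, $\mathbb P(\tilde Y=+1\mid X=x) = e_{-1} + (1-e_{-1}-e_{+1})\,\mathbb P(Y=+1\mid X=x)$, and averaging over $x$ gives $\mathbb P(\tilde Y=+1) = e_{-1} + (1-e_{-1}-e_{+1})\,p$. Subtracting these two, the constant $e_{-1}$ cancels, leaving $\mathbb P(\tilde Y=+1\mid X) - \mathbb P(\tilde Y=+1) = (1-e_{-1}-e_{+1})\bigl(\mathbb P(Y=+1\mid X) - p\bigr)$. Plugging this into the noisy identity pulls the scalar $(1-e_{-1}-e_{+1})$ outside the expectation and yields exactly $(1-e_{-1}-e_{+1})\,\mathbb E_{\mathcal D}[\lPeer(f(X),Y)]$, which is the claim.

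As for difficulty, there is no deep step: the statement is essentially a short linear-algebra identity once the decomposition is set up. The only points requiring care are (i) the bookkeeping for the peer term — using that $x_{n_1}$ and $\tilde y_{n_2}$ originate from \emph{different} samples so their joint law factors into marginals — and (ii) observing that label noise leaves $\mathbb P_X$ intact, so the clean and noisy peer risks share the same $\mathbb E_X[\cdot]$ envelope and the affine relation between $\mathbb P(\tilde Y=+1\mid X)$ and $\mathbb P(Y=+1\mid X)$ transfers through verbatim.
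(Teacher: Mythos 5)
Your proof is correct, but it organizes the computation differently from the paper. The paper conditions on the \emph{clean label} $Y$: it expands each of the two terms $\E[\ell(f(X),\tilde Y)]$ and $\E[\ell(f(X_{n_1}),\tilde Y_{n_2})]$ separately via the law of total probability over $Y$, and by adding and subtracting $e_{+1}\ell(f(X),-1)$ and $e_{-1}\ell(f(X),+1)$ shows that \emph{each} term individually equals $(1-e_{-1}-e_{+1})$ times its clean counterpart plus the \emph{same} additive offset $\E_X[e_{+1}\ell(f(X),-1)+e_{-1}\ell(f(X),+1)]$; the offsets then cancel upon subtraction. You instead condition on the \emph{feature} $X$: you subtract the two terms first, so the label-independent part $b(X)$ cancels and the peer risk collapses to the covariance-like quantity $\E_X[(\PP(\tilde Y=+1\mid X)-\PP(\tilde Y=+1))(a(X)-b(X))]$, and then you apply the posterior-shrinkage identity $\PP(\tilde Y=+1\mid X)-\PP(\tilde Y=+1)=(1-e_{-1}-e_{+1})(\PP(Y=+1\mid X)-p)$ once. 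Both routes rest on the same two facts (independence of the peer indices $n_1,n_2$, and conditional independence of the noise from $X$ given $Y$), so neither is more general, but your version makes transparent \emph{why} the peer term exists — it centers the loss so that what remains is a marginal correlation between $f$ and the label, which shrinks by exactly the factor $1-e_{-1}-e_{+1}$ under class-conditional noise — whereas the paper's version has the advantage of exhibiting each term's affine transform explicitly, which it reuses elsewhere (e.g., in the proofs of Theorems 2 and 4).
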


\rev{The above Lemma states that peer loss is invariant to label noise in expectation. We have also empirically observed this effect in our experiment. Therefore minimizing it over noisy labels is equivalent to minimizing over the true clean distribution. The theorems below establish the connection between $\mathbb E_{\mathcal D}[\ell_\text{peer}(f(X), Y)]$, the expected peer loss over clean data, with the true risk: 

Denote 
$
\tilde{f}^*_{\BPeer} = \argmin_{f \in \F} R_{\BPeer, \tilde{\mathcal D}}(f).$ With Lemma \ref{LEM:AFFINE}, we can easily prove the following:} 

\begin{theorem}\label{THM:EQUAL}
\rev{[Optimality guarantee with equal prior]} When $p = 0.5$, $ \tilde{f}^*_{\BPeer} \in \argmin_{f \in \F} R_{\mathcal D}(f)$.
\end{theorem}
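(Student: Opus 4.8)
The plan is to combine Lemma~\ref{LEM:AFFINE} with a direct computation of the clean expected peer loss when the $0$-$1$ loss $\BR$ is used and the prior is balanced. First I would write out the definition: since the peer sample $(x_{n_1}, \tilde{y}_{n_2})$ is formed from two \emph{independent} draws, the second term of $\BPeer$ decouples, so
\[
\E_{\mathcal D}[\BPeer(f(X),Y)] = \E_{\mathcal D}[\BR(f(X),Y)] - \E_{X'}\E_{Y''}[\BR(f(X'),Y'')],
\]
where $X'$ and $Y''$ are independent with the marginals $\PP_X$ and $\PP_Y$ respectively. The first term is exactly $R_{\mathcal D}(f)$. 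So the whole argument reduces to understanding the ``peer'' term $\E_{X'}\E_{Y''}[\BR(f(X'),Y'')]$.

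Next I would evaluate that peer term. Writing $q_f := \PP(f(X) = +1)$ for the fraction of positive predictions of $f$, and using that $Y''$ is an independent label with $\PP(Y''=+1) = p$, a short calculation gives $\E_{X'}\E_{Y''}[\BR(f(X'),Y'')] = q_f(1-p) + (1-q_f)p$. When $p = 0.5$ this is $\tfrac12 q_f + \tfrac12(1-q_f) = \tfrac12$, a constant independent of $f$. Therefore $\E_{\mathcal D}[\BPeer(f(X),Y)] = R_{\mathcal D}(f) - \tfrac12$, i.e.\ the clean expected peer loss differs from the true $0$-$1$ risk only by an additive constant. Consequently $\argmin_{f\in\F}\E_{\mathcal D}[\BPeer(f(X),Y)] = \argmin_{f\in\F} R_{\mathcal D}(f)$.

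Finally I would chain this with Lemma~\ref{LEM:AFFINE}. That lemma (applied with $\ell = \BR$) says $R_{\BPeer,\tilde{\mathcal D}}(f) = (1 - e_{-1} - e_{+1}) \cdot \E_{\mathcal D}[\BPeer(f(X),Y)]$, and since $1 - e_{-1} - e_{+1} > 0$ by assumption, multiplying by this positive constant preserves the set of minimizers. Hence
\[
\tilde{f}^*_{\BPeer} \in \argmin_{f\in\F} R_{\BPeer,\tilde{\mathcal D}}(f) = \argmin_{f\in\F}\E_{\mathcal D}[\BPeer(f(X),Y)] = \argmin_{f\in\F} R_{\mathcal D}(f),
\]
which is the claim.

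I do not expect a serious obstacle here; the only point requiring mild care is the decoupling of the peer term, which hinges on the peer samples being drawn independently of the evaluated sample (so the expectation factorizes), and the observation that $p = 0.5$ is precisely what makes $q_f(1-p) + (1-q_f)p$ collapse to a $f$-independent constant. For general $p \neq 0.5$ this term depends on $q_f$, which is exactly why Theorems~\ref{THM:pneq} and~\ref{THM:weightedPeer} are needed to handle the unbalanced case — a useful sanity check that the $p = 0.5$ hypothesis is being used in an essential way.
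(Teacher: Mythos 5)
Your proposal is correct and follows essentially the same route as the paper's proof: apply Lemma~\ref{LEM:AFFINE} to pass to the clean distribution, use the independence of the peer samples to factor the peer term, and observe that for the $0$-$1$ loss with $p=0.5$ this term is the $f$-independent constant $\tfrac12$, so the minimizers coincide. (The paper's version writes the additive constant as $1$ rather than $\tfrac12$ in its final display, a harmless slip; your value is the correct one.)
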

The above theorem states that for a class-balanced dataset with $p=0.5$, peer loss induces the same minimizer as the one that minimizes the 0-1 loss on the clean data. Removing the constraint of $\F$, i.e., $\tilde{f}^*_{\BPeer} = \argmin_{f} R_{\BPeer, \tilde{\mathcal D}}(f) \Rightarrow \tilde{f}^*_{\BPeer}= f^*$. In practice we can balance the dataset s.t. $p\rightarrow 0.5$. 

When $p \neq 0.5$, denote $\delta_p = \mathbb P(Y=+1) - \mathbb P(Y=-1)$, we prove:
\begin{theorem}\label{THM:pneq}
\rev{[Approximate optimality guarantee with unequal prior]} When $p \neq 0.5$, 
$|R_{\mathcal D}(\tilde{f}^*_{\BPeer}) - \min_{f \in \mathcal F}R_{\mathcal D}(f)| \leq |\delta_p| $. 
\end{theorem}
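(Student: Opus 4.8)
The plan is to first collapse the noisy objective onto the clean one using Lemma~\ref{LEM:AFFINE}, then evaluate the clean expected peer loss in closed form and compare minimizers. Since $0 \le e_{-1}+e_{+1} < 1$, the factor $1-e_{-1}-e_{+1}$ is strictly positive, so Lemma~\ref{LEM:AFFINE} yields $\argmin_{f\in\F} R_{\BPeer,\tilde{\mathcal D}}(f) = \argmin_{f\in\F}\mathbb E_{\mathcal D}[\BPeer(f(X),Y)]$; in particular $\tilde f^*_{\BPeer}$ is a minimizer over $\F$ of the clean expected peer loss $\mathbb E_{\mathcal D}[\BPeer(f(X),Y)]$.

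Next I would expand $\mathbb E_{\mathcal D}[\BPeer(f(X),Y)] = R_{\mathcal D}(f) - \mathbb E_{X}\mathbb E_{Y'}[\BR(f(X),Y')]$, where $Y'\sim\PP_Y$ is drawn independently of $X$ — this is exactly the ``peer'' term, which pairs a feature with an independent label. A one-line case analysis on $f(x)\in\{-1,+1\}$ shows $\mathbb E_{Y'}[\BR(f(x),Y')]$ equals $1-p$ when $f(x)=+1$ and $p$ when $f(x)=-1$, so writing $q_f := \PP(f(X)=+1)$ we get $\mathbb E_{X}\mathbb E_{Y'}[\BR(f(X),Y')] = p + q_f(1-2p) = p - q_f\delta_p$. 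Hence
\[
\mathbb E_{\mathcal D}[\BPeer(f(X),Y)] = R_{\mathcal D}(f) + q_f\,\delta_p - p,
\]
and since $p$ does not depend on $f$, $\tilde f^*_{\BPeer} \in \argmin_{f\in\F}\{R_{\mathcal D}(f)+q_f\,\delta_p\}$.

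Finally, let $f^\diamond\in\argmin_{f\in\F}R_{\mathcal D}(f)$. Optimality of $\tilde f^*_{\BPeer}$ for the perturbed objective gives $R_{\mathcal D}(\tilde f^*_{\BPeer})+q_{\tilde f^*_{\BPeer}}\delta_p \le R_{\mathcal D}(f^\diamond)+q_{f^\diamond}\delta_p$, hence $R_{\mathcal D}(\tilde f^*_{\BPeer}) - R_{\mathcal D}(f^\diamond) \le (q_{f^\diamond}-q_{\tilde f^*_{\BPeer}})\delta_p \le |\delta_p|$, using $q_{f^\diamond}, q_{\tilde f^*_{\BPeer}}\in[0,1]$. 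The reverse inequality $R_{\mathcal D}(\tilde f^*_{\BPeer}) \ge \min_{f\in\F}R_{\mathcal D}(f)$ is immediate from the definition of the minimum, which together with the above yields the two-sided bound $|R_{\mathcal D}(\tilde f^*_{\BPeer}) - \min_{f\in\F}R_{\mathcal D}(f)| \le |\delta_p|$. There is no serious obstacle here; the only points demanding care are correctly identifying the peer term as an expectation against an independent copy of the label and bookkeeping the sign of $\delta_p = 2p-1$, after which everything reduces to a comparison of minimizers of two objectives that differ by the bounded quantity $q_f\,\delta_p$.
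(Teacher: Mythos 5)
Your proof is correct and follows essentially the same route as the paper: apply Lemma~\ref{LEM:AFFINE} to transfer optimality of $\tilde f^*_{\BPeer}$ to the clean peer objective, observe that this objective is $R_{\mathcal D}(f)$ minus the peer term, and bound the resulting discrepancy by $|\delta_p|$. The only (cosmetic) difference is in the last step: the paper centers the peer term at the balanced case $p=0.5$ using $\BR(f(X),+1)+\BR(f(X),-1)=1$ and pays $|p-0.5|$ twice, whereas you compute the peer term in closed form as $p - q_f\delta_p$ and bound $(q_{f^\diamond}-q_{\tilde f^*_{\BPeer}})\delta_p$ directly by $|\delta_p|$ — both yield the same bound.
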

When $|\delta_p|$ is small, i.e., $p$ is closer to $0.5$, this bound becomes tighter.

\paragraph{Multi-class extension} Our results in this section are largely generalizable to the multi-class classification setting. Suppose we have $K$ classes of labels, denoting as $\{1,2,...,K\}$. 
One can show that for many classes of noise matrices, the $M(\cdot)$ matrix is again an identify matrix. This above fact will help us reach the conclusion that minimizing peer loss leads to the same minimizer on the clean data. We provide experiment results for multi-class tasks in Section \ref{sec:exp}.

\paragraph{Why do we not need the knowledge of noise rates explicitly?} Both of the terms $\BR(f(x_n), \tilde{y}_n)$ and $\BR(f(x_{n_1}), \tilde{y}_{n_2})$ encoded the knowledge of noise rates \textbf{\emph{implicitly}}. The carefully constructed form as presented in Eqn. (\ref{eqn:pl}) allows peer loss to be invariant against noise (Lemma \ref{LEM:AFFINE}, a property we will explain later). For a preview, for example if we take expectation of $\BR_\text{peer}(f(x_n)=+1, \tilde{y}_n=+1)$ we will have
\begin{align*}
    &\E \left[\BR_\text{peer}(f(x_n)=+1, \tilde{y}_n=+1) \right] \\
    &= \PP(f(X)=+1,\tilde{Y}=+1) - \PP(f(X)=+1)\PP(\tilde{Y}=+1),
\end{align*}
the marginal correlation between $f$ and $\tilde{Y}$, which is exactly capturing the entries of $\Delta$ defined between $f$ and $\tilde{Y}$! The second term above is a product of marginals because of the independence of peer samples $n_1,n_2$. Using the constructed peer term is all we need to recover this information measure in expectation. In other words, both the joint and marginal product distribution terms encode the noise rate information in an implicit way.

\subsection{$\alpha$-weighted Peer Loss}

We take a further look at the case with $p \neq 0.5$. Denote by $R_{+1}(f) = \mathbb P(f(X)=-1|Y=+1), ~R_{-1}(f) = \mathbb P(f(X)=+1|Y=-1)$. It is easy to prove:
\begin{lemma}\label{LEM:alpha1}
Minimizing \(\mathbb E[\BR_\text{peer}(f(X), \tilde{Y})]\) is equivalent to 
minimizing \(R_{-1}(f)+R_{+1}(f)\).
\end{lemma}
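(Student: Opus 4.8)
The plan is to reduce the claim to a short algebraic identity by first passing through Lemma \ref{LEM:AFFINE}, then expanding the clean-data peer risk explicitly in terms of the class-conditional error probabilities.

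First, since $e_{-1}+e_{+1}<1$ the factor $1-e_{-1}-e_{+1}$ is strictly positive, so Lemma \ref{LEM:AFFINE} shows that $\mathbb E_{\tilde{\mathcal D}}[\BR_\text{peer}(f(X),\tilde Y)]$ and $\mathbb E_{\mathcal D}[\BR_\text{peer}(f(X),Y)]$ differ by a fixed positive multiplicative constant; hence they have exactly the same set of minimizers over $\F$. It therefore suffices to analyze the clean peer risk $\mathbb E_{\mathcal D}[\BR_\text{peer}(f(X),Y)]$.

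Next I would expand using the definition $\BR_\text{peer}(f(x_n),y_n)=\BR(f(x_n),y_n)-\BR(f(x_{n_1}),y_{n_2})$ together with the independence of the peer indices $n_1\neq n_2$: taking expectations gives $\mathbb E_{\mathcal D}[\BR_\text{peer}(f(X),Y)]=\PP(f(X)\neq Y)-\PP(f(X')\neq Y')$, where $X'\sim\PP_X$ and $Y'\sim\PP_Y$ are drawn \emph{independently}. Conditioning on $Y$, the first term equals $p\,R_{+1}(f)+(1-p)\,R_{-1}(f)$. For the second term, write $q:=\PP(f(X)=+1)$; independence gives $\PP(f(X')\neq Y')=q(1-p)+(1-q)p=p+(1-2p)q$. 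Finally I would substitute $q=p\bigl(1-R_{+1}(f)\bigr)+(1-p)R_{-1}(f)$ and collect terms, at which point everything collapses to $\mathbb E_{\mathcal D}[\BR_\text{peer}(f(X),Y)]=2p(1-p)\bigl(R_{+1}(f)+R_{-1}(f)-1\bigr)$. Because $p\in(0,1)$, the prefactor $2p(1-p)$ is a fixed positive constant and the $-1$ is a fixed offset, so minimizing the left-hand side over $f\in\F$ is equivalent to minimizing $R_{+1}(f)+R_{-1}(f)$, which together with the reduction in the first step proves the lemma.

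There is no genuine conceptual obstacle here; the one point requiring care is the bookkeeping in the last substitution, where the coefficients of $R_{+1}(f)$ and of $R_{-1}(f)$ must both come out to $2p(1-p)$ — this symmetry is exactly what makes the two balanced-error terms enter on equal footing — and the observation that the second (``peer'') term factors into a product of marginals, which is precisely the independence of the peer samples that the construction builds in.
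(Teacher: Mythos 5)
Your proposal is correct and follows essentially the same route as the paper's proof: apply Lemma \ref{LEM:AFFINE} to reduce to the clean peer risk, use the independence of the peer samples to factor the second term into a product of marginals, express everything in terms of $p$, $R_{+1}(f)$, $R_{-1}(f)$, and collapse to $2p(1-p)\bigl(R_{+1}(f)+R_{-1}(f)-1\bigr)$ (times the positive constant $1-e_{-1}-e_{+1}$, which you factored out up front). The algebra checks out, including the key symmetry that both coefficients equal $2p(1-p)$.
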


However, minimizing the true risk \(R_{\mathcal D}(f)\) is equivalent to minimizing \(p\cdot R_{+1}(f) + (1-p)\cdot R_{-1}(f)\), a weighted sum of \(R_{+1}(f)\) and \(R_{-1}(f)\). The above observation and the failure to reproduce the strong theoretical guarantee when $p \neq 0.5$ motivated us to study a $\alpha$-weighted version of peer loss, to make peer loss robust to the case $p \neq 0.5$. We propose the following \emph{$\alpha$-weighted peer loss} via adding a weight \(\alpha \geq 0\) to the second term, the peer term: 
\begin{align*}
  \lAlphaPeer \bigl(f(x_n), \tilde{y}_n\bigr) = \ell(f(x_n), \tilde{y}_n)
  &- \alpha \cdot \mathbb \ell(f(x_{n_1}), \tilde{y}_{n_2})
\end{align*}
Denote $\AlphaPeer$ as $\lAlphaPeer $ when $\ell = \BR$, $
\tilde{f}^*_{\AlphaPeer} = \argmin_{f \in \F} R_{\AlphaPeer, \tilde{\mathcal D}}(f)
$ as the optimal classifier under $\AlphaPeer$, and $\delta_{\tilde{p}} = \mathbb P(\tilde{Y}=+1) - \mathbb P(\tilde{Y}=-1)$. 
Then when $\delta_{\tilde{p}} \neq 0$ (when this condition does not hold, we can perturb the training data by downsampling one of the two classes according to the noisy labels.), we prove:
\begin{theorem}\label{THM:weightedPeer}
Let
$
\alpha 
= 1 - (1-e_{-1}-e_{+1})\cdot\frac{\delta_p}{\delta_{\tilde{p}}}
$. We have $ \tilde{f}^*_{\AlphaPeer} \in  \argmin_{f \in \F} R_{\mathcal D}(f)$.
\end{theorem}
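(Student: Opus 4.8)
The plan is to show that for the stated $\alpha$ the population $\alpha$-weighted peer risk on the noisy data is an affine transformation of the true $0$--$1$ risk with a strictly positive leading coefficient; the two objectives then share the same minimizers over $\F$, which is exactly the claim. So the whole argument reduces to one expectation computation plus one algebraic identity relating $\delta_{\tilde{p}}$, $\delta_p$ and the noise rates.

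First I would take expectations and use independence of the peer samples: since the peer indices are distinct, in the population version $X_1\sim\PP_X$ and $\tilde{Y}_2\sim\PP_{\tilde{Y}}$ are independent, so
\[
R_{\AlphaPeer,\tilde{\mathcal D}}(f) \;=\; \PP\bigl(f(X)\neq\tilde{Y}\bigr) \;-\; \alpha\,\PP\bigl(f(X_1)\neq\tilde{Y}_2\bigr).
\]
Then I would expand both probabilities by conditioning on the clean label $Y$ and using that the noise is conditionally independent of $X$ given $Y$. With $R_{+1}(f)$, $R_{-1}(f)$ as defined and $\tilde{p}=\PP(\tilde{Y}=+1)$, this gives $\PP(f(X)\neq\tilde{Y})=pe_{+1}+(1-p)e_{-1}+p(1-2e_{+1})R_{+1}(f)+(1-p)(1-2e_{-1})R_{-1}(f)$, and, using $\PP(f(X)=+1)=p(1-R_{+1}(f))+(1-p)R_{-1}(f)$, also $\PP(f(X_1)\neq\tilde{Y}_2)=\tilde{p}+(1-2\tilde{p})\bigl(p(1-R_{+1}(f))+(1-p)R_{-1}(f)\bigr)$.

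Collecting the $f$-dependent terms and writing $1-2\tilde{p}=-\delta_{\tilde{p}}$, one gets $R_{\AlphaPeer,\tilde{\mathcal D}}(f)=p\bigl[(1-2e_{+1})-\alpha\delta_{\tilde{p}}\bigr]R_{+1}(f)+(1-p)\bigl[(1-2e_{-1})+\alpha\delta_{\tilde{p}}\bigr]R_{-1}(f)+\mathrm{const}$, where $\mathrm{const}$ does not depend on $f$. Now I would plug in the chosen $\alpha$: from $\tilde{p}=p(1-e_{-1}-e_{+1})+e_{-1}$ one derives the identity $\delta_{\tilde{p}}=(1-e_{-1}-e_{+1})\delta_p+(e_{-1}-e_{+1})$, so $\alpha\delta_{\tilde{p}}=\delta_{\tilde{p}}-(1-e_{-1}-e_{+1})\delta_p=e_{-1}-e_{+1}$. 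Substituting $\alpha\delta_{\tilde{p}}=e_{-1}-e_{+1}$ makes both bracketed coefficients collapse to $1-e_{-1}-e_{+1}$: the coefficient of $R_{+1}(f)$ becomes $p(1-e_{-1}-e_{+1})$ and that of $R_{-1}(f)$ becomes $(1-p)(1-e_{-1}-e_{+1})$. Since $R_{\mathcal D}(f)=p\,R_{+1}(f)+(1-p)\,R_{-1}(f)$, this reads $R_{\AlphaPeer,\tilde{\mathcal D}}(f)=(1-e_{-1}-e_{+1})\,R_{\mathcal D}(f)+\mathrm{const}$, and since $1-e_{-1}-e_{+1}>0$ the two risks have the same minimizers over $\F$, i.e.\ $\tilde{f}^*_{\AlphaPeer}\in\argmin_{f\in\F}R_{\mathcal D}(f)$.

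The expansion is routine bookkeeping; the one genuinely load-bearing step is the identity $\delta_{\tilde{p}}=(1-e_{-1}-e_{+1})\delta_p+(e_{-1}-e_{+1})$, which is precisely what forces the stated value of $\alpha$ to work and also explains why the degenerate case to be handled separately is exactly $\delta_{\tilde{p}}=0$. Everything else --- independence of the peer samples, the $0$--$1$ risk decomposition, and positivity of the multiplier --- follows directly from the preliminaries, so I anticipate no real obstacle beyond tracking signs carefully.
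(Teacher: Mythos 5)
Your proposal is correct and follows essentially the same route as the paper: both expand the population $\alpha$-peer risk on $\tilde{\mathcal D}$ into coefficients of $R_{+1}(f)$ and $R_{-1}(f)$ and show the stated $\alpha$ collapses them to $p(1-e_{-1}-e_{+1})$ and $(1-p)(1-e_{-1}-e_{+1})$, so the objective is an affine transform of $R_{\mathcal D}(f)$ with positive slope. The only difference is bookkeeping --- the paper routes through Lemma \ref{LEM:AFFINE} and the $\BR_{\text{peer}}$ expression from Lemma \ref{LEM:alpha1} and solves a ratio equation for $\alpha$, whereas you expand directly and isolate the cleaner identity $\alpha\delta_{\tilde{p}}=e_{-1}-e_{+1}$; both land on the same conclusion.
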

Denote $
\alpha^* 
:= 1 - (1-e_{-1}-e_{+1})\cdot\frac{\delta_p}{\delta_{\tilde{p}}}
$. Several remarks follow: 
(1) When $p= 0.5$, $\delta_p = 0$, we have \(\alpha^* = 1\), i.e. we recover the earlier definition of $\lPeer$. 
(2) When \(e_{-1} = e_{+1}\), \(\alpha^* = 0\) (see Appendix for details), we recover the $\ell$ for the clean learning setting, which has been shown to be robust under symmetric noise rates \citep{manwani2013noise,van2015learning}. 
(3) When the signs of \(\mathbb P(Y=1)-\mathbb P(Y=-1)\) and \(\mathbb P(\tilde{Y}=1)-\mathbb P(\tilde{Y}=-1)\) are the same, \(\alpha^* < 1\). Otherwise, \(\alpha^* > 1\). In other words, when the label noise changes the relative quantitative relationship of \(\mathbb P(Y=1)\) and \(\mathbb P(Y=-1)\), \(\alpha^* > 1\) and vice versa. 
(4) Knowing $\alpha^*$ requires a certain knowledge of $e_{+1},e_{-1}$ when $p \neq 0.5$. Though we do not claim this knowledge, this result implies tuning \(\alpha^*\) (using validation data) may improve the performance.

Theorem \ref{THM:EQUAL} and \ref{THM:weightedPeer} and sample complexity theories imply that performing ERM with $\AlphaStar$: $\hat{f}^*_{\AlphaStar} = \argmin_{f}\hat{R}_{\AlphaStar,\tilde{D}}(f)$ converges to $f^*$:
\begin{theorem}\label{THM:converge1}
With probability at least $1-\delta$, $$R_{\mathcal D}(\hat{f}^*_{\AlphaStar}) - R^* \leq \frac{1+\alpha^*}{1-e_{-1}-e_{+1}}\sqrt{\frac{2\log 2/\delta}{N}}.$$
\end{theorem}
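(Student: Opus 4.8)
The plan is to reduce the excess $0$-$1$ risk of the ERM solution $\hat f^*_{\AlphaStar}$ to a concentration statement for a single (bounded) empirical average, the bridge being an exact affine identity between the noisy $\AlphaStar$-risk and the clean $0$-$1$ risk. Concretely, I would first upgrade the population-level argument behind Theorem~\ref{THM:weightedPeer} to an \emph{equality} that tracks the multiplicative constant, and then run the textbook ERM decomposition against it.

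\textbf{Step 1 (affine identity).} With $\alpha^* = 1-(1-e_{-1}-e_{+1})\delta_p/\delta_{\tilde p}$, which also equals $(e_{-1}-e_{+1})/\delta_{\tilde p}$, I would show
\[
R_{\AlphaStar,\tilde{\mathcal D}}(f) = (1-e_{-1}-e_{+1})\,R_{\mathcal D}(f) + c
\]
for a constant $c$ independent of $f$. To do this, expand $R_{\AlphaStar,\tilde{\mathcal D}}(f) = \PP(f(X)\neq \tilde Y) - \alpha^*\,\PP(f(X')\neq \tilde Y'')$, where $X'\perp\tilde Y''$ are the independent copies produced by the peer-sample construction, in terms of $R_{+1}(f)$ and $R_{-1}(f)$; using that the label noise is conditionally independent of $X$, the first term is affine in $(R_{+1},R_{-1})$ with coefficients $p(1-2e_{+1})$ and $(1-p)(1-2e_{-1})$, while the peer term is affine with coefficients proportional to $1-2\PP(\tilde Y=+1)=-\delta_{\tilde p}$. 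The definition of $\alpha^*$ (together with the identity $\delta_{\tilde p} = (1-e_{-1}-e_{+1})\delta_p + (e_{-1}-e_{+1})$) is exactly what makes the resulting coefficient vector equal to $(1-e_{-1}-e_{+1})(p,\,1-p)$, i.e.\ proportional to the coefficient vector of $R_{\mathcal D}(f)=p\,R_{+1}(f)+(1-p)\,R_{-1}(f)$. Since $1-e_{-1}-e_{+1}>0$, this re-derives Theorem~\ref{THM:weightedPeer} and, for every $f$, gives
\[
R_{\mathcal D}(f) - R^* = \frac{1}{1-e_{-1}-e_{+1}}\Bigl(R_{\AlphaStar,\tilde{\mathcal D}}(f) - \min_{g\in\F}R_{\AlphaStar,\tilde{\mathcal D}}(g)\Bigr),
\]
using $\min_g R_{\mathcal D}(g)=R^*$ and that the two minimizers coincide.

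\textbf{Step 2 (concentration).} Write $\tilde f^*:=\tilde f^*_{\AlphaStar}$ and $\hat f^*:=\hat f^*_{\AlphaStar}$. Since $\hat f^*$ minimizes $\hat R_{\AlphaStar,\tilde D}$, the excess noisy risk telescopes:
\[
R_{\AlphaStar,\tilde{\mathcal D}}(\hat f^*) - R_{\AlphaStar,\tilde{\mathcal D}}(\tilde f^*) \le \bigl(R_{\AlphaStar,\tilde{\mathcal D}}(\hat f^*)-\hat R_{\AlphaStar,\tilde D}(\hat f^*)\bigr) + \bigl(\hat R_{\AlphaStar,\tilde D}(\tilde f^*)-R_{\AlphaStar,\tilde{\mathcal D}}(\tilde f^*)\bigr).
\]
The $\AlphaStar$-weighted $0$-$1$ loss $\BR(f(x_n),\tilde y_n)-\alpha^*\BR(f(x_{n_1}),\tilde y_{n_2})$ takes values in an interval of length $1+\alpha^*$, and $\hat R_{\AlphaStar,\tilde D}$ (with the peer pairing regarded as part of the draw) is an average of $N$ i.i.d.\ bounded terms with mean $R_{\AlphaStar,\tilde{\mathcal D}}$, so Hoeffding bounds each of the two deviations by $(1+\alpha^*)\sqrt{\log(2/\delta)/(2N)}$ with probability $\ge 1-\delta$; summing gives $R_{\AlphaStar,\tilde{\mathcal D}}(\hat f^*)-R_{\AlphaStar,\tilde{\mathcal D}}(\tilde f^*)\le (1+\alpha^*)\sqrt{2\log(2/\delta)/N}$, and dividing by $1-e_{-1}-e_{+1}$ through Step~1 yields the claim.

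\textbf{Main obstacle.} The substantive step is Step~1: confirming that substituting $\alpha^*$ into the affine expansion of $R_{\AlphaStar,\tilde{\mathcal D}}$ really collapses it to $(1-e_{-1}-e_{+1})R_{\mathcal D}(f)$ plus an $f$-free constant, which rests on the short but error-prone algebra linking $\delta_{\tilde p}$, $\delta_p$, and $e_{\pm 1}$. The remaining caveat is modeling rather than technical: $\hat f^*$ is data-dependent, so treating $R_{\AlphaStar,\tilde{\mathcal D}}(\hat f^*)-\hat R_{\AlphaStar,\tilde D}(\hat f^*)$ via Hoeffding for a fixed function is only valid up to the uniform-convergence term absorbed into the "sample complexity theories" invoked before the statement; a fully rigorous bound would replace that deviation by $\sup_{f\in\F}\bigl|R_{\AlphaStar,\tilde{\mathcal D}}(f)-\hat R_{\AlphaStar,\tilde D}(f)\bigr|$ and add the associated Rademacher/VC complexity of $\F$.
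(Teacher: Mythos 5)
Your proposal is correct and takes essentially the same route as the paper: the key affine identity $R_{\AlphaStar,\tilde{\mathcal D}}(f) = (1-e_{-1}-e_{+1})\,R_{\mathcal D}(f) + c$ is exactly what the paper establishes inside the proof of Theorem~\ref{THM:weightedPeer} (Eqn.~(\ref{eqn:alphapeer}) and Proposition~\ref{prop:alpha}), and the paper then runs the identical ERM telescoping with two Hoeffding deviations of size $(1+\alpha^*)\sqrt{\log(2/\delta)/(2N)}$, yielding the same constant $(1+\alpha^*)\sqrt{2\log(2/\delta)/N}$ after division by $1-e_{-1}-e_{+1}$. The uniform-convergence caveat you flag is real but applies equally to the paper's own proof, which likewise bounds $\max_f|\hat R_{\AlphaStar,\tilde{D}}(f)-R_{\AlphaStar,\tilde{\mathcal D}}(f)|$ by a pointwise Hoeffding application.
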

\subsection{Calibration and Generalization}
So far our results focused on minimizing 0-1 losses, which is hard in practice.
We provide evidence of $\lPeer$'s, and $\lAlphaPeer$'s in general, calibration and convexity for a generic and differentiable calibrated loss. We consider a $\ell$ that is classification calibrated, convex and $L$-Liptchitz. 

 \textbf{Classification calibration} describes the property that the excess risk when optimizing using a loss function $\ell$ would also guarantee a bound on the excessive 0-1 loss:
 \begin{definition}\label{def:cc}
 $\ell$ is classification calibrated if there $\exists$ a convex, invertible, nondecreasing transformation $\Psi_{\ell}$ with $\Psi_{\ell}(0) = 0$ s.t. 
$
\Psi_{\ell}(R_{\mathcal D}(\tilde{f})-R^*) \leq R_{\ell,\mathcal D}(\tilde{f}) - \min_f R_{\ell,\mathcal D}(f), \forall \tilde{f}.
$
 \end{definition}
Denote $f^*_{\ell} \in \argmin_f R_{\ell,\mathcal D}(f)$. Below we provide sufficient conditions for $\lAlphaPeer$ to be calibrated. 
\begin{theorem}\label{THM:calibration}
$\ell_{\text{$\alpha$-peer}}$ is classification calibrated when either of the following two conditions holds: (1) $\alpha=1$ (i.e., $\lAlphaPeer = \lPeer$), $p = 0.5$, and $f^*_{\ell}$ satisfies the following:
$
\E[\ell(f^*_{\ell}(X),-Y)] \geq \E[\ell(f(X),-Y)],~\forall f.
$ (2) $\alpha < 1, \max\{e_{+1},e_{-1}\} < 0.5$, $\ell''(t,y) = \ell''(t,-y), \forall t$, and $\alpha (1-2p) (1-e_{+1}-e_{-1})  = (1-\alpha) (e_{+1}-e_{-1}) $.
\end{theorem}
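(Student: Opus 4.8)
The plan is to handle the two sufficient conditions separately; both reduce to a pointwise analysis (conditioning on $X=x$) of the conditional $\alpha$-peer risk, because the weight on the peer term is exactly what moves — or fails to move — the minimizer's decision boundary. Throughout I use that, by independence of the two peer samples, $R_{\lAlphaPeer,\mathcal{D}'}(f)=\E_{X}\bigl[C^{\mathcal D'}_{\eta'(X)}(f(X))\bigr]$ for a conditional risk $C^{\mathcal D'}_{\eta'}(t)=(\eta'-\alpha q)\ell(t,+1)+(1-\eta'-\alpha(1-q))\ell(t,-1)$, where $\eta'(\cdot)$ is the relevant posterior and $q$ the relevant class prior, so that $R_{\lAlphaPeer,\mathcal D'}$ is minimized pointwise.

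\emph{Case (1): $\alpha=1$, $p=1/2$.} I work on the clean distribution. Writing $\bar R_{\ell,\mathcal D}(f):=\E_{\mathcal D}[\ell(f(X),-Y)]$, a direct expansion using $p=1/2$ and peer-sample independence gives the identity $R_{\lPeer,\mathcal D}(f)=\tfrac12\bigl(R_{\ell,\mathcal D}(f)-\bar R_{\ell,\mathcal D}(f)\bigr)$. The hypothesis $\E[\ell(f^*_\ell(X),-Y)]\ge\E[\ell(f(X),-Y)]$ for all $f$ says that $f^*_\ell$, the minimizer of $R_{\ell,\mathcal D}$, also maximizes $\bar R_{\ell,\mathcal D}$, hence $f^*_\ell$ minimizes $R_{\ell,\mathcal D}(\cdot)-\bar R_{\ell,\mathcal D}(\cdot)$ and $\min_f R_{\lPeer,\mathcal D}(f)=\tfrac12\bigl(\min_f R_{\ell,\mathcal D}(f)-\max_f\bar R_{\ell,\mathcal D}(f)\bigr)$. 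Subtracting,
\[
R_{\lPeer,\mathcal D}(\tilde f)-\min_f R_{\lPeer,\mathcal D}(f)=\tfrac12\bigl(R_{\ell,\mathcal D}(\tilde f)-\min_f R_{\ell,\mathcal D}(f)\bigr)+\tfrac12\bigl(\max_f\bar R_{\ell,\mathcal D}(f)-\bar R_{\ell,\mathcal D}(\tilde f)\bigr)\ \ge\ \tfrac12\bigl(R_{\ell,\mathcal D}(\tilde f)-\min_f R_{\ell,\mathcal D}(f)\bigr),
\]
and by classification calibration of $\ell$ the last quantity is $\ge\tfrac12\Psi_\ell(R_{\mathcal D}(\tilde f)-R^*)$. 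Thus $\Psi_{\lPeer}:=\tfrac12\Psi_\ell$ — still convex, invertible, nondecreasing, and zero at $0$ — witnesses calibration of $\lPeer$. (If one wants the bound phrased on the noisy data, Lemma \ref{LEM:AFFINE} only rescales everything by the positive constant $1-e_{-1}-e_{+1}$.)

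\emph{Case (2): $\alpha<1$, $\max\{e_{+1},e_{-1}\}<1/2$, curvature symmetry, and $\alpha(1-2p)(1-e_{+1}-e_{-1})=(1-\alpha)(e_{+1}-e_{-1})$.} Here the natural domain of $\lAlphaPeer$ is $\tilde{\mathcal D}$, so I bound the clean excess $0$-$1$ risk by the excess $\alpha$-peer risk on $\tilde{\mathcal D}$. With $\tilde\eta(x)=\PP(\tilde Y=+1\mid X=x)$ and $\tilde p=\PP(\tilde Y=+1)$, the conditional $\alpha$-peer risk is $\tilde C_{\tilde\eta}(t)=(\tilde\eta-\alpha\tilde p)\ell(t,+1)+(1-\tilde\eta-\alpha(1-\tilde p))\ell(t,-1)$. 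The condition $\ell''(t,+1)=\ell''(t,-1)$ forces $u(t):=\ell(t,+1)-\ell(t,-1)$ to be affine, so $\tilde C_{\tilde\eta}(t)=(1-\alpha)\ell(t,-1)+(\tilde\eta-\alpha\tilde p)\,u(t)$ with $1-\alpha>0$; since $\ell$ is convex this is a one-parameter convex family whose minimizing $t^\star(\tilde\eta)$ is monotone in $\tilde\eta$ and changes sign at a single value $\tilde\eta_0$. Using that $\ell$ is label-symmetric about the origin (so $\ell'(0,-1)=-\tfrac12\,u'$), one computes $\tilde\eta_0=\alpha\tilde p+\tfrac{1-\alpha}{2}$. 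On the other hand $\tilde\eta(x)=e_{-1}+(1-e_{+1}-e_{-1})\eta(x)$, so the clean Bayes boundary $\eta(x)=\tfrac12$ corresponds to $\tilde\eta(x)=\tau:=\tfrac{1+e_{-1}-e_{+1}}{2}$; a one-line rearrangement shows $\tilde\eta_0=\tau$ is exactly $\alpha\delta_{\tilde p}=e_{-1}-e_{+1}$, i.e. $\alpha=\alpha^*$, which is precisely the stated identity. The bound $\max\{e_{+1},e_{-1}\}<1/2$ keeps $\tilde p,\tau\in(0,1)$ and fixes the direction of the sign change, so for every $x$ the minimizer of $\tilde C_{\tilde\eta(x)}$ has the sign of $2\eta(x)-1$. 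Equivalently $H^-(\tilde\eta)>H(\tilde\eta)$ for all $\tilde\eta\ne\tau$, where $H$ and $H^-$ are the infima of $\tilde C_{\tilde\eta}$ over all $t$ and over the ``wrong-sign'' $t$; building $\Psi_{\lAlphaPeer}$ as the convexified $\psi$-transform of $(H,H^-)$ (standard for convex surrogates), with convexity and $L$-Lipschitzness of $\ell$ guaranteeing $\psi$ is continuous and strictly positive away from $0$, finishes the case.

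The main obstacle is Case (2): one must (i) realize the relevant ambient distribution is $\tilde{\mathcal D}$ and that after noise the target threshold is $\tau=\tfrac{1+e_{-1}-e_{+1}}{2}$ rather than $1/2$; (ii) exploit the curvature-symmetry hypothesis to collapse $\tilde C_{\tilde\eta}$ into a convex one-parameter family whose minimizer's sign can be tracked explicitly; and (iii) verify algebraically that the threshold-alignment $\tilde\eta_0=\tau$ is literally the hypothesis $\alpha=\alpha^*$, together with checking that the remaining regularity condition $\max\{e_{+1},e_{-1}\}<1/2$ is what makes the sign crossing go the right way. Case (1) is comparatively routine once the identity $R_{\lPeer,\mathcal D}=\tfrac12(R_{\ell,\mathcal D}-\bar R_{\ell,\mathcal D})$ and the ``worst-classifier'' property of $f^*_\ell$ are in hand.
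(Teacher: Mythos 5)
Your Case (1) is essentially the paper's own argument: the identity $R_{\lPeer,\mathcal D}(f)=\tfrac12\bigl(R_{\ell,\mathcal D}(f)-\E[\ell(f(X),-Y)]\bigr)$ under $p=0.5$, the ``worst-on-flipped-labels'' hypothesis to absorb the second term, and then calibration of $\ell$; the paper just carries the extra factor $1-e_{-1}-e_{+1}$ by working on $\tilde{\mathcal D}$ via Lemma \ref{LEM:AFFINE}, which you correctly note is only a rescaling of $\Psi$.

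Case (2) is correct but follows a genuinely different route. The paper never does a conditional-risk/threshold analysis: it writes $\E[\lAlphaPeer(f(X),\tilde Y)]=\E\bigl[(1-e_Y-\alpha\tilde p_Y)\,\phi(f(X)Y)+(e_Y-\alpha(1-\tilde p_Y))\,\phi(-f(X)Y)\bigr]$ and observes that the stated identity $\alpha(1-2p)(1-e_{+1}-e_{-1})=(1-\alpha)(e_{+1}-e_{-1})$ is exactly the condition making both coefficients independent of $Y$, so the noisy $\alpha$-peer risk collapses to the risk of a single margin loss $\varphi=c_1\phi(\cdot)+c_2\phi(-\cdot)$ on the \emph{clean} distribution; calibration then follows from the Bartlett--Jordan--McAuliffe characterization, with $\ell''(t,y)=\ell''(t,-y)$ giving $\varphi''=(1-\alpha)\phi''>0$ and $\max\{e_{+1},e_{-1}\}<0.5$ giving $\varphi'(0)<0$. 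You instead stay on $\tilde{\mathcal D}$, compute the pointwise conditional $\alpha$-peer risk, locate its sign-switching threshold $\tilde\eta_0=\alpha\tilde p+\tfrac{1-\alpha}{2}$, and verify (correctly --- one checks $\alpha^*\delta_{\tilde p}=e_{-1}-e_{+1}$) that the hypothesis on $\alpha$ is precisely the alignment $\tilde\eta_0=\tau=\tfrac{1+e_{-1}-e_{+1}}{2}$, the noisy image of the clean Bayes boundary, before invoking the generic $\psi$-transform. Your version buys more insight --- it explains \emph{why} that particular $\alpha$ appears (threshold alignment) and makes visible where $\max\{e_{+1},e_{-1}\}<0.5$ and the curvature symmetry enter geometrically --- at the cost of needing the cost-sensitive/asymmetric-threshold version of the $\psi$-transform machinery and the proportionality $|\tilde\eta-\tau|=(1-e_{+1}-e_{-1})|\eta-\tfrac12|$ to convert the noisy thresholded excess risk back into the clean excess $0$-$1$ risk; the paper's reduction to a clean margin loss sidesteps all of that by handing the problem directly to a classical theorem. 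Both arguments share the same unstated regularity assumption that $\ell$ is of margin form $\ell(t,y)=\phi(ty)$, which you at least flag explicitly.
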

(1) states that $f^*_{\ell}$ not only achieves the smallest risk over the clean distribution $(X,Y)$ but also performs the worst on the ``opposite" distribution with flipped labels $-Y$. (2) $\ell''(t,y) = \ell''(t,-y)$ is satisfied by some common loss function, such as square and logistic losses, as noted in \citep{natarajan2013learning}, 

Under the calibration condition, and denote the corresponding calibration transformation function for $\lAlphaPeer$ as $\Psi_{\lAlphaPeer}$.
Denote by 
\begin{small}
$$\hat{f}^*_{\lAlphaPeer} = \argmin_{f \in \F}~\hat{R}_{\lAlphaPeer,\tilde{D}}(f):=\frac{1}{N}\sum_{n=1}^N \lAlphaPeer(f(x_n),\tilde{y}_n).$$
\end{small}
Consider a bounded $\ell$ with $\bar{\ell}, \underline{\ell}$ denoting its max and min value. We have the following generalization bound:
\begin{theorem}\label{THM:GEN}
With probability at least $1-\delta$: 
\begin{small}
\begin{align*}
&R_{\mathcal D}(\hat{f}^*_{\lAlphaStar}) - R^*\leq \frac{1}{1-e_{-1}-e_{+1}}\cdot \\
&~~~~~~ \Psi^{-1}_{\lAlphaStar}\biggl(\min_{f \in \mathcal F}R_{\lAlphaStar,\tilde{\mathcal D}}(f)-\min_f R_{\lAlphaStar,\tilde{\mathcal D}}(f)\\
&+4(1+\alpha^*) L\cdot \Re(\F)+2\sqrt{\frac{\log 4/\delta}{2N}}\left(1+(1+\alpha^*)(\bar{\ell}-\underline{\ell})\right)\biggr)
\end{align*}
\end{small}
 where $\Re(\F)$ is Rademacher complexity of $\F$.
\end{theorem}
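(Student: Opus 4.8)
The plan is to chain together three ingredients: the noise-invariance identity (Lemma~\ref{LEM:AFFINE}), the classification calibration guarantee for $\lAlphaStar$ (Theorem~\ref{THM:calibration}), and a uniform-convergence bound on the empirical peer risk over the noisy sample. First I would start from the calibration inequality applied to $\hat{f}^*_{\lAlphaStar}$ on the \emph{clean} distribution: by Definition~\ref{def:cc},
\begin{small}
\[
\Psi_{\lAlphaStar}\bigl(R_{\mathcal D}(\hat{f}^*_{\lAlphaStar})-R^*\bigr)\le R_{\lAlphaStar,\mathcal D}(\hat{f}^*_{\lAlphaStar})-\min_f R_{\lAlphaStar,\mathcal D}(f).
\]
\end{small}
Then I would invoke Lemma~\ref{LEM:AFFINE} (which holds for a general $\ell$, not just $\BR$, and extends to the $\alpha$-weighted form with the same factor $1-e_{-1}-e_{+1}$ multiplying the clean risk) to convert both clean-risk terms on the right into noisy-risk terms, picking up the prefactor $\tfrac{1}{1-e_{-1}-e_{+1}}$ out front once we later solve for $R_{\mathcal D}(\hat{f}^*_{\lAlphaStar})-R^*$ via $\Psi^{-1}_{\lAlphaStar}$ (using that $\Psi_{\lAlphaStar}$ is invertible and nondecreasing). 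This reduces the problem to bounding $R_{\lAlphaStar,\tilde{\mathcal D}}(\hat{f}^*_{\lAlphaStar})-\min_f R_{\lAlphaStar,\tilde{\mathcal D}}(f)$.

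Next I would split that excess noisy risk the standard ERM way. Write it as
\begin{small}
\[
\bigl(R_{\lAlphaStar,\tilde{\mathcal D}}(\hat{f}^*_{\lAlphaStar})-\hat{R}_{\lAlphaStar,\tilde D}(\hat{f}^*_{\lAlphaStar})\bigr)
+\bigl(\hat{R}_{\lAlphaStar,\tilde D}(\hat{f}^*_{\lAlphaStar})-\hat{R}_{\lAlphaStar,\tilde D}(f^{\dagger})\bigr)
+\bigl(\hat{R}_{\lAlphaStar,\tilde D}(f^{\dagger})-R_{\lAlphaStar,\tilde{\mathcal D}}(f^{\dagger})\bigr)+\bigl(R_{\lAlphaStar,\tilde{\mathcal D}}(f^{\dagger})-\min_f R_{\lAlphaStar,\tilde{\mathcal D}}(f)\bigr),
\]
\end{small}
where $f^{\dagger}\in\argmin_{f\in\mathcal F}R_{\lAlphaStar,\tilde{\mathcal D}}(f)$; the middle term is $\le 0$ by the definition of $\hat{f}^*_{\lAlphaStar}$, the last term is exactly the approximation error $\min_{f\in\mathcal F}R_{\lAlphaStar,\tilde{\mathcal D}}(f)-\min_f R_{\lAlphaStar,\tilde{\mathcal D}}(f)$, and the first and third are each bounded by $\sup_{f\in\mathcal F}|R_{\lAlphaStar,\tilde{\mathcal D}}(f)-\hat{R}_{\lAlphaStar,\tilde D}(f)|$. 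I would control that supremum by a Rademacher-complexity argument: the $\alpha$-weighted peer loss on a sample is a difference of two $L$-Lipschitz loss evaluations scaled by $1$ and $\alpha^*$, so its Lipschitz constant in $f$ is $(1+\alpha^*)L$, giving a Rademacher term of order $(1+\alpha^*)L\,\Re(\mathcal F)$ after the Ledoux--Talagrand contraction; the bounded-differences / McDiarmid step contributes the $\sqrt{\log(4/\delta)/2N}$ deviation, scaled by the range of the peer loss, which is $1+(1+\alpha^*)(\bar\ell-\underline\ell)$ (the $1$ coming from the $0$–$1$-type contribution and $(1+\alpha^*)(\bar\ell-\underline\ell)$ from the two $\ell$-terms). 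Collecting constants and applying $\Psi^{-1}_{\lAlphaStar}$, together with the $\tfrac{1}{1-e_{-1}-e_{+1}}$ factor, yields the stated bound; the two $\delta/4$ budgets for the two suprema and the union bound explain the $\log(4/\delta)$.

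The main obstacle I expect is getting the constants in the Rademacher/McDiarmid step to land exactly as written — in particular handling the peer term $\ell(f(x_{n_1}),\tilde y_{n_2})$, which couples two distinct sample indices so the empirical average is a (degenerate) $U$-statistic-like object rather than a plain i.i.d. mean. I would address this by the usual decoupling trick: rewrite $\tfrac1N\sum_n \ell(f(x_{n_1}),\tilde y_{n_2})$ via a random permutation as an average over independent pairs (or bound it by $\tfrac1N\sum_n\ell(f(x_n),\tilde y_{\sigma(n)})$ for a derangement $\sigma$), so that McDiarmid still applies with bounded coordinate-wise differences and the contraction still gives $\Re(\mathcal F)$ with the factor $(1+\alpha^*)L$. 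A secondary subtlety is verifying that Lemma~\ref{LEM:AFFINE} genuinely holds with a \emph{generic} bounded $\ell$ and with the $\alpha$-weight in place (the version stated in the text is for $\BR$ and $\alpha=1$); I would note that its proof only uses the conditional-independence of the noise and linearity of expectation, so it carries over verbatim with $\ell$ replacing $\BR$ and the $\alpha$-weighted peer loss replacing $\lPeer$, which is what makes the clean-to-noisy conversion legitimate.
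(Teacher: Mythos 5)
Your proposal follows essentially the same route as the paper's proof: classification calibration, the affine noise-invariance of the ($\alpha$-weighted) peer risk, the standard approximation-plus-estimation decomposition of the excess noisy risk, and a Rademacher/concentration bound whose Lipschitz constant $(1+\alpha^*)L$ and range $1+(1+\alpha^*)(\bar{\ell}-\underline{\ell})$ you identify correctly. Two places where your bookkeeping diverges from the paper's are worth flagging. First, you apply the calibration inequality on the \emph{clean} distribution and then convert clean risks to noisy ones; with that ordering the factor $\tfrac{1}{1-e_{-1}-e_{+1}}$ lands \emph{inside} $\Psi^{-1}_{\lAlphaStar}$ rather than outside as in the statement, and since $\Psi^{-1}$ is nonlinear these are not interchangeable, so your claim that the prefactor comes "out front" does not follow from your ordering. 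The paper instead first uses Proposition~\ref{prop:alpha} (the affine relation for the 0--1 $\alpha$-peer risk) to write $R_{\mathcal D}(\hat f^*_{\lAlphaStar})-R^*$ as $\tfrac{1}{1-e_{-1}-e_{+1}}$ times an excess $\AlphaStar$-risk on $\tilde{\mathcal D}$, and only then applies calibration entirely on the noisy distribution; equivalently, the $\Psi_{\lAlphaStar}$ constructed in Theorem~\ref{THM:calibration} already absorbs the noise factor, which is what makes the stated form come out. Second, for the coupled peer term the paper does not decouple via a permutation: in Lemma~\ref{LEM:RAD} it replaces $\ell(f(x_{n_1}),\tilde y_{n_2})$ by its expectation over the random pairing, namely $\tilde p_{\tilde y_n}\ell(f(x_n),\tilde y_n)+(1-\tilde p_{\tilde y_n})\ell(f(x_n),-\tilde y_n)$, which yields an i.i.d.\ surrogate loss; it pays a separate Hoeffding term for the pairing randomness and runs the Rademacher argument on the surrogate, with the union bound over the two events producing the $\log(4/\delta)$. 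Your derangement/decoupling device is a legitimate alternative, but you would still need to carry out that step carefully to recover the exact constants in the statement.
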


\subsection{Convexity} In experiments, we use neural networks which are more robust to non-convex loss functions. 
Nonetheless, despite the fact that $\lAlphaPeer(\cdot)$ is not convex in general, Lemma 5 in \citep{natarajan2013learning} informs us that as long as $\hat{R}_{\lAlphaPeer,\tilde{D}}(f)$ is close to some convex function, mirror gradient type of algorithms will converge to a small neighborhood of the optimal point when performing ERM with $\lAlphaPeer$. A natural candidate for this convex function is the expectation of $\hat{R}_{\lAlphaPeer,\tilde{D}}(f)$ as
$
\hat{R}_{\lAlphaPeer,\tilde{D}}(f) \rightarrow R_{\lAlphaPeer, \tilde{\mathcal D}}(f) 
$ when $N \rightarrow \infty$.

\begin{lemma}\label{LEM:CONVEX}
When $\alpha < 1, \max\{e_{+1},e_{-1}\} < 0.5$, $\ell''(t,y) = \ell''(t,-y), \forall t$, and $\alpha (1-2p) (1-e_{+1}-e_{-1}) = (1-\alpha)(e_{+1}-e_{-1}) $, $R_{\lAlphaPeer, \tilde{\mathcal D}}(f) $ is convex.
\end{lemma}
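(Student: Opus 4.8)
The plan is to write out $R_{\lAlphaPeer, \tilde{\mathcal D}}(f)$ explicitly in terms of expectations under the clean distribution $\mathcal D$, using the conditional independence of the label noise from the features, and then show that under the stated hypotheses the ``extra'' non-convex terms cancel, leaving a positive multiple of a convex function of $f$. By Lemma \ref{LEM:AFFINE}-style reasoning (or a direct computation paralleling it), the peer term contributes only through the product of marginals, so the only part of $R_{\lAlphaPeer, \tilde{\mathcal D}}(f)$ whose convexity in $f$ is in question is the first term $R_{\ell,\tilde{\mathcal D}}(f) = \E_{\tilde{\mathcal D}}[\ell(f(X),\tilde Y)]$ together with the marginal-weighted pieces coming from $-\alpha\,\E[\ell(f(X_{n_1}),\tilde Y_{n_2})] = -\alpha\,\E_X[\,\PP(\tilde Y = +1)\ell(f(X),+1) + \PP(\tilde Y=-1)\ell(f(X),-1)\,]$.

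First I would expand the noisy risk of the base loss: conditioning on $Y$ and using $\PP(\tilde Y = y'\mid Y=y, X) = \PP(\tilde Y=y'\mid Y=y)$, one gets, for each $x$,
\begin{align*}
\E_{\tilde Y\mid X=x}[\ell(f(x),\tilde Y)] = &\;p_x\big[(1-e_{+1})\ell(f(x),+1) + e_{+1}\ell(f(x),-1)\big]\\
&+ (1-p_x)\big[(1-e_{-1})\ell(f(x),-1) + e_{-1}\ell(f(x),+1)\big],
\end{align*}
where $p_x := \PP(Y=+1\mid X=x)$. Subtracting the $\alpha$-weighted peer term gives a pointwise expression of the form $A(x)\ell(f(x),+1) + B(x)\ell(f(x),-1)$ plus terms not involving $f$. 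Writing $\ell(f(x),+1) = \tfrac12(\ell(f(x),+1)+\ell(f(x),-1)) + \tfrac12(\ell(f(x),+1)-\ell(f(x),-1))$ and similarly for the other, the coefficient of the ``antisymmetric'' part $\ell(f(x),+1)-\ell(f(x),-1)$ is exactly $\tfrac12(A(x)-B(x))$; I would show that the algebraic identity $\alpha(1-2p)(1-e_{+1}-e_{-1}) = (1-\alpha)(e_{+1}-e_{-1})$, combined with $\ell'' (t,+1)=\ell''(t,-1)$ (which makes the antisymmetric part have zero second derivative in $t$, hence affine in $t$ along the relevant direction — or, more carefully, makes the second-derivative contribution vanish), forces the troublesome term to be convex (in fact its contribution to the Hessian vanishes). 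What remains is a nonnegative multiple of $\ell(f(x),+1)+\ell(f(x),-1)$, and since $\ell$ is convex, this symmetric combination is convex in $f$; integrating over $X$ preserves convexity.

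The step I expect to be the main obstacle is handling the sign of the leftover coefficient: after the antisymmetric part is killed, one still needs the multiplier of the symmetric (convex) part to be $\ge 0$ uniformly in $x$, and this is where $\max\{e_{+1},e_{-1}\} < 0.5$ (together with $\alpha<1$) must be used — I would need to check that $A(x)+B(x) \ge 0$ for every $x$, i.e. that the relevant convex combination of $1-e_{\pm1}, e_{\pm1}$ minus $\alpha$ times the noisy marginals stays nonnegative, possibly after noting that $\int(A(x)+B(x))\,d\PP_X$ relates to $1-e_{+1}-e_{-1}>0$ but the pointwise statement needs the $<0.5$ bound. A secondary subtlety is being precise about ``convexity of $R_{\lAlphaPeer,\tilde{\mathcal D}}(f)$ in $f$'' — since $f$ ranges over a function class, convexity means convexity along line segments $f = \lambda f_1 + (1-\lambda)f_2$, so I would phrase everything via the pointwise convexity of $t\mapsto \ell(t,y)$ and the fact that $t\mapsto \ell(t,+1)+\ell(t,-1)$ is convex while the antisymmetric piece, under $\ell''(t,+1)=\ell''(t,-1)$, has identically zero second derivative and hence is affine, so its (signed) contribution does not destroy convexity regardless of the sign of its coefficient. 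That last observation is in fact what lets the argument go through even when $A(x)-B(x)\ne 0$ pointwise, provided only the integrated identity holds; I would double-check whether the theorem truly needs the pointwise identity or only its expectation, and state accordingly.
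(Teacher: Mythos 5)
Your plan is correct, and the mechanism is the same one the paper uses: the paper's proof of this lemma simply points back to the computation in the proof of Theorem \ref{THM:calibration}, condition (2), where the noisy $\alpha$-peer risk is expanded as $\E[c_1\,\phi(f(X)Y)+c_2\,\phi(-f(X)Y)]$ and the second derivative is shown to be $(c_1+c_2)\phi''=(1-\alpha)\phi''\ge 0$, using exactly the cancellation $\ell''(t,y)=\ell''(t,-y)$ that you exploit via the symmetric/antisymmetric split. The one ``obstacle'' you flag resolves trivially: since $\PP(\tilde Y=+1\mid X=x)+\PP(\tilde Y=-1\mid X=x)=1$ and the peer term subtracts $\alpha\bigl(\PP(\tilde Y=+1)+\PP(\tilde Y=-1)\bigr)=\alpha$, you get $A(x)+B(x)=1-\alpha$ identically in $x$, so only $\alpha<1$ is needed and neither $\max\{e_{+1},e_{-1}\}<0.5$ nor the balance identity $\alpha(1-2p)(1-e_{+1}-e_{-1})=(1-\alpha)(e_{+1}-e_{-1})$ plays any role in the convexity argument (they are inherited from the calibration statement, where the balance condition is needed to make $c_1,c_2$ constant in $Y$ and the $e<0.5$ bound to get $\varphi'(0)<0$). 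So your conditioning-on-$X$ variant is marginally more general, but computationally it is the same proof.
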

This is the same condition as specified in (2) of Theorem 6.

\section{Experiments}\label{sec:exp}
\setlength{\tabcolsep}{5pt}

\begin{table*}[ht]
\small
\centering
\begin{tabular}{|c|c|c|c|c|c|c|c|c|c|c|c|}
\hline
\multicolumn{2}{|c|}{Task}  & \multicolumn{5}{c|}{With Prior Equalization $p = 0.5$}          & \multicolumn{5}{c|}{Without Prior Equalization $p \neq 0.5$}       \\ \hline
$(d,N_+, N_-)$ & $e_{-1}, e_{+1}$  & Peer  & Surr  & \rev{Symm}  & \rev{DMI}   & NN    & Peer  & Surr  & \rev{Symm}  & \rev{DMI}   & NN    \\ \hline\hline
               & 0.1, 0.3  & \textbf{0.977} & \textbf{0.968} & \textbf{0.969} & \textbf{0.974} & \textbf{0.964} & \textbf{0.977} & \textbf{0.968} & \textbf{0.969} & \textbf{0.974} & \textbf{0.964} \\ 
Twonorm        & 0.2, 0.4  & \textbf{0.976} & 0.919 & \textbf{0.959} & \textbf{0.966} & 0.911 & \textbf{0.976} & 0.919 & \textbf{0.959} & \textbf{0.966} & 0.911 \\ 
(20,3700,3700) & 0.4, 0.4  & \textbf{0.973} & 0.934 & \textbf{0.958} & 0.936 & 0.883 & \textbf{0.973} & 0.934 & \textbf{0.958} & 0.936 & 0.883 \\ \hline
               & 0.1, 0.3  & \textbf{0.919} & 0.878 & 0.851 & 0.875 & 0.811 & \textbf{0.925} & 0.885 & 0.868 & 0.889 & 0.809 \\ 
Splice         & 0.2, 0.4  & \textbf{0.901} & 0.832 & 0.757 & 0.801 & 0.714 & \textbf{0.912} & 0.84  & 0.782 & 0.81  & 0.725 \\ 
(60,1527,1648) & 0.4, 0.4  & \textbf{0.819} & 0.754 & 0.657 & 0.66  & 0.626 & \textbf{0.822} & 0.755 & 0.674 & 0.647 & 0.601 \\ \hline
               & 0.1, 0.3  & \textbf{0.833} & 0.78  & 0.777 & 0.797 & 0.756 & \textbf{0.856} & 0.802 & 0.803 & 0.83  & 0.75  \\ 
Diabetes       & 0.2, 0.4  & \textbf{0.755} & 0.681 & 0.634 & 0.682 & 0.596 & \textbf{0.739} & 0.705 & 0.695 & 0.707 & 0.672 \\ 
(8,268,500)    & 0.4, 0.4  & \textbf{0.719} & 0.645 & 0.619 & 0.637 & 0.551 & 0.651 & \textbf{0.685} & \textbf{0.68}  & 0.633 & 0.583 \\ \hline
               & 0.1, 0.3  & \textbf{0.639} & 0.563 & 0.507 & 0.529 & 0.519 & \textbf{0.727} & 0.645 & \textbf{0.709} & 0.666 & 0.648 \\ 
German         & 0.2, 0.4  & \textbf{0.664} & 0.59  & 0.6   & 0.618 & 0.572 & \textbf{0.676} & \textbf{0.681} & 0.537 & 0.573 & 0.535 \\ 
(23,300,700)   & 0.4, 0.4  & \textbf{0.606} & 0.55  & 0.573 & 0.573 & 0.556 & \textbf{0.654} & 0.632 & 0.549 & 0.611 & 0.553 \\ \hline
               & 0.1, 0.3  & \textbf{0.89}  & \textbf{0.895} & \textbf{0.892} & 0.856 & 0.868 & \textbf{0.893} & \textbf{0.898} & \textbf{0.883} & 0.785 & 0.863 \\
Waveform       & 0.2, 0.4  & \textbf{0.881} & \textbf{0.89}  & 0.828 & 0.835 & 0.81  & \textbf{0.884} & \textbf{0.884} & 0.745 & 0.761 & 0.837 \\ 
(21,1647,3353) & 0.4, 0.4  & \textbf{0.87}  & \textbf{0.866} & \textbf{0.867} & 0.773 & 0.835 & \textbf{0.853} & \textbf{0.852} & \textbf{0.852} & 0.672 & 0.828 \\ \hline
               & 0.1, 0.3  & \textbf{0.906} & \textbf{0.9}   & \textbf{0.89}  & 0.87  & 0.909 & \textbf{0.943} & 0.909 & 0.897 & 0.811 & \textbf{0.93}  \\ 
Image          & 0.2, 0.4  & 0.836 & \textbf{0.862} & 0.719 & \textbf{0.845} & 0.832 & 0.672 & 0.755 & 0.722 & \textbf{0.86}  & 0.599 \\ 
(18,1320,990)  & 0.4, 0.4  & 0.741 & 0.72  & \textbf{0.788} & 0.763 & 0.732 & \textbf{0.806} & \textbf{0.803} & \textbf{0.823} & 0.762 & 0.8   \\ \hline
\end{tabular}
\caption{\small{Experiment results on 6 UCI Benchmarks (The full table of all details on 10 UCI Benchmarks are deferred to Appendix; $N_+, N_-$ are the numbers of positive and negative samples). \rev{Surr: surrogate loss method \citep{natarajan2013learning}; DMI: \citep{xu2019l_dmi}; Symm: symmetric loss method \citep{ghosh2015making}.}
Entries within 2\% from the best in each row are highlighted in bold. 
All results are averaged across 8 random seeds.
Neural-network-based methods (Peer, Surrogate, NN, \rev{Symmetric, DMI}) use the same hyper-parameters.
}}
\label{tab:exp1}
\end{table*}
We implemented a two-layer ReLU Multi-Layer Perceptron (MLP) for classification tasks on 10 UCI Benchmarks and applied our peer loss to update their parameters. We show the robustness of peer loss with increasing rates of label noise on 10 real-world datasets. \rev{We compare the performance of our peer loss based method with surrogate loss method \citep{natarajan2013learning} (unbiased loss correction with known error rates), symmetric loss method \citep{ghosh2015making}, DMI \citep{xu2019l_dmi}}, C-SVM \citep{Liu:2003:BTC:951949.952139} and PAM \citep{Khardon:2007:NTV:1248659.1248667}, which are state-of-the-art methods for dealing with random binary-classification noise, as well as a neural network baseline solution with binary cross entropy loss (NN). We use a cross-validation set to tune the parameters specific to the algorithms.
 For surrogate loss, we use the true $e_{-1}$ and $e_{+1}$ instead of learning them separately. Thus, surrogate loss could be considered a favored and advantaged baseline method. 
Accuracy of a classification algorithm is defined as the fraction of examples in the test set classified correctly with respect to the clean and true label. 
For given noise rates $e_{+1}$ and $e_{-1}$, labels of the training data are flipped accordingly.

\begin{figure}[!ht]
    \centering
    \includegraphics[width=0.45\textwidth]{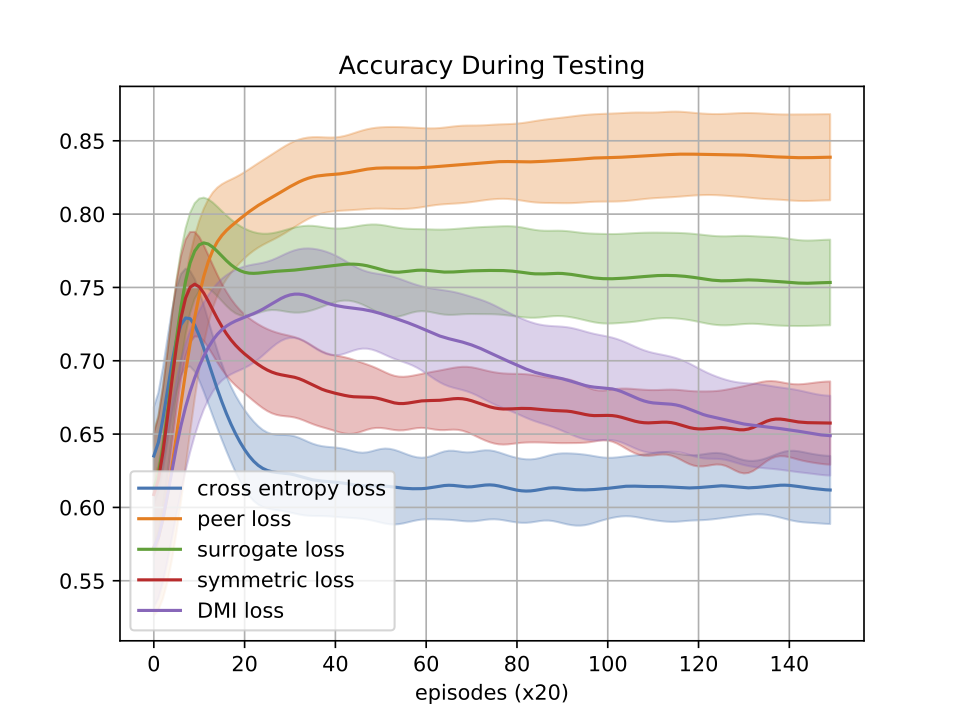}
    \caption{\rev{Accuracy on test set during training}. Splice ($e_{-1}=0.4$, $e_{+1}=0.4$). More examples can be found in Appendix.}
    \label{fig:test1}
\end{figure}

A subset of the experiment results is shown in Table \ref{tab:exp1}. A full table with all details can be found in the Appendix. \textit{Equalized Prior} means that we balance the dataset to guarantee $p=0.5$. For this case we used $\lPeer$ (i.e., $\alpha = 1$ as in $\lAlphaPeer$). For $p \neq 0.5$, we use validation dataset (still with noisy labels) to tune $\alpha$. \rev{Our method is competitive across all datasets and is even able to outperform the surrogate loss method with access to the true noise rates in a number of datasets, as well as the symmetric loss functions (which does not require the knowledge of noise rates when error rates are symmetric) and the recently proposed information theoretical loss \citep{xu2019l_dmi}.} Figure \ref{fig:test1} shows that peer loss can prevent over-fitting when facing noisy labels. 


\paragraph{A closer look at our decision boundary} To have a better understanding of peer loss, we visualize the decision boundary returned by peer loss with a 2D synthetic experiment: the outer circle of randomly places points correspond to one class and the inner one is the other class. From Figure \ref{CE:db} we observe that when using cross entropy for training, the decision boundary is sharp on clean data but becomes much less so on noisy data (we have more examples with higher noise rate in the Appendix). Peer loss returns sharp boundaries even under a high noise rate (Figure \ref{PL:db}).

\begin{figure}[!ht]
\centering
\includegraphics[width=0.5\linewidth]{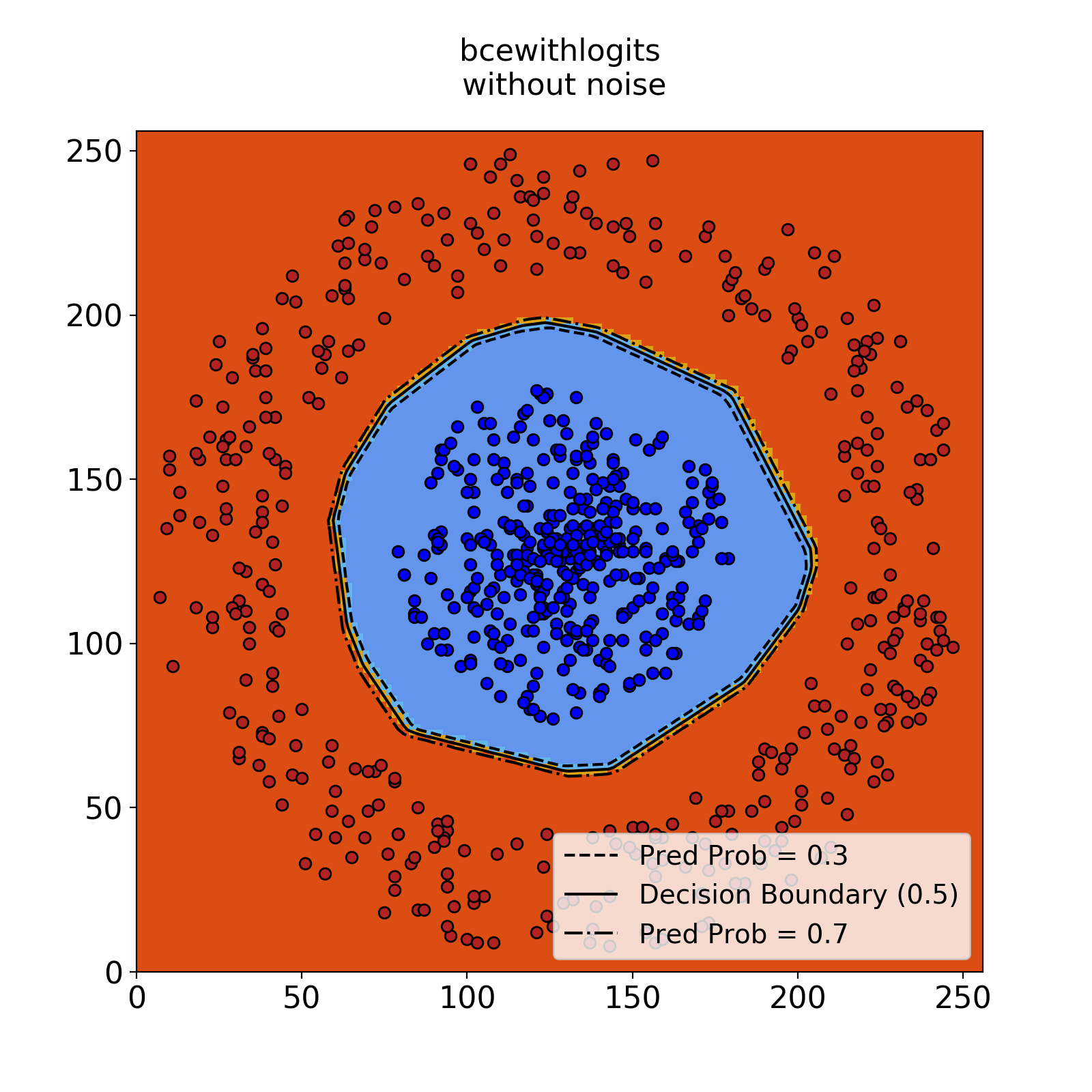}\hspace{-0.2in}
\includegraphics[width=0.5\linewidth]{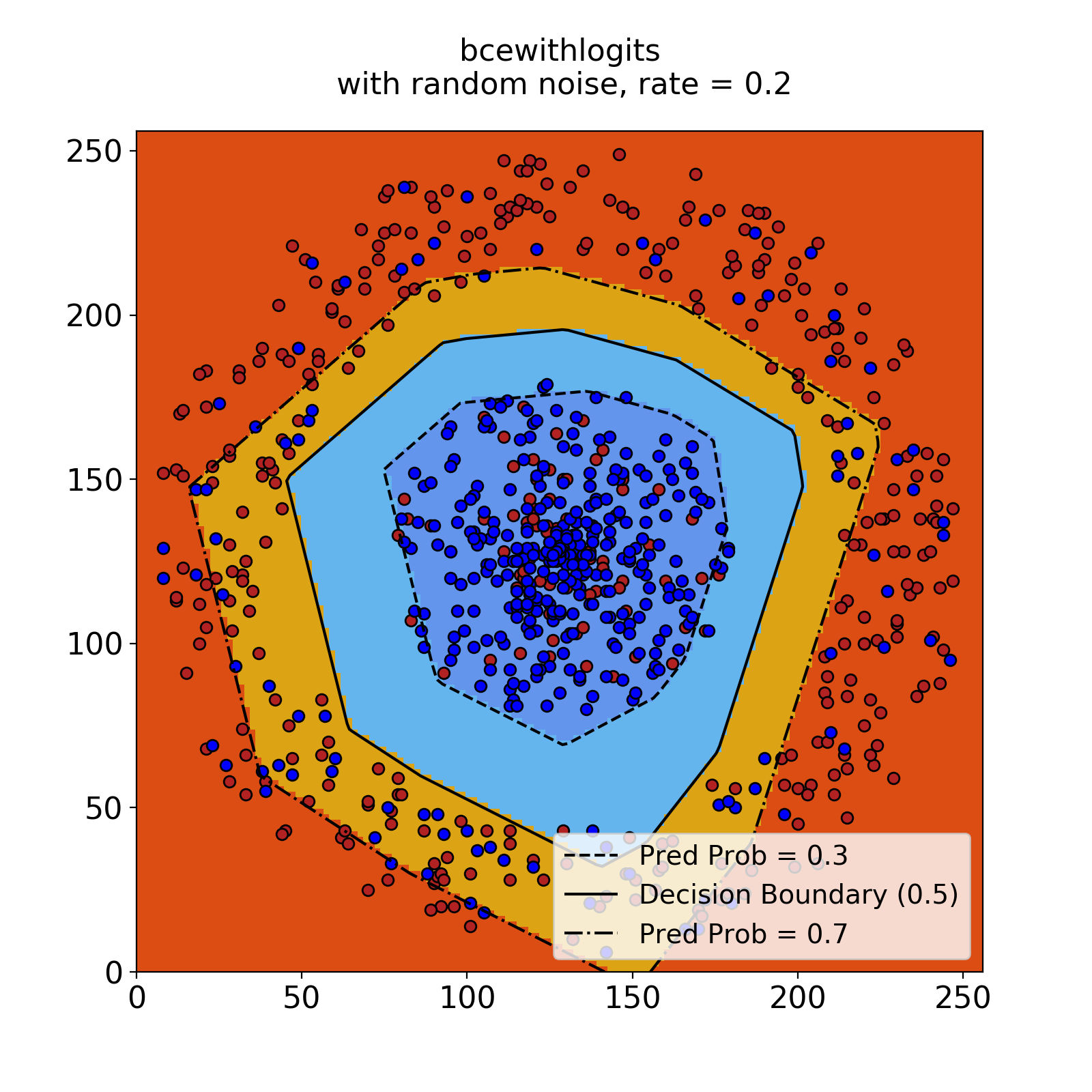}
\caption{Decision boundary for cross entropy. Left: trained on clean data. Right: trained on noisy labels, $e_{+1}=e_{-1} = 0.2$.}\label{CE:db}
\end{figure}

\begin{figure}[!ht]
\centering
\includegraphics[width=0.5\linewidth]{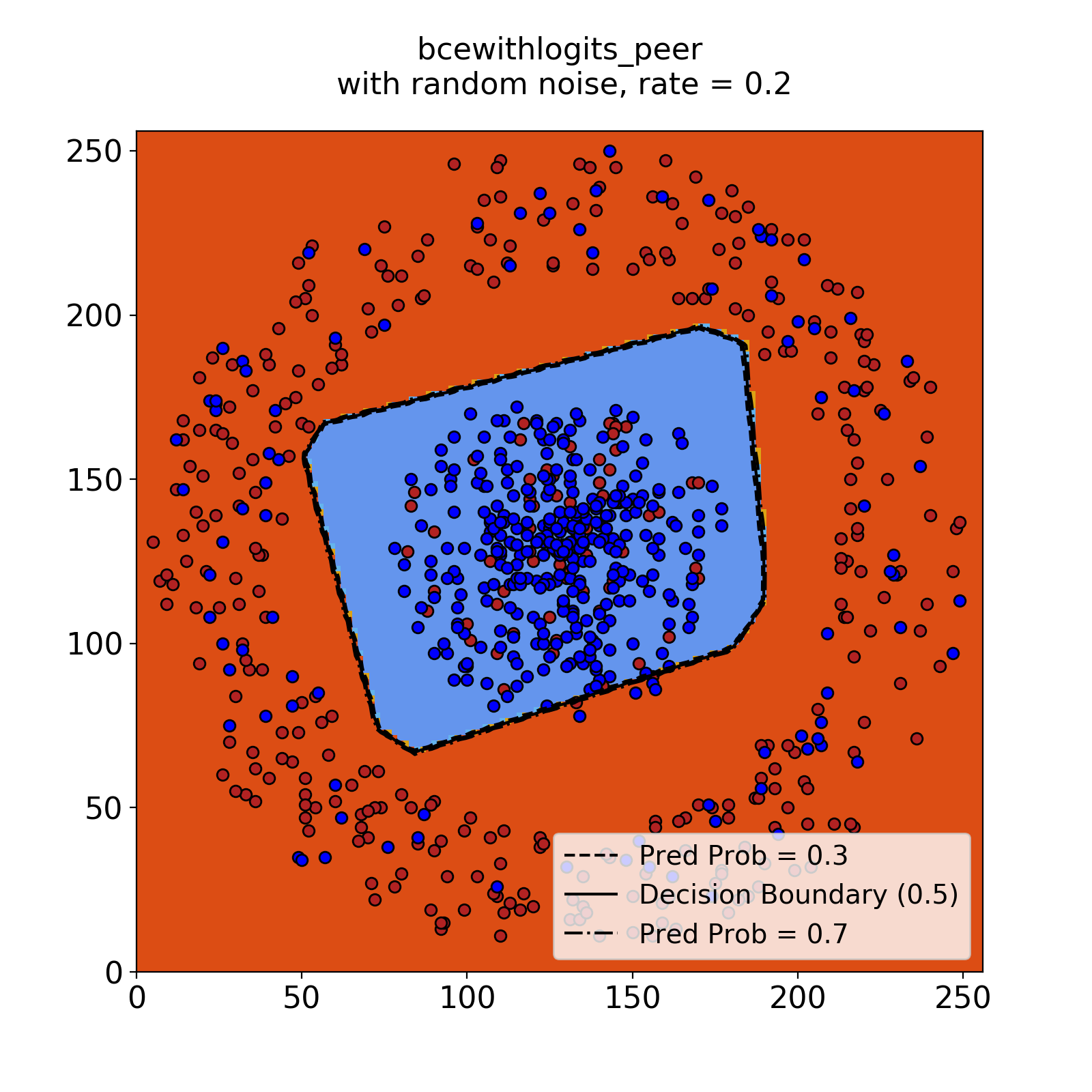}\hspace{-0.2in}
\includegraphics[width=0.5\linewidth]{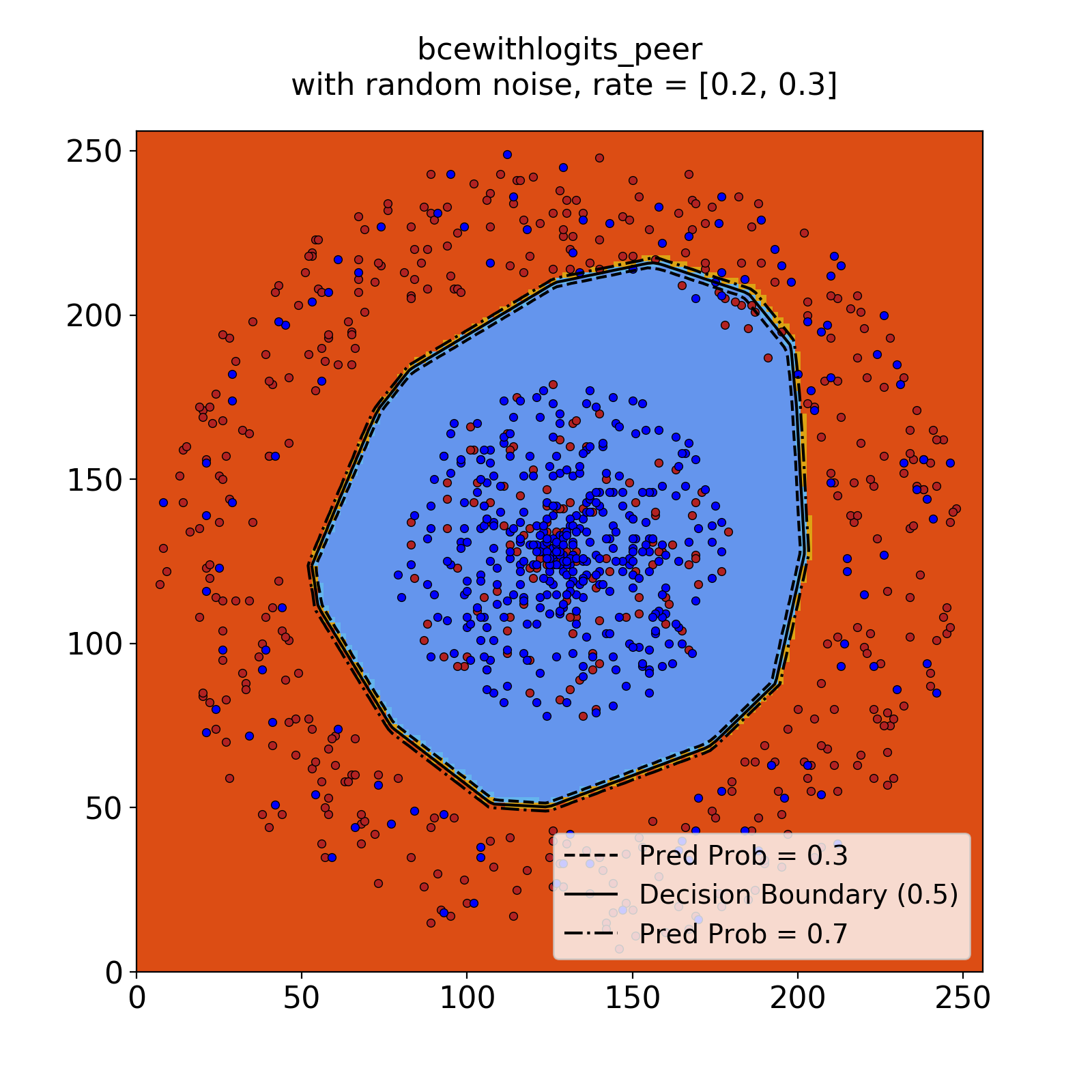}
\caption{Decision boundary for peer loss. Left: $e_{+1}=e_{-1} = 0.2$. Right: $e_{+1}= 0.2, e_{-1} = 0.3$.}\label{PL:db}
\end{figure}

\rev{\paragraph{Preliminary results on multi-class classification} We provide preliminary results on CIFAR-10 \citep{krizhevsky2009learning} in Table \ref{tab:cifar}. We followed the setup in \citep{xu2019l_dmi} and used ResNet \citep{he2016deep} as the underlying optimization solution. However, different from settings in \citep{xu2019l_dmi} where label noise only exists between specific class pairs, our noise is universal across classes. For each class, we flip the label to any other label with a probability of $\epsilon/9$, where $\epsilon$ is the error rate and $9$ is the number of other classes. We do show peer loss is competitive against cross entropy and DMI \citep{xu2019l_dmi}. 
}


\begin{table}[ht]
\small
    \centering
    \begin{tabular}{|c|c|c|}
    \hline
        Model & Error Rate $\epsilon = 0.2$ & Error Rate $\epsilon = 0.4$ \\ \hline\hline
       cross entropy  & 86.67 & 82.09 \\ \hline
        DMI \citep{xu2019l_dmi}  & 85.11 & 81.67 \\ \hline
        Peer Loss  & \textbf{87.72} & \textbf{83.81} \\ \hline
    \end{tabular}
    \caption{\rev{Accuracy on CIFAR-10.}}
    \label{tab:cifar}
\end{table}

\section{Conclusion and Discussion} 
This paper introduces peer loss, a family of loss functions that enables training a classifier over noisy labels, but without using explicit knowledge of the noise rates of labels. 


Peer loss had made the assumption that label noise is homogeneous across training data instances. 
Future extensions of this work includes extension to instance based \citep{cheng2017learning,xia2020partsdependent} and margin based \citep{amid2019robust} label noise. We are also interested in exploring the application of peer loss in differentially private ERM \citep{chaudhuri2011differentially}, as well as in semi-supervised learning.

\section*{Proof for Lemma \ref{LEM:AFFINE}}

\begin{proof}
We sketch the main steps. 
We denote by $X_{n_1},  \tilde{Y}_{n_2}$ the random variable corresponding to the peer samples $x_{n_1}, \tilde{y}_{n_2}$. 

First we have
\begin{align}
  \E[\ell_\text{peer}(f(X), \tilde{Y})] = \E[\ell(f(X), \tilde{Y})] - \mathbb E[\ell(f(X_{n_1}), \tilde{Y}_{n_2})] \label{eqn:pl0}  
\end{align}
Consider the two terms on the RHS separately.
\begin{align}
 & \mathbb E[\ell(f(X), \tilde{Y})]\nonumber \\
=& \mathbb E_{X,Y=-1} \bigl [\mathbb P(\tilde{Y}=-1 | Y=-1) \cdot \ell(f(X), -1) \nonumber\\
&~~~~+
                       \mathbb P(\tilde{Y}=+1 | Y=-1) \cdot \ell(f(X), +1)\bigr]\nonumber\\
 &+\mathbb E_{X,Y=+1} \bigl [\mathbb P(\tilde{Y}=+1 | Y=+1) \cdot \ell(f(X), +1) \nonumber\\
 &~~~~+
                       \mathbb P(\tilde{Y}=-1 | Y=+1) \cdot \ell(f(X), -1)\bigr]\tag{Independence between $\tilde{Y}$ and $X$ given $Y$}\nonumber\\
=& \mathbb E_{X,Y=-1} \bigl [(1-e_{-1})  \ell(f(X), -1) + e_{-1}  \ell(f(X), +1)\bigr]\nonumber\\
 +&\mathbb E_{X,Y=+1} \bigl [(1-e_{+1}) \ell(f(X), +1) + e_{+1}  \ell(f(X), -1)\bigr ] \label{eqn:pl1}
\end{align}
 The above is done mostly via law of total probability and using the assumption that $\tilde{Y}$ is conditionally (on $Y$) independent of $X$. Subtracting and adding $ e_{+1}\cdot\ell(f(X), -1) $ and $e_{-1}\cdot\ell(f(X), +1)$ to the two expectation terms separately we have
 \begin{align*}
\text{Eqn. (\ref{eqn:pl1})}=& \mathbb E_{X,Y=-1} \bigl [(1-e_{-1}-e_{+1}) \cdot \ell(f(X), -1) \\
                       &+ e_{+1}\cdot\ell(f(X), -1) 
                       + e_{-1} \cdot \ell(f(X), +1)\bigr] \\
 &+\mathbb E_{X,Y=+1} \bigl [(1-e_{-1}-e_{+1}) \cdot \ell(f(X), +1) \\
                       &+ e_{-1}\cdot\ell(f(X), +1)
                       + e_{+1} \cdot \ell(f(X), -1)\bigr]\\
=& (1-e_{-1}-e_{+1}) \cdot \mathbb E_{X,Y} \bigl[\ell(f(X), Y) \bigr] \\
&+ 
   \mathbb E_X \bigl [e_{+1} \cdot \ell (f(X), -1) + e_{-1} \cdot \ell(f(X), +1)\bigr]
\end{align*}
And consider the second term:
\begin{align*}
 &\mathbb E[\ell(f(X_{n_1}), \tilde{Y}_{n_2})]\\
=& \mathbb E_X [\ell(f(X), -1)] \cdot \mathbb P(\tilde{Y}=-1) \\
&+ 
   \mathbb E_X [\ell(f(X), +1)] \cdot \mathbb P(\tilde{Y}=+1) \tag{Independence between $n_1$ and $n_2$} \\
=& \mathbb E_X \bigl[ (e_{+1} \cdot p + (1-e_{-1})(1-p)) \cdot \ell(f(X), -1) \\
&+
                  \left((1-e_{+1}) p + e_{-1}(1-p)\right) \cdot \ell(f(X), +1)\bigr]\tag{Expressing $\PP(\tilde{Y})$ using $p$ and $e_{+1}, e_{-1}$}\\
=& \mathbb E_X \bigl[ (1-e_{-1}-e_{+1})(1-p) \cdot \ell(f(X), -1) \\
&+                (1-e_{-1}-e_{+1})p \cdot \ell(f(X), +1) \bigr]\\
 &+\mathbb E_X \bigl[ (e_{+1} \cdot p + e_{+1} (1-p)) \cdot \ell(f(X), -1) \\
 &+ 
                 (e_{-1} (1-p) + e_{-1} p) \cdot \ell(f(X), +1)\bigr]\\
=& (1-e_{-1}-e_{+1}) \cdot \mathbb E[\ell(f(X_{n_1}), Y_{n_2})] \\
&+ 
   \mathbb E_X\bigl[e_{+1} \cdot \ell(f(X), -1) + e_{-1} \cdot \ell (f(X), +1)\bigr]
\end{align*}
Subtracting the first and second term on RHS of Eqn. (\ref{eqn:pl0}):
\begin{align}
\mathbb E[\ell_\text{peer}&(f(X), \tilde{Y})] 
=\E[\ell(f(X), \tilde{Y})] - \mathbb E[\ell(f(X_{n_1}), \tilde{Y}_{n_2})]\nonumber \\
=& (1-e_{-1}-e_{+1}) \cdot \mathbb E[\ell_\text{peer}(f(X), Y)]
\end{align}
\end{proof}
\section*{Acknowledgement}
Yang Liu would like to thank Yiling Chen for inspiring early discussions on this problem. The authors thank Tongliang Liu, Ehsan Amid and Manfred Warmuth for constructive comments and conversations, and Nontawat Charoenphakdee for his comments on related works. The authors also would like to thank Xingyu Li, Zhaowei Zhu and Jiaheng Wei for detailed discussions, suggestions and help with generating Figures 3 and 4.

This work is partially funded by
the Defense Advanced Research Projects Agency (DARPA) and Space and Naval Warfare Systems Center Pacific (SSC Pacific) under Contract No. N66001-19-C-4014. The views and conclusions contained herein
are those of the authors and should not be interpreted as necessarily representing the official policies, either
expressed or implied, of DARPA, SSC Pacific or the U.S. Government. The U.S. Government is authorized to reproduce and distribute reprints for governmental purposes notwithstanding any
copyright annotation therein.

\bibliographystyle{icml2020}
\bibliography{noise_learning,library,myref,cvpr}
\newpage
\onecolumn
\newpage
\setcounter{section}{0}
\setcounter{figure}{0}
\setcounter{table}{0}
\setcounter{lemma}{0}
\setcounter{theorem}{0}

\makeatletter 
\renewcommand{\thefigure}{A\arabic{figure}}
\setcounter{table}{0}
\renewcommand{\thetable}{A\arabic{table}}
\setcounter{algorithm}{0}
\renewcommand{\thealgorithm}{A\arabic{algorithm}}
\setcounter{definition}{0}
\renewcommand{\thedefinition}{A\arabic{definition}}

\newpage
\appendix
\setcounter{page}{1}
\resetlinenumber

\begin{center}
\vskip 0.15in
\LARGE{\bf Appendix}  
\end{center}

\section*{Related work with more details}

\paragraph{Learning from Noisy Labels}

Our work fits within a stream of research on learning with noisy labels.
A large portion of research on this topic works with the \emph{random classification noise} (RCN) model, where observed labels are flipped independently with probability $e \in [0,\tfrac{1}{2}]$ 
\citep{bylander1994learning,cesa1999sample,cesa2011online,ben2009agnostic}. Recently, learning with asymmetric noisy data (or also referred as \emph{class-conditional} random classification noise (CCN)) for binary classification problems has been rigorously studied in \citep{stempfel2009learning,scott2013classification,natarajan2013learning,scott2015rate,van2015learning,menon2015learning}.

\rev{
\paragraph{Symmetric loss} For RCN, where the noise parameters are symmetric, there exist works that show symmetric loss functions \citep{manwani2013noise,ghosh2015making,ghosh2017robust,van2015learning} are robust to the underlying noise, without specifying the noise rates. It was also shown that under certain conditions, the proposed loss functions are able to handle asymmetric noise. Our focus departs from this line of works and we exclusively focus on asymmetric noise setting, and study the possibility of an approach that can ignore the knowledge of noise rates.

Follow-up works \citep{du2013clustering,van2015average,menon2015learning,charoenphakdee2019symmetric} have looked into leveraging symmetric conditions and 0-1 loss with asymmetric noise, and with more evaluation metrics, such as balanced error rate and AUROC. In particular, experimental evidence is reported in \citep{charoenphakdee2019symmetric} on the importance of symmetricity when learning with noisy labels.

\paragraph{More recent works}
 More recent developments include an importance re-weighting algorithm \citep{liu2016classification}, a noisy deep neural network learning setting \citep{sukhbaatar2014learning,han2018co,song2019selfie}, and learning from massive noisy data for image classification \citep{xiao2015learning,goldberger2016training,zhang2017mixup,jiang2017mentornet,jenni2018deep,yi2019probabilistic}, robust cross entropy loss for neural network \citep{zhang2018generalized}, loss correction \citep{patrini2017making}, among many others. Loss or sample correction has also been studied in the context of learning with unlabeled data with weak supervisions \citep{lu2018minimal}. Most of the above works either lacks theoretical guarantee of the proposed method against asymmetric noise rates \citep{sukhbaatar2014learning,zhang2018generalized}, or require estimating the noise rate or transition matrix between noisy and true labels \citep{liu2016classification,xiao2015learning,patrini2017making,lu2018minimal}. A good number of the recent works can be viewed as derivatives or extension of the unbiased surrogate loss function idea introduced in \citep{natarajan2013learning}, therefore they would naturally require the knowledge of the noise rates or transition matrix. We do provide thorough comparisons between peer loss and the unbiased surrogate loss methods.
 
 A recent work \citep{xu2019l_dmi} proposes an information theoretical loss (an idea adapted from an earlier theoretical contribution \citep{kong2018water}) that is also robust to asymmetric noise rate. We aimed for a simple-to-optimize loss function that can easily adapt to existing ERM solutions. \citep{xu2019l_dmi} involves estimating a joint distribution matrix between classifiers and noisy labels, and then invokes computing a certain information theoretical measure based on this matrix. Therefore, its sample complexity requirement and the sensitivity to noise in this estimation are not entirely clear to us (not provided in the paper either). We do provide calibration guarantees, generalization bounds, and conditions under which the loss functions are convex. In general, we do think computationally peer loss functions are easy to optimize with, in comparison to information theoretical measures. Experiments comparing with \citep{xu2019l_dmi} are also given in Section \ref{sec:exp}. 
}

\vspace{-0.1in}
\paragraph{Peer Prediction}

Our work builds on the literature for peer prediction \citep{prelec2004bayesian,MRZ:2005,witkowski2012robust,radanovic2013,Witkowski_hcomp13,dasgupta2013crowdsourced,shnayder2016informed,sub:ec17}. \citep{MRZ:2005} established that strictly proper scoring rule \citep{Gneiting:07} could be adopted to elicit truthful reports from self-interested agents.  Follow-up works that have been done to relax the assumptions imposed \citep{witkowski2012robust,radanovic2013,Witkowski_hcomp13,radanovic2016incentives,sub:ec17}.
Most relevant to us is \citep{dasgupta2013crowdsourced,shnayder2016informed} where a correlated agreement (CA) type of mechanism was proposed. CA evaluates a report's correlations with another reference agent - its specific form inspired our peer loss.

\section*{Illustration of our implementation of peer loss}

\begin{figure*}[!h]
\begin{center}
\includegraphics[width=0.9\textwidth]{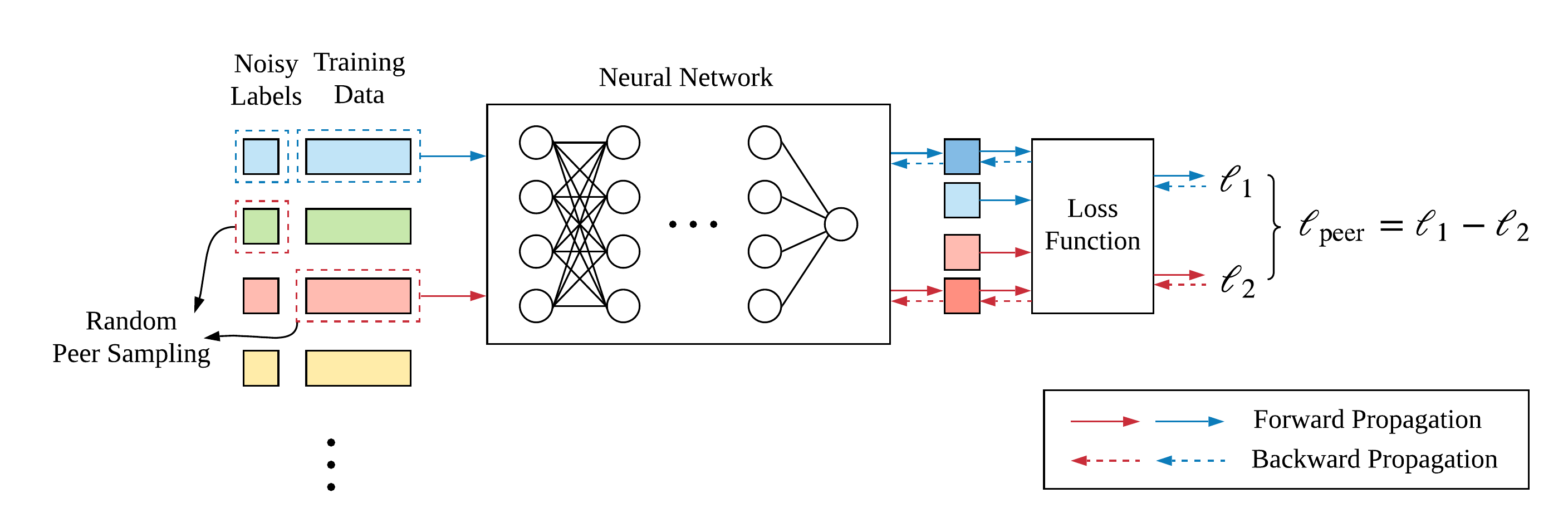}
\caption{Illustration of our peer loss implementation.}
\end{center}
\label{mechanism:ill_plf} 
\end{figure*}
We illustrate our peer loss method in Figure 5. 

\section*{Other peer prediction functions}

 Other notable examples include quadratic and logarithmic scoring function, defined as follows:
\begin{example}
Quadratic scoring function:
\[
S\bigl( r^A, r^B\bigr) := 2{\PP}\bigl (y^B = r^B| y^A = r^A\bigr)  - \sum_{s \in \{-1,+1\}}{\PP}\bigl(y^B = s| y^A = r^A\bigr) ^2,
\]
\end{example}

\begin{example}
Logarithmic scoring function: 
\[
S\bigl( r^A, r^B\bigr)  := \log {\PP}\bigl (y^B = r^B| y^A = r^A\bigr). \]
\end{example}
We know the following is true:
\begin{lemma}[\cite{MRZ:2005}]
$S$ defined in Example 1 \& 2 induce strict truthfulness when $y^A$ and $y^B$ are \emph{stochastically relevant}.
\end{lemma}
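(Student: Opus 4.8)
The plan is to recognize both scoring functions as instances of classical \emph{strictly proper scoring rules}, evaluated on the posterior belief that agent $A$'s \emph{report} induces about agent $B$'s signal, and then to reduce strict truthfulness to strict propriety together with stochastic relevance.

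First I would introduce, for each realization $y$ of $y^A$, the induced posterior $q_y(\cdot):=\PP(y^B=\cdot\mid y^A=y)$ over $\{-1,+1\}$, and observe that the quadratic and logarithmic scoring functions can be rewritten as $S(r^A,r^B)=\mathrm{PS}(q_{r^A},r^B)$, where $\mathrm{PS}$ is, respectively, the Brier/quadratic scoring rule $\mathrm{PS}_Q(p,\omega)=2p_\omega-\sum_s p_s^2$ and the logarithmic scoring rule $\mathrm{PS}_L(p,\omega)=\log p_\omega$. In other words, choosing report $r^A$ is tantamount to ``announcing the distribution $q_{r^A}$'' about $y^B$, which is then scored against the realized value of $y^B$ (which equals $r^B$ when agent $B$ is truthful). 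Second, I would record the strict propriety of these two rules: for any distribution $p$ and any $q\neq p$, $\E_{\omega\sim p}[\mathrm{PS}(p,\omega)]>\E_{\omega\sim p}[\mathrm{PS}(q,\omega)]$. For the quadratic rule this follows from the identity $\E_{\omega\sim p}[\mathrm{PS}_Q(q,\omega)]=2\langle p,q\rangle-\|q\|_2^2=\|p\|_2^2-\|p-q\|_2^2$, which is uniquely maximized at $q=p$. For the logarithmic rule, $\E_{\omega\sim p}[\mathrm{PS}_L(p,\omega)]-\E_{\omega\sim p}[\mathrm{PS}_L(q,\omega)]=\sum_\omega p_\omega\log(p_\omega/q_\omega)=D_{\mathrm{KL}}(p\|q)\geq 0$, with equality iff $p=q$ (and the difference is $+\infty$ when $q$ fails to dominate $p$, which only strengthens the inequality).

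Third, I would fix agent $A$'s true signal $y^A=y$, assume agent $B$ reports truthfully ($r^B=y^B$), and compute the expected score from reporting $r^A=r$:
\[
\E_{y^B}\!\left[S(r,y^B)\mid y^A=y\right]=\sum_{s\in\{-1,+1\}}\PP(y^B=s\mid y^A=y)\,\mathrm{PS}(q_r,s)=\E_{\omega\sim q_y}\!\left[\mathrm{PS}(q_r,\omega)\right].
\]
By strict propriety, this quantity is strictly maximized over $r$ precisely when $q_r=q_y$. Stochastic relevance of $y^B$ for $y^A$ says exactly that $q_r\neq q_y$ for every $r\neq y$, so $r=y$ is the unique maximizer, i.e. $\E_{y^B}[S(y^A,y^B)\mid y^A]>\E_{y^B}[S(r^A,y^B)\mid y^A]$ for all $r^A\neq y^A$. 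Hence truth-telling is a strict best response to truth-telling, which is the definition of $S$ inducing strict truthfulness (a strict Bayesian Nash equilibrium).

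The main obstacle is bookkeeping rather than conceptual: I must verify carefully that $S(r^A,r^B)$ genuinely equals $\mathrm{PS}(q_{r^A},r^B)$ with the probabilities read consistently — posteriors conditioned on the \emph{reported} signal, scored against the \emph{realized} reference signal — and handle the boundary case in the logarithmic rule where some $q_r$ assigns zero mass to an outcome that has positive probability under $q_y$, which I would dispatch by noting the expected score is then $-\infty$ and thus still strictly below the finite truthful score. Everything else reduces to the two short proper-scoring-rule computations above.
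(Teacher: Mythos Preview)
Your proposal is correct. The paper does not actually supply its own proof of this lemma; it is stated as a citation of \citet{MRZ:2005}, so there is no ``paper's proof'' to compare against. Your argument is precisely the classical one underlying that reference: rewrite the peer-prediction score $S(r^A,r^B)$ as a strictly proper scoring rule (Brier, respectively log) applied to the report-induced posterior $q_{r^A}=\PP(y^B=\cdot\mid y^A=r^A)$ and evaluated at the realized peer signal, and then use strict propriety plus stochastic relevance (which, in the binary setting of the paper, is exactly the statement that $q_{+1}\neq q_{-1}$) to conclude that the truthful report is the unique best response. Your handling of the degenerate logarithmic case is also fine. Nothing is missing.
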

with defining stochastic relevance as follows:
\begin{definition}
$y^A$ and $y^B$ are stochastically relevant if
 ~$\exists ~s \in \{-1,+1\}$ s.t. 
$$
 \PP\bigl(y^B=s|y^A=+1\bigr) \neq \PP\bigl(y^B=s|y^A = -1\bigr).
$$
\end{definition}

Similarly we conclude that when $y^A$ and $y^B$ are stochastic relevant, the correlated agreement scoring rule, quadratic scoring rule and logarithmic scoring rule are strictly truthful.

\section*{Proof for Theorem \ref{THM:MAIN}}

\begin{proof}
Note that proving $f^* = \text{argmin}_{f }~ \mathbb E_{(X,\tilde{Y}) \sim \tilde{\mathcal D}}\bigl[-S(f(X),\tilde{Y})\bigr]$ is equivalent with proving $
f^* = \text{argmax}_{f }~ \mathbb E_{(X,\tilde{Y}) \sim \tilde{\mathcal D}}\bigl[S(f(X),\tilde{Y})\bigr].
$ 

First note that the expected score of a classifier over the data distribution further writes as follows:
\begin{align*}
\mathbb E_{\tilde{\mathcal D}}\bigl[S(f(X), \tilde{Y})\bigr]& =  p\cdot    \mathbb E_{\tilde{\mathcal D}|Y=+1}\bigl [S(f(X),\tilde{Y})\bigr] + (1-p)\cdot    \mathbb E_{\tilde{\mathcal D}|Y=-1}\bigl [S(f(X),\tilde{Y}]\bigr).
\end{align*}
$S(\cdot)$ is able to elicit the Bayes optimal classifier $f^*$ using $\tilde{Y}$ implies that 
\begin{align*}
 &p \cdot \mathbb E_{ \tilde{\mathcal D}|Y=+1}\bigl [S(f^*(X), \tilde{Y})\bigr] + (1-p) \cdot       \mathbb E_{\tilde{\mathcal D}|Y=-1}\bigl [S(f^*(X),\tilde{Y})\bigr]  \\
    &> p\cdot \mathbb E_{\tilde{\mathcal D}|Y=+1}\bigl [S\bigl(\R'(X), \tilde{Y}\bigr)\bigr]+(1-p)\cdot \mathbb E_{\tilde{\mathcal D}|Y=-1}\bigl [S(\R'(X),\tilde{Y})\bigr]
    ,~\forall \R' \neq f^*~.
\end{align*}

Denote by $f'$ a sub-optimal classifier that disagrees with $f^*$ on set $\mathcal X^{+}_{\text{dis}|Y=+1} = \{X|Y=+1: f'(X) \neq f^*(X), f^*(X)=+1\}$. Denote $\epsilon^+:={\PP}(X \in \mathcal X^+_{\text{dis}|Y=+1})$. Construct the following reporting strategy for $f^*(X)=+1$: 
  \[ \R'(X) =\left\{
                \begin{array}{ll}
                f^*(X), ~\text{w.p.}~1-\epsilon^+\\
                 -f^*(X), ~\text{w.p.}~\epsilon^+
                \end{array}
              \right.
 \]
 We can similarly construct the above reporting strategy for  $\mathcal X^{-}_{\text{dis}|Y=+1} = \{X|Y=+1: f'(X) \neq f^*(X), f^*(X)=-1\}$ with parameter $\epsilon^-:={\PP}(X \in \mathcal X^-_{\text{dis}|Y=+1})$:
   \[ \R'(X) =\left\{
                \begin{array}{ll}
                f^*(X), ~\text{w.p.}~1-\epsilon^-\\
                 -f^*(X), ~\text{w.p.}~\epsilon^-
                \end{array}
              \right.
 \]
 By definition of sub-optimality of $f'$ in reporting we know that
$\max\{\epsilon^+, \epsilon^-\}>0$, as a zero measure mis-reporting strategy does not affect its optimality. Not hard to check that 
 \begin{align*}
 &\mathbb E_{ \tilde{\mathcal D}|Y=+1}\bigl [S(f'(X),\tilde{Y})\bigr] \\
 &=\PP(f^*(X)=+1|Y=+1) \cdot \mathbb E_{ \tilde{\mathcal D}|f^*(X)=+1,Y=+1}\bigl [S(f'(X),\tilde{Y})\bigr]\\
 &+\PP(f^*(X)=-1|Y=+1) \cdot \mathbb E_{ \tilde{\mathcal D}|f^*(X)=-1,Y=+1}\bigl [S(f'(X),\tilde{Y})\bigr]~.
 \end{align*}
  Each of the two conditional expectation terms further derives:
 \begin{align*}
     &\mathbb E_{ \tilde{\mathcal D}|f^*(X)=+1,Y=+1}\bigl [S(f'(X),\tilde{Y})\bigr] \\
     &=\left(1-{\PP}(X \in \mathcal X^+_{\text{dis}|Y=+1})\right)\cdot \mathbb E_{\tilde{\mathcal D}|f^*(X)=+1,Y=+1,f'(X) = f^*(X)}[S(f'(X)=+1,\tilde{Y})]\\
     &+{\PP}(X \in \mathcal X^+_{\text{dis}|Y=+1}) \cdot \mathbb E_{\tilde{\mathcal D}|f^*(X)=+1,Y=+1,f'(X) \neq  f^*(X)}[S(f'(X)=-1,\tilde{Y})]\\
     &=(1-\epsilon^+)\cdot \mathbb E_{\tilde{\mathcal D}|f^*(X)=+1,Y=+1}[S(f'(X)=+1,\tilde{Y})]\\
     &+\epsilon^+ \cdot \mathbb E_{\tilde{\mathcal D}|f^*(X)=+1,Y=+1}[S(f'(X)=-1,\tilde{Y})]\\
     &=\mathbb E_{\tilde{\mathcal D}|f^*(X)=+1,Y=+1}[S(\R'(X),\tilde{Y})]~.
 \end{align*}
In the second equality above, the dropping of conditions $f'(X) \neq f(X)$ in $\E$ is due to the fact that $\tilde{Y}$ is conditionally independent of $X$ given $Y$. The above argument repeats for $ \mathbb E_{ \tilde{\mathcal D}|f^*(X)=-1,Y=+1}\bigl [S(f'(X),\tilde{Y})\bigr]$. Therefore we conclude
 \[
\mathbb E_{ \tilde{\mathcal D}|Y=+1}\bigl [S(f'(X),\tilde{Y})\bigr]  =   \mathbb E_{\tilde{\mathcal D}|Y=+1}\left [S\left(\R'(X),\tilde{Y}\right)\right]~.
 \]
 
 Yet we have the following fact that
 \begin{align}
\mathbb E_{\tilde{\mathcal D}|Y=+1}\left [S\left(\R'(X),\tilde{Y}\right)\right]&= \PP(f^*(X)=+1|Y=+1) \cdot \biggl((1-\epsilon^+) \cdot \mathbb E_{\tilde{\mathcal D}|Y=+1,f^*(X)=+1}\bigl [S(f^*(X),\tilde{Y})\bigr] \nonumber \\
  &~~~~~+ \epsilon^+ \cdot \mathbb E_{\tilde{\mathcal D}|Y=+1,f^*(X)=+1}\bigl [S(-f^*(X),\tilde{Y})\bigr]\biggr)\nonumber \\
  &+ \PP(f^*(X)=-1|Y=+1) \cdot \biggl((1-\epsilon^-) \cdot \mathbb E_{\tilde{\mathcal D}|Y=+1,f^*(X)=-1}\bigl [S(f^*(X),\tilde{Y})\bigr] \nonumber \\
  &~~~~+ \epsilon^- \cdot \mathbb E_{\tilde{\mathcal D}|Y=+1,f^*(X)=-1}\bigl [S(-f^*(X),\tilde{Y})\bigr]\biggr) \label{eqn:thm1:0}
  \end{align}
  Subtracting and adding $\epsilon^- \cdot  \mathbb E_{\tilde{\mathcal D}|Y=+1,f^*(X)=+1}\bigl [S(f^*(X),\tilde{Y})\bigr]$ and $\epsilon^+ \cdot  \mathbb E_{\tilde{\mathcal D}|Y=+1,f^*(X)=-1}\bigl [S(f^*(X),\tilde{Y})\bigr]$ in above partial sums in $\left(\cdot \right)$, and combining the first $(1-\epsilon^+-\epsilon^-)$ terms, we have 
  \begin{align}
 \text{Eqn. (\ref{eqn:thm1:0})} &= (1-\epsilon^+-\epsilon^-) \cdot \mathbb E_{ \tilde{\mathcal D}|Y=+1}\bigl [S(f^*(X),\tilde{Y})\bigr] \nonumber \\
  &+ \epsilon^+  \cdot\biggl(\PP(f^*(X)=+1|Y=+1) \cdot \mathbb E_{\tilde{\mathcal D}|Y=+1,f^*(X)=+1}\bigl [S(-f^*(X),\tilde{Y})\bigr] \nonumber \\
  &~~~+ \PP(f^*(X)=-1|Y=+1) \cdot  \mathbb E_{\tilde{\mathcal D}|Y=+1,f^*(X)=-1}\bigl [S(f^*(X),\tilde{Y})\bigr] \biggr)  \nonumber \\
  &+ \epsilon^-  \cdot\biggl(\PP(f^*(X)=+1|Y=+1) \cdot \mathbb E_{\tilde{\mathcal D}|Y=+1,f^*(X)=+1}\bigl [S(f^*(X),\tilde{Y})\bigr]\nonumber \\
  &~~~+ \PP(f^*(X)=-1|Y=+1) \cdot  \mathbb E_{\tilde{\mathcal D}|Y=+1,f^*(X)=-1}\bigl [S(-f^*(X),\tilde{Y})\bigr] \biggr) \nonumber \\
  &= (1-\epsilon^+-\epsilon^-) \cdot \mathbb E_{ \tilde{\mathcal D}|Y=+1}\bigl [S(f^*(X),\tilde{Y})\bigr] \nonumber \\
  &~~~~+ \epsilon^+ \cdot  \mathbb E_{ \tilde{\mathcal D}|Y=+1}\bigl [S(\R^\circ  \equiv -1,\tilde{Y})\bigr]\nonumber \\
  &~~~~+\epsilon^-  \cdot \mathbb E_{ \tilde{\mathcal D}|Y=+1}\bigl [S(\R^\circ  \equiv +1,\tilde{Y})\bigr]~.
 \end{align}
 Note that $\mathbb E_{ \tilde{\mathcal D}|Y=+1}\bigl [S(\R^\circ  \equiv -1,\tilde{Y})\bigr]$ corresponds to a reporting strategy that always reports $-1$, and $\mathbb E_{ \tilde{\mathcal D}|Y=+1}\bigl [S(\R^\circ  \equiv +1,\tilde{Y})\bigr]$ is the one to always report $+1$. 


 Similarly for $ \mathbb E_{ \tilde{\mathcal D}|Y=-1}\bigl [S(f'(X),\tilde{Y})\bigr]$: Suppose $f'$ disagrees with $f^*$ on set $\mathcal X^{+}_{\text{dis}|Y=-1} = \{X|Y=-1: f'(X) \neq f^*(X), f^*(X)=+1\}$. 
Note  
 because of the assumption 
$\PP(f(X)\neq f^*(X)|f^*(X)=+1,Y=+1)=\PP(f(X)\neq f^*(X)|f^*(X)=+1,Y=-1)$, in above we have the same $\epsilon^+$ as in the $Y=+1$. Construct the following reporting strategy for $f^*(X)=+1$
  \[ \R'(X) =\left\{
                \begin{array}{ll}
                f^*(X), ~\text{w.p.}~1-\epsilon^+\\
                 -f^*(X), ~\text{w.p.}~\epsilon^+
                \end{array}
              \right.
 \]
 We can similarly construct the above reporting strategy for  $\mathcal X^{-}_{\text{dis}|Y=-1} = \{X|Y=-1: f'(X) \neq f^*(X), f^*(X)=-1\}$ with parameter $\epsilon^-:={\PP}(X \in \mathcal X^-_{\text{dis}|Y=-1})$:
   \[ \R'(X) =\left\{
                \begin{array}{ll}
                f^*(X), ~\text{w.p.}~1-\epsilon^-\\
                 -f^*(X), ~\text{w.p.}~\epsilon^-
                \end{array}
              \right.
 \]
 Similarly we claim that (details not repeated)
  \begin{align*}
 &\mathbb E_{ \tilde{\mathcal D}|Y=-1}\bigl [S(f'(X),\tilde{Y})\bigr] =   \mathbb E_{\tilde{\mathcal D}|Y=-1}\left [S\left(\R'(X),\tilde{Y}\right)\right]~. 
 \end{align*}
Again we note that
  \begin{align}
 \mathbb E_{ \tilde{\mathcal D}|Y=-1}\left [S\left(\R'(X),\tilde{Y}\right)\right]&= \PP(f^*(X)=+1|Y=-1) \cdot \biggl((1-\epsilon^+) \cdot \mathbb E_{\tilde{\mathcal D}|Y=-1,f^*(X)=+1}\bigl [S(f^*(X),\tilde{Y})\bigr] \nonumber \\
  &~~~~~+ \epsilon^+ \cdot \mathbb E_{\tilde{\mathcal D}|Y=-1,f^*(X)=+1}\bigl [S(-f^*(X),\tilde{Y})\bigr]\biggr)\nonumber \\
  &+ \PP(f^*(X)=-1|Y=-1) \cdot \biggl((1-\epsilon^-) \cdot \mathbb E_{\tilde{\mathcal D}|Y=-1,f^*(X)=-1}\bigl [S(f^*(X),\tilde{Y})\bigr] \nonumber \\
  &~~~~+ \epsilon^- \cdot \mathbb E_{\tilde{\mathcal D}|Y=-1,f^*(X)=-1}\bigl [S(-f^*(X),\tilde{Y})\bigr]\biggr)~. \label{eqn:thm1:0}
  \end{align}
 And by further rearranging terms as done above we have (with only difference being replacing $Y=+1$ with $Y=-1$; we omit the details)
 \begin{align*}
     &\mathbb E_{ \tilde{\mathcal D}|Y=-1}\left [S\left (\R'(X),\tilde{Y}\right)\right]\\
       &= (1-\epsilon^+-\epsilon^-) \cdot \mathbb E_{ \tilde{\mathcal D}|Y=-1}\bigl [S(f^*(X),\tilde{Y})\bigr] \nonumber \\
  &~~~~+\epsilon^+  \cdot \mathbb E_{ \tilde{\mathcal D}|Y=-1}\bigl [S(\R^\circ  \equiv -1,\tilde{Y})\bigr]\\
    &~~~~+ \epsilon^- \cdot  \mathbb E_{ \tilde{\mathcal D}|Y=-1}\bigl [S(\R^\circ  \equiv +1,\tilde{Y})\bigr].\nonumber 
 \end{align*}
 Therefore 
 \begin{align*}
& \mathbb E_{ \tilde{\mathcal D}}\bigl [S(f'(X),\tilde{Y})\bigr]\\
&=p\cdot \mathbb E_{ \tilde{\mathcal D}|Y=+1}\bigl [S(f'(X),\tilde{Y})\bigr]+(1-p) \cdot \mathbb E_{ \tilde{\mathcal D}|Y=-1}\bigl [S(f'(X),\tilde{Y})\bigr]\\
     &= p \cdot \mathbb E_{ \tilde{\mathcal D}|Y=+1}\bigl [S(\R'(X),\tilde{Y})\bigr] + (1-p) \cdot \mathbb E_{ \tilde{\mathcal D}|Y=-1}\bigl [S(\R'(X),\tilde{Y})\bigr]\\
     &= p \cdot (1-\epsilon^+-\epsilon^-) \cdot \mathbb E_{ \tilde{\mathcal D}|Y=+1}\bigl [S(f^*(X),\tilde{Y})\bigr] + (1-p) \cdot (1-\epsilon^+-\epsilon^-) \cdot \mathbb E_{ \tilde{\mathcal D}|Y=-1}\bigl [S(f^*(X),\tilde{Y})\bigr] \\
     &+\epsilon^+ \cdot  \left(p\cdot \mathbb E_{ \tilde{\mathcal D}|Y=+1}\bigl [S(\R^\circ  \equiv -1,\tilde{Y})\bigr] + (1-p) \cdot \mathbb E_{ \tilde{\mathcal D}|Y=-1}\bigl [S(\R^\circ  \equiv -1,\tilde{Y})\bigr]\right)\\
     &+\epsilon^- \cdot  \left(p\cdot \mathbb E_{ \tilde{\mathcal D}|Y=+1}\bigl [S(\R^\circ  \equiv +1,\tilde{Y})\bigr] + (1-p) \cdot \mathbb E_{ \tilde{\mathcal D}|Y=-1}\bigl [S(\R^\circ  \equiv +1,\tilde{Y})\bigr]\right)\\
     &=(1-\epsilon^+-\epsilon^-) \cdot \mathbb E_{ \tilde{\mathcal D}}\bigl [S(f^*(X),\tilde{Y})\bigr] + \epsilon^+ \cdot  \mathbb E_{ \tilde{\mathcal D}}\bigl [S(\R^\circ  \equiv -1,\tilde{Y})\bigr]+\epsilon^- \cdot   \mathbb E_{ \tilde{\mathcal D}}\bigl [S(\R^\circ  \equiv +1,\tilde{Y})\bigr]~. 
 \end{align*}
  Due to the truthfulness of $S$ (unless $f^*$ is predicting all $+1$ or all $-1$ labels), 
  \[
  \mathbb E_{ \tilde{\mathcal D}}\bigl [S(\R^\circ  \equiv -1,\tilde{Y})\bigr] <  \mathbb E_{ \tilde{\mathcal D}}\bigl [S(f^*(X),\tilde{Y})\bigr],~~  \mathbb E_{ \tilde{\mathcal D}}\bigl [S(\R^\circ  \equiv +1,\tilde{Y})\bigr] <  \mathbb E_{ \tilde{\mathcal D}}\bigl [S(f^*(X),\tilde{Y})\bigr],~~
  \]
Using the fact that $\max\{\epsilon^+, \epsilon^-\}>0$:
\begin{align}
    (1-\epsilon^+-\epsilon^-) \cdot \mathbb E_{ \tilde{\mathcal D}}\bigl [S(f^*(X),\tilde{Y})\bigr] &+ \epsilon^+ \cdot  \mathbb E_{ \tilde{\mathcal D}}\bigl [S(\R^\circ  \equiv -1,\tilde{Y})\bigr]+\epsilon^- \cdot   \mathbb E_{ \tilde{\mathcal D}}\bigl [S(\R^\circ  \equiv +1,\tilde{Y})\bigr] 
    < \mathbb E_{ \tilde{\mathcal D}}\bigl [S(f^*(X),\tilde{Y})\bigr].
    \label{eqn:pp1}
\end{align}
Therefore we proved the optimality of $f^*$.

\end{proof}

 According to Theorem \ref{THM:MAIN} and Theorem 4.4,  \cite{shnayder2016informed}, minimizing $-S$ defined in CA is going to find the Bayes optimal classifier, if $\tilde{Y}$ and $f^*$ are categorical, which is easily satisfied:
\begin{lemma}\label{LEMMA:CATE}
When $e_{-1}+e_{+1} < 1$, $\tilde{Y}$ and $f^*$ are categorical. 
\end{lemma}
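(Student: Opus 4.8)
The plan is to unpack the definition of "categorical" from Section \ref{sec:ca} and verify it directly using the noise model. Recall $y^B$ is categorical with respect to $y^A$ when $\PP(y^B = y' \mid y^A = y) < \PP(y^B = y')$ for all $y' \neq y$. Here we must check this with $y^A$ playing the role of $f^*(X)$ and $y^B$ playing the role of $\tilde Y$ (or, symmetrically, both orderings, depending on exactly which direction the CA statement in \cite{shnayder2016informed} requires; I would check the direction that Theorem 4.4 there actually uses, and note that in the binary case the two notions coincide up to relabeling). So the concrete goal is: for $y \neq y'$ in $\{-1,+1\}$, show
\[
\PP\bigl(\tilde Y = y' \mid f^*(X) = y\bigr) < \PP\bigl(\tilde Y = y'\bigr).
\]

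First I would observe that, by Lemma \ref{LEMMA:SGN}, the condition $e_{-1}+e_{+1}<1$ already gives $\text{Sgn}(\Delta) = I_{2\times 2}$, i.e. $\Delta_{k,k} > 0$ and $\Delta_{k,l} < 0$ for $k \neq l$, where $\Delta$ is the covariance matrix between $f^*(X)$ and $\tilde Y$. Then I would rewrite the categorical inequality in terms of $\Delta$: since
\[
\PP\bigl(\tilde Y = y' \mid f^*(X) = y\bigr) - \PP\bigl(\tilde Y = y'\bigr)
= \frac{\PP(f^*(X)=y,\tilde Y=y') - \PP(f^*(X)=y)\PP(\tilde Y=y')}{\PP(f^*(X)=y)}
= \frac{\Delta_{y,y'}}{\PP(f^*(X)=y)},
\]
and $y \neq y'$ forces $\Delta_{y,y'} < 0$ by Lemma \ref{LEMMA:SGN}, the desired strict inequality follows provided $\PP(f^*(X)=y) > 0$ for each $y$. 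So the argument reduces almost entirely to Lemma \ref{LEMMA:SGN} plus a positivity check on the marginals of $f^*(X)$.

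The one genuine obstacle is the degenerate case where $f^*(X)$ is constant (always $+1$ or always $-1$), so that one of $\PP(f^*(X)=y)$ is zero and both the conditional probability and $\Delta$ entries degenerate. This is exactly the exception already flagged in the proof of Theorem \ref{THM:MAIN} ("unless $f^*$ is predicting all $+1$ or all $-1$ labels"), so I would handle it by the same convention: either assume $f^*$ is non-degenerate (both classes predicted with positive probability — which holds whenever $p \in (0,1)$ and the Bayes classifier is not trivially constant), or note that a constant classifier is an uninteresting boundary case that can be excluded without loss. Modulo that caveat, the lemma is an immediate corollary of Lemma \ref{LEMMA:SGN}, and I would present it as such: state the reduction to $\Delta_{y,y'}<0$ for $y\neq y'$, invoke Lemma \ref{LEMMA:SGN}, and divide by the (positive) marginal.
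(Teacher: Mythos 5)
Your proof is correct and is essentially the paper's argument: the paper's own proof of this lemma recomputes $\PP(\tilde Y=+1,f^*(X)=+1)-\PP(\tilde Y=+1)\PP(f^*(X)=+1)=\PP(Y=+1)\PP(Y=-1)(1-e_{+1}-e_{-1})(1-e^*_{+1}-e^*_{-1})>0$, which is exactly the content of Lemma \ref{LEMMA:SGN} that you invoke, and then passes between the conditional and product forms of the categorical condition. Your version is marginally tidier in that it factors the computation through Lemma \ref{LEMMA:SGN} and explicitly notes the division by the positive marginal $\PP(f^*(X)=y)$; the only point the paper makes that you leave implicit is that $e^*_{+1}+e^*_{-1}<1$ (needed inside the proof of Lemma \ref{LEMMA:SGN}) is justified by the optimality of $f^*$ --- otherwise flipping its output would yield a better classifier.
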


\begin{proof}
Being categorical means
\[
\PP(\tilde{Y}=-y|f^*(X) = y) < \PP(\tilde{Y}=-y),~y \in \{-1,+1\}
\]
which further implies
\[
\PP(\tilde{Y}=-y, f^*(X) = y) < \PP(\tilde{Y}=-y)\PP(f^*(X)=y),~y \in \{-1,+1\}
\]
and
\[
\PP(\tilde{Y}=y, f^*(X) = y) > \PP(\tilde{Y}=y)\PP(f^*(X)=y),~y \in \{-1,+1\}.
\]
Consider the following fact
\begin{align*}
&\PP(\tilde{Y}=+1,f^*(X)=+1) \\
=&\PP(Y=+1)\PP(\tilde{Y}=+1,f^*(X)=+1|Y=+1) \\
&~~~~+ \PP(Y=-1)\PP(\tilde{Y}=+1,f^*(X)=+1|Y=-1) \\
=&\PP(Y=+1)\PP(\tilde{Y}=+1|f^*(X)=+1,Y=+1)\\
&~~~~\cdot\PP(f^*(X)=+1|Y=+1) \\
+&\PP(Y=-1)\PP(\tilde{Y}=+1|f^*(X)=+1,Y=-1)\\
&\cdot\PP(f^*(X)=+1|Y=-1)
\end{align*}
Since $f^*(X)$ is a function of $X$ and $Y$, due to conditional independence between $\tilde{Y}$ and $X$ (conditional on $Y$) we have
\begin{align*}
&\PP(\tilde{Y}=+1|f^*(X)=+1,Y=+1)=\PP(\tilde{Y}=+1|Y=+1) = 1-e_{+1},\\
&\PP(\tilde{Y}=+1|f^*(X)=+1,Y=-1) = \PP(\tilde{Y}=+1|Y=-1) = e_{-1}
\end{align*}
Therefore
\begin{align*}
&\PP(\tilde{Y}=+1,f^*(X)=+1) = \PP(Y=+1) (1-e_{+1})(1-e^*_{+1}) + \PP(Y=-1)\cdot e_{-1} \cdot e^*_{-1}
\end{align*}
We also have
\begin{align*}
&\PP(\tilde{Y}=+1) = \PP(Y=+1) (1-e_{+1})+\PP(Y=-1) \cdot e_{-1}\\ &\PP(f^*(X)=+1) 
= \PP(Y=+1) (1-e^*_{+1})+\PP(Y=-1) \cdot e^*_{-1} 
\end{align*}
Then we have
\begin{align*}
&\PP(\tilde{Y}=+1,f^*(X)=+1)  - \PP(\tilde{Y}=+1) \PP(f^*(X)=+1) \\
=&\PP(Y=+1)\PP(Y=-1)(1-e_{+1}-e_{-1})(1-e^*_{+1}-e^*_{-1}) \\
>& 0, ~\text{when $1> e^*_{+1}+e^*_{-1}$.}
\end{align*}
$ e^*_{-1}+e^*_{+1} < 1$ means that the Bayes' optimal classifier is at least informative (\citep{sub:ec17}) - if otherwise, we can flip the classifier's output to obtain one, which contradicts the optimality of Bayes optimal classifier.

\end{proof}

\section*{Details for Example \ref{exa:delta}}

First of all, we compute the marginals of $f^*$ and $\tilde Y$:
\begin{align*}
     \PP\left(f^{*}(X)=-1\right)&= \PP\left(f^{*}(X)=-1 | Y=-1\right)  \PP(Y=-1)+ \PP\left(f^{*}(X)=-1 | Y=+1\right)  \PP(Y=+1)\\
    &=\left(1-e_{-1}^{*}\right) \cdot 0.4+e_{+1}^{*} \cdot 0.6=0.5
\end{align*}
And easily
\begin{align*}
    & \PP\left(f^{*}(X)=+1\right)=1- \PP\left(f^{*}(X)=-1\right)=0.5
\end{align*}
For noisy labels:
\begin{align*}
    \PP(\tilde{Y}=-1)&= \PP(\tilde{Y}=-1 | Y=-1)  \PP(Y=-1)+ \PP(\tilde{Y}=-1 | Y=+1)  \PP(Y=+1)\\
    &=\left(1-e_{-1}\right) \cdot 0.4+e_{+1} \cdot 0.6=0.52,
\end{align*}
and
\begin{align*}
     \PP\left(f^{*}(X)=+1\right)=1- \PP\left(f^{*}(X)=-1\right)=0.48
\end{align*}
For the joint distribution,
\begin{align*}
    & \PP\left(f^{*}(X)=-1, Y=-1\right)\\
    =& \PP\left(f^{*}(X)=-1, Y=-1 | Y=-1\right)  \PP(Y=-1)+ \PP\left(f^{*}(X)=-1, Y=-1 | Y=+1\right)\\
    =&\left(1-e_{-1}^{*}\right)\left(1-e_{-1}\right) \cdot 0.4+e_{+1}^{*} \cdot e_{+1} \cdot 0.6=0.296
\end{align*}
\begin{align*}
     \PP\left(f^{*}(X) = -1, \tilde{Y}=+1\right)= \PP\left(f^{*}(X)=-1\right)- \PP\left(f^{*}(X)=-1, Y=-1\right)=0.264\end{align*}
Further, 
\begin{align*}
    &\PP\left(f^{*}(X)=+1, \tilde{Y}=-1\right)=\PP(\tilde{Y}=-1)-\PP\left(f^{*}(X)=-1, Y=-1\right)=0.224\\
    &\PP\left(f^{*}(X)=+1, Y=+1\right)=\PP\left(f^{*}(X)=+1\right)-\PP\left(f^{*}(X)=+1, Y=-1\right)=0.216
\end{align*}
With above, the entries in Delta can be computed easily, for instance
\begin{align*}
    \Delta_{1,1}=\PP\left(f^{*}(X)=-1, Y=-1\right)-\PP\left(f^{*}(X)=-1\right) \cdot \PP(Y=-1)=0.296-0.5 \cdot 0.52=0.036
\end{align*}

\section*{Proof for Lemma \ref{LEMMA:SGN} }

\begin{proof}
Again recall that 
\begin{align*}
&\PP(\R^*=+1, \R=+1) = \PP(Y=+1) (1-e_{+1})(1-e^*_{+1}) + \PP(Y=-1) e_{-1} \cdot e^*_{-1}
\end{align*}
\begin{align*}
&\PP(\R=+1) = \PP(Y=+1) (1-e_{+1})+\PP(Y=-1)\cdot e_{-1}\\ 
&\PP(\R^*=+1) 
= \PP(Y=+1) (1-e^*_{+1})+\PP(Y=-1)\cdot e^*_{-1} \end{align*}
Then we have
\begin{align*}
&\PP(\R^*=+1,\R=+1)  -  \PP(\R^*=+1)\PP(\R=+1)  \\
=&\PP(Y=+1)\PP(Y=-1)(1-e_{+1}-e_{-1})(1-e^*_{+1}-e^*_{-1}) > 0
\end{align*}
when $1-e_{+1}-e_{-1}>0, ~1-e^*_{+1}-e^*_{-1} >0$. Interestingly this coincides with the condition imposed in \citep{natarajan2013learning}. Similarly we can prove that 
\begin{align}
&\PP(\R^*=+1, \R=-1)  - \PP(\R^*=+1) \PP(\R=-1) \nonumber  \\
=&-\PP(Y=+1)\PP(Y=-1)(1-e_{+1}-e_{-1})(1-e^*_{+1}-e^*_{-1}) <0\label{eqn:cross} 
\end{align}
The other entries for $\PP(\R^*=-1,\R=-1)  -  \PP(\R^*=-1)\PP(\R=-1)$ and $\PP(\R^*=-1,\R=+1)  -  \PP(\R^*=-1)\PP(\R=+1)$ are symmetric.
Therefore the sign matrix of above score matrix is exactly the diagonal matrix.
\end{proof}

\section*{Proof for Lemma \ref{LEM:AFFINE}}
\begin{proof}
We denote by $X_{n_1},  \tilde{Y}_{n_2}$ the random variable corresponding to the peer samples $x_{n_1}, \tilde{y}_{n_2}$. 

First we have
\begin{align}
  \E[\ell_\text{peer}(f(X), \tilde{Y})] = \E[\ell(f(X), \tilde{Y})] - \mathbb E[\ell(f(X_{n_1}), \tilde{Y}_{n_2})] \label{eqn:pl0}  
\end{align}
Consider the two terms on the RHS separately.
\begin{align}
 & \mathbb E[\ell(f(X), \tilde{Y})]\nonumber \\
=& \mathbb E_{X,Y=-1} \bigl [\mathbb P(\tilde{Y}=-1 |X, Y=-1) \cdot \ell(f(X), -1) +
                       \mathbb P(\tilde{Y}=+1 | X, Y=-1) \cdot \ell(f(X), +1)\bigr]\nonumber\\
 &+\mathbb E_{X,Y=+1} \bigl [\mathbb P(\tilde{Y}=+1 |X, Y=+1) \cdot \ell(f(X), +1) +
                       \mathbb P(\tilde{Y}=-1 |X, Y=+1) \cdot \ell(f(X), -1)\bigr]\nonumber\\
=& \mathbb E_{X,Y=-1} \bigl [\mathbb P(\tilde{Y}=-1 | Y=-1) \cdot \ell(f(X), -1) +
                       \mathbb P(\tilde{Y}=+1 | Y=-1) \cdot \ell(f(X), +1)\bigr]\nonumber\\
 &+\mathbb E_{X,Y=+1} \bigl [\mathbb P(\tilde{Y}=+1 | Y=+1) \cdot \ell(f(X), +1) +
                       \mathbb P(\tilde{Y}=-1 | Y=+1) \cdot \ell(f(X), -1)\bigr]\tag{Independence between $\tilde{Y}$ and $X$ given $Y$}\nonumber\\
=& \mathbb E_{X,Y=-1} \bigl [(1-e_{-1})  \ell(f(X), -1) + e_{-1}  \ell(f(X), +1)\bigr]\nonumber\\
 &+\mathbb E_{X,Y=+1} \bigl [(1-e_{+1}) \ell(f(X), +1) + e_{+1}  \ell(f(X), -1)\bigr ] \label{eqn:pl1}
\end{align}
 The above is done mostly via law of total probability and using the assumption that $\tilde{Y}$ is conditionally (on $Y$) independent of $X$. Subtracting and adding $ e_{+1}\cdot\ell(f(X), -1) $ and $e_{-1}\cdot\ell(f(X), +1)$ to the two expectation terms separately we have
 \begin{align*}
\text{Eqn. (\ref{eqn:pl1})}=& \mathbb E_{X,Y=-1} \bigl [(1-e_{-1}-e_{+1}) \cdot \ell(f(X), -1) + e_{+1}\cdot\ell(f(X), -1) 
                       + e_{-1} \cdot \ell(f(X), +1)\bigr] \\
 &+\mathbb E_{X,Y=+1} \bigl [(1-e_{-1}-e_{+1}) \cdot \ell(f(X), +1) + e_{-1}\cdot\ell(f(X), +1)
                       + e_{+1} \cdot \ell(f(X), -1)\bigr]\\
=& (1-e_{-1}-e_{+1}) \cdot \mathbb E_{X,Y} \bigl[\ell(f(X), Y) \bigr] + 
   \mathbb E_X \bigl [e_{+1} \cdot \ell (f(X), -1) + e_{-1} \cdot \ell(f(X), +1)\bigr]
\end{align*}
And consider the second term:
\begin{align*}
 &\mathbb E[\ell(f(X_{n_1}), \tilde{Y}_{n_2})]\\
=& \mathbb E_X [\ell(f(X), -1)] \cdot \mathbb P(\tilde{Y}=-1) + 
   \mathbb E_X [\ell(f(X), +1)] \cdot \mathbb P(\tilde{Y}=+1) \tag{Independence between $n_1$ and $n_2$} \\
=& \mathbb E_X \bigl[ (e_{+1} \cdot p + (1-e_{-1})(1-p)) \cdot \ell(f(X), -1) \\
&+
                  \left((1-e_{+1}) p + e_{-1}(1-p)\right) \cdot \ell(f(X), +1)\bigr]\tag{Expressing $\PP(\tilde{Y})$ using $p$ and $e_{+1}, e_{-1}$}\\
=& \mathbb E_X \bigl[ (1-e_{-1}-e_{+1})(1-p) \cdot \ell(f(X), -1) \\
&+                (1-e_{-1}-e_{+1})p \cdot \ell(f(X), +1) \bigr]\\
 &+\mathbb E_X \bigl[ (e_{+1} \cdot p + e_{+1} (1-p)) \cdot \ell(f(X), -1) \\
 &+ 
                 (e_{-1} (1-p) + e_{-1} p) \cdot \ell(f(X), +1)\bigr]\\
=& (1-e_{-1}-e_{+1}) \cdot \mathbb E[\ell(f(X_{n_1}), Y_{n_2})] \\
&+ 
   \mathbb E_X\bigl[e_{+1} \cdot \ell(f(X), -1) + e_{-1} \cdot \ell (f(X), +1)\bigr]
\end{align*}
Subtracting the first and second term on RHS of Eqn. (\ref{eqn:pl0}):
\begin{align}
\mathbb E[\ell_\text{peer}(f(X), \tilde{Y})] 
=&\E[\ell(f(X), \tilde{Y})] - \mathbb E[\ell(f(X_{n_1}), \tilde{Y}_{n_2})]\nonumber \\
=& (1-e_{-1}-e_{+1}) \cdot \mathbb E[\ell_\text{peer}(f(X), Y)]
\end{align}
\end{proof}

\paragraph{Multi-class extension for 0-1 loss}
\begin{proof}

\rev{

 We denote by $Q$ a transition matrix that characterizes the relationships between noisy label $\tilde{Y}$ and the true label $Y$. The $(i,j)$ entry of $Q$ is defined as $Q_{ij} = \PP(\tilde{Y} = j| Y=i)$. We write $Q_{ij} = q_{ij}$.

 Consider the following case: suppose the noisy labels have the same probability of flipping to a specific wrong class, 
 that is, we pose the following conditions: 
$q_{ij} = q_{kj}$, for all $j\ne k\ne i$. 
This condition allows us to define $K$ new quantities:
\begin{align}
 e_j = q_{ij} ~~~\text{for all}~~i\ne j,~~~ q_{ii} = 1 - \sum_{j\ne i} e_j\label{eqn:qii}
\end{align}
Note that this condition is easily satisfied for the binary case since there is only one other class to be flipped to wrongly.

We show that $M(\cdot)$ is a diagonal matrix when $\sum_{j=1}^K e_{j} < 1$, a similar condition as $e_{-1} + e_{+1} < 1$. 

Notice the following facts:
\[
 \mathbb E[\BR(f(X), \tilde{Y})]  - \mathbb E[\BR(f(X_{n_1}), \tilde{Y}_{n_2})] =\PP(f(X) \neq \tilde{Y})- \PP(f(X_{n_1}) \neq \tilde{Y}_{n_2}) = - \PP(f(X) = \tilde{Y}) + \PP(f(X_{n_1}) = \tilde{Y}_{n_2})
\]
and using Eqn. \ref{eqn:qii} we have
\[
\sum_{k=1}^K\PP(Y=k) \cdot q_{kj} = \PP(Y=j) \left(1-\sum_{k\neq j} e_k\right) + (1-\PP(Y=j)) e_j = \left(1-\sum_k e_k\right) \PP(Y=j) + e_j
\]
Then
\begin{align*}
&\PP(f(X) = \tilde{Y}) \\
=& \sum_{k=1}^K \PP(Y=k) \sum_{j=1}^K \PP(f(X)=j, \tilde{Y}=j|Y=k) \\
=& \sum_{k=1}^K \PP(Y=k) \sum_{j=1}^K \PP(f(X)=j|Y=k) \cdot \PP(\tilde{Y}=j|Y=k) \tag{Conditional independence}\\
=& \sum_{k=1}^K \PP(Y=k) \sum_{j=1}^K \PP(f(X)=j|Y=k) \cdot q_{kj}\\
=&\sum_{j=1}^K\sum_{k=1}^K \PP(f(X)=j|Y=k)\PP(Y=k) \cdot q_{kj}\\
=&\sum_{j=1}^K \PP(f(X)=j|Y=j)\PP(Y=j)\left(1-\sum_{k\neq j}e_k\right) + \sum_{j=1}^K\sum_{k\neq j}\PP(f(X)=j|Y=k)\PP(Y=k) e_j \tag{Eqn. \ref{eqn:qii}}\\
=&\sum_{j=1}^K \PP(f(X)=j|Y=j)\PP(Y=j)\left(1-\sum_{k\neq j}e_k\right)+ \sum_{j=1}^K e_j \cdot\left(\PP(f(X) = j)-\PP(f(X)=j|Y=j)\PP(Y=j)\right)\\
=&\left(1-\sum_{k} e_k\right) \sum_{j=1}^K \PP(f(X)=j|Y=j)\PP(Y=j) + \sum_{j=1}^K e_j \cdot \PP(f(X) = j)
 \end{align*}
 Now consider the following 
 \begin{align*}
     &\PP(f(X_{n_1}) = \tilde{Y}_{n_2})\\
     =&\sum_{j=1}^K \PP(f(X)=j) \PP(\tilde{Y} = j)\\
     =&\sum_{j=1}^K \PP(f(X)=j)  \sum_{k=1}^K\PP(Y=k)\cdot q_{kj}\\
     =&\sum_{j=1}^K \PP(f(X)=j)\left((1-\sum_k e_k) \PP(Y=j) + e_j\right) \tag{Eqn. (\ref{eqn:2nd})}
 \end{align*}
Therefore 
\begin{align*}
 &\mathbb E[\BR(f(X), \tilde{Y})]  - \mathbb E[\BR(f(X_{n_1}), \tilde{Y}_{n_2})]\\
 =& -\PP(f(X) = \tilde{Y})+\PP(f(X_{n_1}) = \tilde{Y}_{n_2}) \\
 =&- \left(1-\sum_k e_k\right) \sum_{j=1}^K\left(\PP(f(X)=j|Y=j)\PP(Y=j)-\PP(f(X)=j)\PP(Y=j) \right)\\
 =& \left(1-\sum_k e_k\right)\sum_{j=1}^K \PP(Y=j) \left( \PP(f(X) = j) - \PP(f(X) = j|Y=j) \right)
\end{align*}
For clean distribution we have
\begin{align*}
  \mathbb E[\BR(f(X), Y)] &= \PP(f(X) \neq Y) = 1- \sum_{j=1}^K  \PP(f(X) = j|Y=j)\PP(Y=j) 
 \end{align*}
For the second term above we have
\begin{align*}
    \mathbb E[\BR(f(X_{n_1}), Y_{n_2})] = \PP(X_{n_1}) \neq Y_{n_2}) = 1-\sum_{j=1}^K \PP(f(X) = j)\PP(Y=j) 
\end{align*}
Therefore 
\begin{align*}
    &\mathbb E[\BR(f(X), Y)] -     \mathbb E[\BR(f(X_{n_1}), Y_{n_2})]  \\
    =& \sum_{j=1}^K \PP(Y=j) \left(  \PP(f(X) = j)- \PP(f(X) = j|Y=j)\right)
\end{align*}
and the above concludes 
\[
\mathbb E[\BR(f(X), \tilde{Y})]  - \mathbb E[\BR(f(X_{n_1}), \tilde{Y}_{n_2})] = \left(1-\sum_k e_k\right)(\mathbb E[\BR(f(X), Y)] -     \mathbb E[\BR(f(X_{n_1}), Y_{n_2})] ),
\]
where the RHS above is the peer loss computed on the clean distribution. 

Again when we have balanced label distribution that $\PP(Y=j) = 1/K$, 
\begin{align}
\sum_{j=1}^K \PP(Y=j) \left( \PP(f(X) = j)-\PP(f(X) = j|Y=j) \right) =\frac{1}{K} - \underbrace{\sum_{j=1}^K \PP(Y=j)  \PP(f(X) = j|Y=j)}_{\text{Accuracy = 1 - 0-1 risk}}  \label{eqn:2nd}
\end{align}
Therefore minimizing peer loss on the clean distribution returns the same minimizer as for the true and clean 0-1 risk.}
\end{proof}

\section*{Proof for Theorem \ref{THM:EQUAL}}
\begin{proof}
From Lemma \ref{LEM:AFFINE} we know
\begin{align*}
&\mathbb E[\ell_\text{peer}(f(X), \tilde{Y})]\\
=& (1-e_{-1}-e_{+1}) \cdot \mathbb E[\ell_\text{peer}(f(X), Y)] \tag{Lemma \ref{LEM:AFFINE}}\\
=& (1-e_{-1}-e_{+1}) \cdot \biggl(\mathbb E[\ell(f(X), Y)] - \mathbb E[\ell(f(X_{n_1}), Y_{n_2})]\biggr) \\
=& (1-e_{-1}-e_{+1}) \cdot  \mathbb E\biggl[\ell(f(X), Y)]
   -0.5 \cdot \mathbb E_X [\ell(f(X), -1)]
   -0.5 \cdot \mathbb E_X [\ell(f(X), +1)]\biggr) \tag{Independence between $n_1$ and $n_2$, and equal prior}
   \end{align*}
  When $\ell$ is the 0-1 loss we have $\ell(f(X), -1)+\ell(f(X), +1)=1, \forall x$, and therefore
 \begin{align*}
\mathbb E[\ell_\text{peer}(f(X), \tilde{Y})] = (1-e_{-1}-e_{+1}) \cdot \biggl(\mathbb E[\ell(f(X), Y)] -1 \biggr)
\end{align*}
With above we proved $\tilde{f}^*_{\BPeer} \in \argmin_{f \in \F} R_{\mathcal D}(f)$.
\end{proof}
\section*{Proof for Theorem \ref{THM:pneq}}

\begin{proof}
Apply Lemma \ref{LEM:AFFINE} we know
\begin{align*}
    \E[\lPeer(f(X),\tilde{Y})] = (1-e_{+1}-e_{-1}) \E[\lPeer(f(X),Y)],
\end{align*}
Denote by $f^*_{\mathcal F} \in \argmin_{f \in \mathcal F} R_{\mathcal D}(f)$. From the optimality of $\tilde{f}^*_{\BPeer}$ we have
\[
 \E[\lPeer(\tilde{f}^*_{\BPeer}(X),\tilde{Y})] \leq  \E[\lPeer(f^*_{\mathcal F}(X),\tilde{Y})] \Leftrightarrow  \E[\lPeer(\tilde{f}^*_{\BPeer}(X),Y)] \leq  \E[\lPeer(f^*_{\mathcal F}(X),Y)]
\]
i.e., 
\begin{align}
  & R_{\mathcal D}(\tilde{f}^*_{\BPeer}) - p \cdot \mathbb E_X [\ell(\tilde{f}^*_{\BPeer}(X), +1)]
-(1-p)  \cdot \mathbb E_X [\ell(\tilde{f}^*_{\BPeer}(X), -1)]  \nonumber \\
& \leq   R_{\mathcal D}(f^*_{\mathcal F})-p  \cdot \mathbb E_X [\ell(f^*_{\mathcal F}(X), +1)]
-(1-p)  \cdot \mathbb E_X [\ell(f^*_{\mathcal F}(X), -1)], \label{eqn:pneq1}
\end{align}
Note $\forall f$:
\begin{align}
&\biggl |p  \cdot \mathbb E_X [\ell(f(X), +1)]
+ (1-p )  \cdot \mathbb E_X [\ell(f(X), -1)]\nonumber \\
&~~~-0.5 \cdot \mathbb E_X [\ell(f(X), +1)]-0.5 \cdot \mathbb E_X [\ell(f(X), -1)] \biggr | \nonumber \\
 =& |p -0.5|  \cdot \bigl  |\mathbb E_X [\ell(f(X), +1)] - \mathbb E_X [\ell(f(X), -1)]\bigr | \nonumber \\
 \leq & |p -0.5|. 
 \label{eqn:pneg2}
\end{align}
Then we have
\begin{align*}
& R_{\mathcal D}(\tilde{f}^*_{\BPeer}) - 0.5\\
&= R_{\mathcal D}(\tilde{f}^*_{\BPeer}) - 0.5 \cdot \mathbb E_X [\ell(\tilde{f}^*_{\BPeer}(X), +1)]-0.5 \cdot \mathbb E_X [\ell(\tilde{f}^*_{\BPeer}(X), -1)] \tag{$\ell(\tilde{f}^*_{\BPeer}(X), +1)+ \ell(\tilde{f}^*_{\BPeer}(X), -1)=1$}\\
&\leq   R_{\mathcal D}(\tilde{f}^*_{\BPeer}) - p \cdot \mathbb E_X [\ell(\tilde{f}^*_{\BPeer}(X), +1)]-(1-p)  \cdot \mathbb E_X [\ell(\tilde{f}^*_{\BPeer}(X), -1)]  + |p -0.5|   \tag{Eqn. (\ref{eqn:pneg2})}\\
&\leq  R_{\mathcal D}(f^*_{\mathcal F})-p \cdot \mathbb E_X [\ell(f^*_{\mathcal F}(X), +1)]
-(1-p)  \cdot \mathbb E_X [\ell(f^*_{\mathcal F}(X), -1)] + |p -0.5|  
\tag{Eqn. (\ref{eqn:pneq1})}\\
&\leq  R_{\mathcal D}(f^*_{\mathcal F})-0.5  \cdot \mathbb E_X [\ell(f^*_{\mathcal F}(X), +1)]
-0.5 \cdot \mathbb E_X [\ell(f^*_{\mathcal F}(X), -1)] + 2|p -0.5|   
\tag{Eqn. (\ref{eqn:pneg2})}\\
&=  R_{\mathcal D}(f^*_{\mathcal F}) - 0.5 + 2|p -0.5| 
\tag{$\ell(f^*_{\mathcal F}(X), +1)+\ell(f^*_{\mathcal F}(X), -1)=1$}
\end{align*}
Therefore
\[
R_{\mathcal D}(\tilde{f}^*_{\BPeer}) -R_{\mathcal D}(f^*_{\mathcal F}) \leq 2|p -0.5| = |\delta_p|.
\]
\end{proof}

\section*{Proof for Lemma \ref{LEM:alpha1}}

\begin{proof}

\begin{align}
 & \mathbb E[\BR_\text{peer}(f(X), \tilde{Y})] \nonumber \\
=& (1-e_{-1}-e_{+1}) \cdot \mathbb E[\ell_\text{peer}(f(X), Y)] \tag{Lemma \ref{LEM:AFFINE}}\\
=& (1-e_{-1}-e_{+1}) \cdot (\PP(f(X) \neq Y) - \PP(f(X_{n_1}) \neq Y_{n_2}))\nonumber \\
=& (1-e_{-1}-e_{+1}) \cdot ( \mathbb P(f(X)=-1, Y=+1) + \mathbb P(f(X)=+1, Y=-1) \nonumber \\
                     &~~~~-\mathbb P(f(X)=-1) \mathbb P(Y=+1) - \mathbb P(f(X)=+1) \mathbb P(Y=-1))\tag{Independence between $n_1$ and $n_2$} \\
=& (1-e_{-1}-e_{+1}) \cdot ( p \cdot R_{+1} + (1-p) \cdot R_{-1}\nonumber  \\
&~~~~- p \cdot \mathbb P(f(X)=-1) 
                                                 - (1-p) \cdot \mathbb P(f(X)=+1)) \tag{Law of total probability}\\
=& (1-e_{-1}-e_{+1}) \cdot ( p \cdot R_{+1} + (1-p) \cdot R_{-1} \nonumber \\
&~~~~- p \cdot \bigl (p\cdot R_{+1} + (1-p) \cdot (1-R_{-1})\bigr) 
                                                 - (1-p) \cdot \bigl (p\cdot (1-R_{+1}) + (1-p)\cdot R_{-1}) \bigr)\nonumber \\
=& 2 (1-e_{-1}-e_{+1}) \cdot p(1-p) \cdot (R_{-1}+R_{+1}-1) \label{eqn:lemma3}
\end{align}
Again since $ \mathbb E[\BR_\text{peer}(f(X), \tilde{Y})]$ is an affine transform of $R_{-1}+R_{+1}$ we conclude the proof.
\end{proof}

\section*{Proof for Theorem \ref{THM:weightedPeer}}

\begin{proof}

\begin{align*}
 & \mathbb E[\AlphaPeer(f(X), \tilde{Y})]  \\
=& \mathbb E[\BR(f(X), \tilde{Y})] - \alpha \cdot \mathbb E[\BR(f(X_{n_1}), \tilde{Y}_{n_2})] \\
=& \mathbb E[\BR_\text{peer} (f(X), \tilde{Y})] + (1-\alpha) \cdot \mathbb E[\BR(f(X_{n_1}), \tilde{Y}_{n_2})] \tag{Subtracting and adding $\mathbb E[\BR(f(X_{n_1}), \tilde{Y}_{n_2})]$ to the first and second term}\\
=& \mathbb E[\BR_\text{peer} (f(X), \tilde{Y})] +(1- \alpha) \cdot \PP(f(X_{n_1}) \neq \tilde{Y}_{n_2})\\
=& \mathbb E[\BR_\text{peer} (f(X), \tilde{Y})] + 
             (1-\alpha) \cdot \bigl(\mathbb P (f(X)=+1) \cdot \mathbb P(\tilde{Y}=-1) + 
                               \mathbb P (f(X)=-1) \cdot \mathbb P(\tilde{Y}=+1)\bigr)  
\end{align*}
Again the last equality is due to the independence between $n_1$ and $n_2$. Replace $\mathbb P (f(X)=+1)$ and $\mathbb P (f(X)=-1)$ as functions of $p, R_{+1}, R_{-1}$:
\begin{align*}
    &\mathbb P (f(X)=+1) = p\cdot (1-R_{+1}) + (1-p)\cdot R_{-1}\\
    &\mathbb P (f(X)=-1) = p \cdot R_{+1} + (1-p)(1-R_{-1}),
\end{align*}
we further have
\begin{align*}
 \mathbb E[\AlphaPeer(f(X), \tilde{Y})] =& \mathbb E[\BR_\text{peer} (f(X), \tilde{Y})] + 
   (1-\alpha) \cdot \biggl (\bigl (p\cdot (1-R_{+1}) + (1-p)\cdot R_{-1}\bigr) \cdot \mathbb P(\tilde{Y}=-1)\\ &+ 
                     \bigl(p \cdot R_{+1} + (1-p)(1-R_{-1})\bigr) \cdot \mathbb P(\tilde{Y}=+1)\biggr) \\
=& \mathbb E[\BR_\text{peer} (f(X), \tilde{Y})] + 
   (1-\alpha) \cdot (\mathbb P(\tilde{Y}=+1) - \mathbb P(\tilde{Y}=-1)) \cdot (p\cdot R_{+1}-(1-p)\cdot R_{-1}) + C \tag{$C$ is a constant: $C = (1-\alpha) \cdot \left((1-p)\cdot\mathbb P(\tilde{Y}=+1) + p\cdot\mathbb P(\tilde{Y}=-1)\right)$}\\
=& 2 (1-e_{-1}-e_{+1}) \cdot p(1-p) \cdot (R_{-1}+R_{+1}-1) \tag{Eqn. (\ref{eqn:lemma3}), Proof of Lemma \ref{LEM:alpha1}}\\
 & + (1-\alpha) \cdot (\mathbb P(\tilde{Y}=+1) - \mathbb P(\tilde{Y}=-1)) \cdot \bigl(p \cdot R_{+1}-(1-p) \cdot R_{-1}\bigr) + C \\
=& R_{+1} \cdot \biggl(2(1-e_{-1}-e_{+1}) \cdot p(1-p) + (1-\alpha) p \cdot (\mathbb P(\tilde{Y}=+1) - \mathbb P(\tilde{Y}=-1))\biggr)\\
 &+R_{-1} \cdot \biggl(2(1-e_{-1}-e_{+1}) \cdot p(1-p) - (1-\alpha) (1-p) \cdot (\mathbb P(\tilde{Y}=+1) - \mathbb P(\tilde{Y}=-1))\biggr) + C',
\end{align*}
where $C'$ is a constant:
\begin{align*}
C' = C - 2 (1-e_{-1}-e_{+1}) \cdot p(1-p)
\end{align*}

Let 
\[
\frac{p}{1-p} = 
\frac{2(1-e_{-1}-e_{+1}) \cdot p(1-p) + (1-\alpha) \cdot p \cdot (\mathbb P(\tilde{Y}=+1) - \mathbb P(\tilde{Y}=-1))}
     {2(1-e_{-1}-e_{+1}) \cdot p(1-p) - (1-\alpha) \cdot(1-p)\cdot (\mathbb P(\tilde{Y}=+1) - \mathbb P(\tilde{Y}=-1))}.
\] 
that
\[
\alpha
= 1 - (1-e_{-1}-e_{+1})\cdot\frac{\delta_p}{\delta_{\tilde{p}}}.
\]
we obtain that
\begin{align}
     \mathbb E[\AlphaPeer(f(X), \tilde{Y})] \propto (1-e_{-1}-e_{+1})\mathbb E[\BR(f(X), Y)] + \text{const.},\label{eqn:alphapeer}
\end{align}

concluding our proof. The last equation Eqn.(\ref{eqn:alphapeer}) also implies the following proposition: 
\begin{proposition}\label{prop:alpha}
For any $f,f'$, we have
$$ \mathbb E_{\tilde{\mathcal D}}[\AlphaPeer(f(X), \tilde{Y})] -  \mathbb E_{\tilde{\mathcal D}}[\AlphaPeer(f'(X), \tilde{Y})] \propto (1-e_{-1}-e_{+1})\bigl(\mathbb E[\BR(f(X), Y)]-\mathbb E[\BR(f'(X), Y)]\bigr).$$
\end{proposition}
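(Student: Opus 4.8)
The plan is to read the proposition straight off Eqn.~(\ref{eqn:alphapeer}), which was just established (for $\alpha=\alpha^*$) inside the proof of Theorem~\ref{THM:weightedPeer}. That computation produced, for \emph{every} classifier $f$, an identity of the shape
\begin{align*}
\mathbb E_{\tilde{\mathcal D}}[\AlphaPeer(f(X),\tilde{Y})] = c\,(1-e_{-1}-e_{+1})\,\mathbb E[\BR(f(X),Y)] + C',
\end{align*}
where the multiplicative factor $c>0$ and the additive term $C'$ are assembled from the quantities appearing in Eqn.~(\ref{eqn:lemma3}) and in the constant $C = (1-\alpha)\bigl((1-p)\,\PP(\tilde{Y}=+1)+p\,\PP(\tilde{Y}=-1)\bigr)$ defined there; crucially, neither $c$ nor $C'$ depends on the classifier $f$. (Substituting $\alpha=\alpha^*$ in fact collapses the coefficients of $R_{+1}(f)$ and of $R_{-1}(f)$ to $p(1-e_{-1}-e_{+1})$ and $(1-p)(1-e_{-1}-e_{+1})$, so that $c=1$ and $\mathbb E_{\tilde{\mathcal D}}[\AlphaPeer(f(X),\tilde{Y})] = (1-e_{-1}-e_{+1})\,R_{\mathcal D}(f)+C'$; but only the positivity and the $f$-independence of $c$ and $C'$ are needed below.)

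First I would instantiate this identity twice, once at $f$ and once at $f'$, using the \emph{same} $c$ and the \emph{same} $C'$ in both --- legitimate because these are functions of $p$ and the noise rates $e_{+1},e_{-1}$ alone. Subtracting the two instances, the additive constant $C'$ cancels and we are left with
\begin{align*}
\mathbb E_{\tilde{\mathcal D}}[\AlphaPeer(f(X),\tilde{Y})] - \mathbb E_{\tilde{\mathcal D}}[\AlphaPeer(f'(X),\tilde{Y})] = c\,(1-e_{-1}-e_{+1})\bigl(\mathbb E[\BR(f(X),Y)] - \mathbb E[\BR(f'(X),Y)]\bigr).
\end{align*}
Since $c>0$, the right-hand side is a positive multiple of $(1-e_{-1}-e_{+1})\bigl(\mathbb E[\BR(f(X),Y)] - \mathbb E[\BR(f'(X),Y)]\bigr)$, which is exactly the asserted proportionality.

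There is essentially no obstacle here: the substantive work --- expressing $\mathbb E_{\tilde{\mathcal D}}[\AlphaPeer(f(X),\tilde{Y})]$ as an affine function of the pair $\bigl(R_{+1}(f),R_{-1}(f)\bigr)$ through Lemma~\ref{LEM:AFFINE} and Eqn.~(\ref{eqn:lemma3}), and then picking $\alpha^*$ so that its linear part is proportional to $R_{\mathcal D}(f)=p\,R_{+1}(f)+(1-p)\,R_{-1}(f)$ --- has already been carried out in the proof of Theorem~\ref{THM:weightedPeer}. The single point that deserves an explicit remark is that the additive term is classifier-free, so that it drops out of the difference; this is immediate from the closed form of $C$ (and hence of $C'$) above, which involves only $\alpha$, $p$, and the noisy-label marginals $\PP(\tilde{Y}=\pm 1)$.
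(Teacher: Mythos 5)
Your proposal is correct and follows exactly the paper's route: the paper derives Eqn.~(\ref{eqn:alphapeer}) in the proof of Theorem~\ref{THM:weightedPeer}, with a multiplicative factor and an additive constant $C'$ that depend only on $p$, $e_{\pm 1}$, and $\alpha$ (not on the classifier), and then obtains the proposition by taking the difference of that identity at $f$ and $f'$ so that $C'$ cancels. You correctly flag the one point worth making explicit --- that the additive term is classifier-free --- and your observation that the choice $\alpha=\alpha^*$ collapses the coefficients of $R_{+1}$ and $R_{-1}$ to $p(1-e_{-1}-e_{+1})$ and $(1-p)(1-e_{-1}-e_{+1})$ matches the paper's computation.
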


\noindent $\mathbf{e_{+1} = e_{-1} = e}$: ~~When $e_{+1} = e_{-1} = e$, we have
\begin{align*}
    &\mathbb P(\tilde{Y}=+1) - \mathbb P(\tilde{Y}=-1)\\
    =& p \cdot \PP(\tilde{Y}=+1|Y=+1) + (1-p)  \cdot \PP(\tilde{Y}=+1|Y=-1) \\
    &- p \cdot \PP(\tilde{Y}=-1|Y=+1) - (1-p)  \cdot \PP(\tilde{Y}=-1|Y=-1)\\
    =& p \cdot (1-e) + (1-p) \cdot e- p \cdot e - (1-p)\cdot (1-e)\\
    =& p (1-2e) - (1-p) (1-2e)\\
    =&(p-(1-p)) \cdot (1-2e).
\end{align*}
That is 
\[
(1-e_{+1}-e_{-1})\frac{\delta_p}{\delta_{\tilde{p}}} = (1-2e) \cdot \frac{p-(1-p)}{\mathbb P(\tilde{Y}=+1) - \mathbb P(\tilde{Y}=-1)} = 1
\]
Therefore $\alpha=0$.
\end{proof}

\section*{Proof for Theorem \ref{THM:converge1}}
\begin{proof}

$\forall f$, using Hoeffding's inequality with probability at least $1-\delta$ 
\begin{align}
&\left|\hat{R}_{\AlphaPeer,\tilde{D}}(f) - R_{\AlphaPeer,\tilde{\mathcal D}}(f)\right|\nonumber \\
\leq & \sqrt{\frac{\log 2/\delta}{2N}}\left(\overline{\BR_{\alpha-\text{peer}}}-\underline{\BR_{\alpha-\text{peer}}}\right)\tag{$\overline{\BR_{\alpha-\text{peer}}} = 1, \underline{\BR_{\alpha-\text{peer}}} = -\alpha$}\nonumber\\
= &(1+\alpha) \sqrt{\frac{\log 2/\delta}{2N}}, \label{eqn:sample:1}
\end{align}
where $\overline{\BR_{\alpha-\text{peer}}} = 1, \underline{\BR_{\alpha-\text{peer}}} = -\alpha$ denote the upper and lower bound on $\BR_{\alpha-\text{peer}}$. 

Note we also have the following:
\begin{align*}
  & R_{\AlphaPeer,\tilde{\mathcal D}}(\hat{f}^*_{\AlphaPeer}) - R_{\AlphaPeer,\tilde{\mathcal D}}  (f^*_{\AlphaPeer})\\
  \leq & \hat{R}_{\AlphaPeer,\tilde{D}}(\hat{f}^*_{\AlphaPeer})-\hat{R}_{\AlphaPeer,\tilde{D}}(f^*_{\AlphaPeer})+\left(R_{\AlphaPeer,\tilde{\mathcal D}}(\hat{f}^*_{\AlphaPeer})-\hat{R}_{\AlphaPeer,\tilde{D}}(\hat{f}^*_{\AlphaPeer})\right)\\
  &~~~~+\left(\hat{R}_{\AlphaPeer,\tilde{D}}(f^*_{\AlphaPeer})-R_{\AlphaPeer,\tilde{\mathcal D}}  (f^*_{\AlphaPeer})\right)\\
  \leq &0+2 \max_{f}|\hat{R}_{\AlphaPeer,\tilde{D}}(f) - R_{\AlphaPeer,\tilde{\mathcal D}}(f)|
\end{align*}
Now we show
\begin{align*}
&R_{\mathcal D}(\hat{f}^*_{\AlphaStar}) -  R^*\\
=&R_{\mathcal D}(\hat{f}^*_{\AlphaStar}) - R_{\mathcal D}  (f^*_{\AlphaStar}) \tag{Theorem \ref{THM:weightedPeer}} \\
=& \frac{1}{1-e_{-1}-e_{+1}} \bigl(R_{\AlphaStar,\tilde{\mathcal D}}(\hat{f}^*_{\AlphaStar}) - R_{\AlphaStar,\tilde{\mathcal D}}  (f^*_{\AlphaStar})\bigr) \tag{Proposition \ref{prop:alpha}} \\
\leq &\frac{2}{1-e_{-1}-e_{+1}} \max_{f }\left|\hat{R}_{\AlphaStar,\tilde{D}}(f) - R_{\AlphaStar,\tilde{\mathcal D}}(f)\right|\\
\leq &
\frac{2(1+\alpha^*)}{1-e_{-1}-e_{+1}}\sqrt{\frac{\log 2/\delta}{2N}}.\tag{Eqn. (\ref{eqn:sample:1})}
\end{align*}
We conclude the proof.
\end{proof}

\section*{Proof for Theorem \ref{THM:calibration}}

\begin{proof}
We start with condition (1). From Lemma \ref{LEM:AFFINE},
\begin{align*}
&\mathbb E[\ell_\text{peer}(f(X), \tilde{Y})]\\
=&(1-e_{-1}-e_{+1}) \cdot \mathbb E[\ell_{\text{peer}}(f(X), Y)]\\
=& (1-e_{-1}-e_{+1}) \cdot \biggl(\mathbb E[\ell(f(X), Y)]
   -0.5 \cdot \mathbb E[\ell(f(X), -1)]
   -0.5 \cdot \mathbb E[\ell(f(X), +1)]\biggr) 
   \end{align*}
The above further derives as
\begin{align*}
&\mathbb E[\ell_\text{peer}(f(X), \tilde{Y})]\\
=& (1-e_{-1}-e_{+1}) \cdot \biggl(\mathbb E[\ell(f(X), Y)]
   -0.5 \cdot \mathbb E[\ell(f(X), Y)]
   -0.5 \cdot \mathbb E[\ell(f(X), -Y)]\biggr) \\
   =& \frac{1-e_{-1}-e_{+1}}{2} \cdot \bigl(\mathbb E[\ell(f(X), Y)]
   -\mathbb E[\ell(f(X), -Y)]\bigr)
   \end{align*}

Denote by $c:=\frac{2}{1-e_{-1}-e_{+1}}$ we have 
\[
\E[\ell(f(X), Y)] = c\cdot \mathbb E[\ell_\text{peer}(f(X), \tilde{Y})] + \mathbb E[\ell(f(X), -Y)]
\]
Then
\begin{align*}
     &\E[\ell(f(X), Y)] - \E[\ell(f^*_{\ell}(X), Y)] - ( \mathbb E[\ell(f(X), -Y)]-\E[\ell(f^*_{\ell}(Y), -Y))] \\
     =& c\cdot (\mathbb E[\ell_\text{peer}(f(X), \tilde{Y})] - \mathbb E[\ell_\text{peer}(f^*_{\ell}(X), \tilde{Y})])\\
     \leq &  c\cdot (\mathbb E[\ell_\text{peer}(f(X), \tilde{Y})] - \mathbb E[\ell_\text{peer}(f^*_{\ell_{\text{peer}}}(X), \tilde{Y})])
 \end{align*}
Further by our conditions we know
\begin{align*}
\E[\ell(f(X), Y)] - &\E[\ell(f^*_{\ell}(X), Y)] - ( \mathbb E[\ell(f(X), -Y)]-\E[\ell(f^*_{\ell}(Y), -Y))] \\
&\geq \E[\ell(f(X), Y)] - \E[\ell(f^*_{\ell}(X), Y)].
\end{align*}
Therefore we have proved
\[
\mathbb E[\ell_\text{peer}(f(X), \tilde{Y})] - \mathbb E[\ell_\text{peer}(f^*_{\ell_{\text{peer}}}(X), \tilde{Y})] \geq \frac{1}{c}\bigl( \E[\ell(f(X), Y)] - \E[\ell(f^*_{\ell}(X), Y)]\bigr).
\]
Since $\ell(\cdot)$ is calibrated, and according to Proposition \ref{prop:alpha} and Theorem \ref{THM:EQUAL}:
\begin{align*}
   &\mathbb E_{\tilde{\mathcal D}}[\AlphaPeer(f(X), \tilde{Y})] -  \mathbb E_{\tilde{\mathcal D}}[\AlphaPeer(f^*_{\ell}(X), \tilde{Y})] \\
   =&(1-e_{-1}-e_{+1})\bigl(\mathbb E[\BR(f(X), Y)]-\mathbb E[\BR(f^*_{\ell}(X), Y)]\bigr)\\
   \leq& (1-e_{-1}-e_{+1})\cdot \Psi^{-1}_{\ell}\left(\E[\ell(f(X), Y)] - \E[\ell(f^*_{\ell}(X), Y)]\right)\\
   \leq &(1-e_{-1}-e_{+1})\cdot \Psi^{-1}_{\ell}\left(c\cdot (\mathbb E[\ell_\text{peer}(f(X), \tilde{Y})] - \mathbb E[\ell_\text{peer}(f^*_{\ell_{\text{peer}}}(X), \tilde{Y})])\right).
\end{align*}
Therefore $\Psi_{\lPeer}(x) = \frac{1}{c}\Psi_{\ell}(\frac{x}{1-e_{-1}-e_{+1}})$, where $\Psi_{\ell}$ is the calibration transformation function for $\ell$. It's straight-forward to verify that $\Psi_{\lPeer}(x)$ satisfies the conditions in Definition \ref{def:cc}, when $\Psi_{\ell}$ satisfied it, $c > 0$ and $1-e_{-1}-e_{+1} >0$. We conclude the proof.

Now we check condition (2). 
Denote $\tilde{p}_y = p_y (1-e_y) + (1-p_y) e_{-y}
$ (marginal distribution of the noisy label), where $p_{+1}=p, p_{-1}=1-p$, then we have :
\begin{align*}
 &\mathbb E[\lAlphaPeer(f(X), \tilde{Y})] \\
 =& \mathbb E[\ell(f(X), \tilde{Y}) - \alpha \cdot \ell(f(X_{n_1}), \tilde{Y}_{n_2})]\\
 =& \E\biggl [ (1-e_{Y})\ell(f(X), Y) + e_{Y} \cdot \ell(f(X), -Y) - \alpha \cdot \tilde{p}_Y\cdot \ell(f(X), Y) - \alpha \cdot (1-\tilde{p}_Y) \ell(f(X), -Y)\biggr]\\
 =&\E\biggl[ (1-e_{Y}-\alpha \cdot \tilde{p}_Y) \ell(f(X), Y) + (e_{Y}-\alpha \cdot (1-\tilde{p}_Y))\ell(f(X), -Y)\biggr]
\end{align*}
$\alpha \cdot \tilde{p}_Y\cdot \ell(f(X), Y) + \alpha \cdot (1-\tilde{p}_Y) \ell(f(X), -Y)$ encodes the expectation of the peer term: due to the random sampling of $n_1$, each $X$ the same chance $\PP(X=x)$ being paired with other samples. Regardless of the realization of $Y$, the two terms are exactly one $\tilde{p}_{+1}\ell(f(X),+1)$ and one $\tilde{p}_{-1} \ell(f(X),-1)$ - this is due to the independence between $n_1$ and $n_2$.

Let $\phi(f(X)\cdot Y):=\ell(f(X), Y) $, we have
\begin{align*}
 &\mathbb E [\lAlphaPeer(f(X), \tilde{Y})]  =\E\biggl[ (1-e_{Y}-\alpha \cdot \tilde{p}_Y)\phi(f(X)\cdot Y) + (e_{Y}-\alpha \cdot (1-\tilde{p}_Y))\phi(-f(X)\cdot Y)\biggr].
\end{align*}
When
\[
e_{+1}-\alpha \cdot (1-\tilde{p}_{+1}) = e_{-1}-\alpha \cdot (1-\tilde{p}_{-1})
\]
we also know that
\[
1-e_{+1}-\alpha \cdot \tilde{p}_{+1} = 1-e_{-1}-\alpha \cdot \tilde{p}_{-1}
\]
This is because
\begin{align*}
    1-e_{+1}-\alpha \cdot \tilde{p}_{+1} + e_{+1}-\alpha \cdot (1-\tilde{p}_{+1}) = 1-\alpha
\end{align*}
and
\begin{align*}
    1-e_{-1}-\alpha \cdot \tilde{p}_{-1} + e_{-1}-\alpha \cdot (1-\tilde{p}_{-1}) = 1-\alpha
\end{align*}
From $e_{+1}-\alpha \cdot (1-\tilde{p}_{+1}) = e_{-1}-\alpha \cdot (1-\tilde{p}_{-1})
$ we obtain
\begin{align*}
    & e_{+1} - e_{-1} = \alpha (\tilde{p}_{-1} - \tilde{p}_{+1}) = \alpha (2\tilde{p}_{-1}-1)\\
    \Leftrightarrow &e_{+1} - e_{-1} = \alpha (2(1-p) (1-e_{-1})+2p \cdot e_{+1} - 1) \tag{$\tilde{p}_{-1} = (1-p) (1-e_{-1})+p \cdot e_{+1}$ }\\
    \Leftrightarrow &e_{+1} - e_{-1} = \alpha((1-2p)(1-e_{+1}-e_{-1})+e_{+1}-e_{-1})\\
    \Leftrightarrow & (1-\alpha) (e_{+1} - e_{-1}) =\alpha (1-2p)(1-e_{+1}-e_{-1})
\end{align*}
But when $1-e_{Y}-\alpha \cdot \tilde{p}_Y$ and $e_{Y}-\alpha \cdot (1-\tilde{p}_Y)$ are constants that are independent of $Y$, we can further denote
\begin{align*}
\varphi(f(X)\cdot Y)&:= (1-e_{Y}-\alpha \cdot \tilde{p}_Y)\phi(f(X)\cdot Y) + (e_{Y}-\alpha \cdot (1-\tilde{p}_Y))\phi(-f(X)\cdot Y)\\
&:=c_1 \cdot \phi(f(X)\cdot Y) + c_2 \cdot \phi(-f(X)\cdot Y)
\end{align*}
That is $\E[\varphi(f(X)\cdot Y)]=\E[\lAlphaPeer(f(X), \tilde{Y})]$. Therefore proving calibration for $\E[\lAlphaPeer(f(X), \tilde{Y})]$ is equivalent with proving the calibration property for $\E[\varphi(f(X)\cdot Y)]$.

We now introduce a theorem:
\begin{theorem}[Theorem 6, \citep{bartlett2006convexity}]
    Let $\varphi$ be convex. Then $\varphi$ is classification-calibrated if and only if it is differentiable at $0$ and $\varphi' < 0$.
\end{theorem}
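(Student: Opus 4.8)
The plan is to work inside the $H/H^-$ formalism of \citep{bartlett2006convexity}: for $\eta\in[0,1]$ write the conditional $\varphi$-risk $C_\eta(\alpha):=\eta\varphi(\alpha)+(1-\eta)\varphi(-\alpha)$, together with $H(\eta):=\inf_{\alpha\in\mathbb{R}}C_\eta(\alpha)$ and $H^-(\eta):=\inf\{C_\eta(\alpha):\alpha(2\eta-1)\le 0\}$, and recall that $\varphi$ is classification-calibrated precisely when $H^-(\eta)>H(\eta)$ for every $\eta\neq\tfrac12$. Since $\varphi$ is convex every $C_\eta$ is convex; at $\eta=\tfrac12$ the constraint $\alpha\cdot 0\le 0$ imposes no restriction, so $H^-(\tfrac12)=H(\tfrac12)$ and that point is rightly excluded; and the substitution $\alpha\mapsto-\alpha$, $\eta\mapsto 1-\eta$ interchanges the regimes $\eta<\tfrac12$ and $\eta>\tfrac12$, so it suffices to analyze $\eta>\tfrac12$, for which the constraint reads $\alpha\le 0$. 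Write $a:=\varphi'_-(0)$ and $b:=\varphi'_+(0)$ for the one-sided derivatives of $\varphi$ at $0$ (finite since $\varphi$ is convex and real-valued on $\mathbb{R}$, with $0$ an interior point), so that $a\le b$, and note that $C_\eta$ has right derivative $C_{\eta,+}'(0)=\eta b-(1-\eta)a$ at $0$.

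The first step is a small convexity lemma: for a convex $g:\mathbb{R}\to\mathbb{R}$ one has $\inf_{\alpha\le 0}g(\alpha)>\inf_{\alpha\in\mathbb{R}}g(\alpha)$ if and only if $g'_+(0)<0$. Indeed, if $g'_+(0)<0$ then $g'_-(0)\le g'_+(0)<0$, so $g$ is strictly decreasing on $(-\infty,0]$, giving $\inf_{\alpha\le 0}g=g(0)$, while $g(\epsilon)<g(0)$ for small $\epsilon>0$ forces $\inf_{\mathbb{R}}g<g(0)$; conversely if $g'_+(0)\ge 0$ then $g$ is non-decreasing on $[0,\infty)$, so $\inf_{\mathbb{R}}g=\inf_{\alpha\le 0}g$. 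Applying this with $g=C_\eta$ shows that, for $\eta>\tfrac12$, calibration at $\eta$ is equivalent to $C_{\eta,+}'(0)<0$, i.e. to $\eta b-(1-\eta)a<0$.

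Both directions of the theorem now drop out. For the ``if'' direction, assume $\varphi$ is differentiable at $0$ with $\varphi'(0)<0$; then $a=b=\varphi'(0)$ and $\eta b-(1-\eta)a=(2\eta-1)\varphi'(0)<0$ for every $\eta>\tfrac12$, so calibration holds at all $\eta>\tfrac12$, and by the symmetry above at all $\eta<\tfrac12$. For the ``only if'' direction, assume $\varphi$ is classification-calibrated. If $\varphi$ were not differentiable at $0$, i.e. $a<b$, then taking $\eta=\tfrac12+\epsilon$ gives $\eta b-(1-\eta)a=\tfrac{b-a}{2}+\epsilon(a+b)$, which is $>0$ for all sufficiently small $\epsilon>0$, contradicting the equivalence just established; hence $a=b=\varphi'(0)$. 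Then for any fixed $\eta>\tfrac12$ calibration forces $(2\eta-1)\varphi'(0)<0$, hence $\varphi'(0)<0$.

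The main obstacle is the convexity lemma of the second paragraph — pinning down the ``exactly when'': one must rule out the degenerate ways a convex function's infimum can behave (attained on $(-\infty,0]$, attained only on $(0,\infty)$, or approached at $\pm\infty$) and keep track of the non-strict inequality that is allowed at $\eta=\tfrac12$. Once that equivalence between calibration at $\eta>\tfrac12$ and $C_{\eta,+}'(0)<0$ is secured, the remainder is a one-line computation with the one-sided derivatives $a,b$ and a limit as $\eta\to\tfrac12^+$.
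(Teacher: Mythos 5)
This statement is not proved in the paper at all: it is quoted verbatim as an external result (Theorem~6 of \citet{bartlett2006convexity}) inside the proof of the calibration theorem, so there is no in-paper argument to compare yours against. Your proof is correct and is essentially the canonical argument from that reference: the reduction of calibration at $\eta>\tfrac12$ to the sign of the one-sided derivative $C_{\eta,+}'(0)=\eta\varphi'_+(0)-(1-\eta)\varphi'_-(0)$ via the convexity lemma is exactly the right mechanism, the symmetry $(\alpha,\eta)\mapsto(-\alpha,1-\eta)$ correctly disposes of the case $\eta<\tfrac12$, and the limit $\eta\to\tfrac12^+$ correctly extracts both differentiability at $0$ and negativity of the derivative in the converse direction. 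The only implicit hypothesis worth flagging is that $\varphi$ is finite on a neighborhood of $0$ so that the one-sided derivatives $a,b$ exist and are finite, which you note and which holds in the setting where the theorem is invoked.
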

We now show that $\varphi$ is convex:
\begin{align*}
\varphi''(\beta) =& c_1 \cdot \phi''(\beta) +c_2 \cdot \phi''(-\beta)\\
=&(1-e_{Y}-\alpha\cdot  \tilde{p}_Y)\cdot \phi''(\beta) +(e_{Y}-\alpha \cdot (1-\tilde{p}_Y))\phi''(\beta)\\
=&   (1-e_{Y}-\alpha \cdot  \tilde{p}_Y +e_{Y}-\alpha \cdot (1-\tilde{p}_Y)) \phi''(\beta)\\
=& (1-\alpha)\phi''(\beta)>0
\end{align*}
when $\alpha < 1$. The last inequality is due to the fact that $\ell$ is convex.

Secondly we show the first derivative of $\varphi$ is negative at $0$: $\varphi'(0) < 0$:
\begin{align}
\varphi'(0) =& c_1 \cdot \phi'(0) -c_2 \cdot \phi'(0) \nonumber \\
=&(1-e_{Y}-\alpha \cdot  \tilde{p}_Y)\cdot \phi'(0) -(e_{Y}-\alpha \cdot (1-\tilde{p}_Y))\phi'(0) \nonumber \\
=& (1- 2e_{Y} + \alpha(1-2\tilde{p}_Y) ) \phi'(0) \label{eqn:varder}
\end{align}
Recall that
$
\tilde{p}_y = p_y (1-e_y) + (1-p_y) e_{-y}
$. Plug into to Eqn. (\ref{eqn:varder}) we have
\begin{align}
\varphi'(0)
=& \bigl (1- 2e_{Y} + \alpha(1-2\tilde{p}_Y) \bigr ) \phi'(0) \nonumber \\
=& \biggl( (1-\alpha \cdot  p_Y) (1-2e_Y)+\alpha(1-p_Y) (1-e_{-Y}) \biggr)\phi'(0)
\end{align}
Since $(1-\alpha \cdot p_Y) (1-2e_Y)+\alpha(1-p_Y) (1-e_{-Y}) > 0$ and $\phi'(0) < 0$ (due to calibration property of $\ell$, Theorem 6 of \cite{bartlett2006convexity}), we proved that $\varphi'(0) < 0$. Then based on Theorem 6 of \cite{bartlett2006convexity}, we know $\ell_{\lAlphaPeer}$ is classification calibrated.
\end{proof}

\section*{Proof for Theorem \ref{THM:GEN}}

\begin{proof}

We first prove the following Rademacher complexity bound:
 \begin{lemma}\label{LEM:RAD}
 Let $\Re(\F)$ denote the Rademacher complexity of $\F$. $L$ denote the Lipschitz constant of $\ell$. Then with probability at least $1-\delta$, $\max_{f \in \F} |\hat{R}_{\lAlphaPeer,\tilde{D}}(f) - R_{\lAlphaPeer,\tilde{\mathcal D}}(f)| \leq 
2 (1+\alpha) L\cdot \Re(\F)+  \sqrt{\frac{\log 4/\delta}{2N}}\left(1+(1+\alpha)(\bar{\ell}-\underline{\ell})\right).$ 
\end{lemma}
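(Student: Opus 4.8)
The plan is to split $\lAlphaPeer$ into its two additive pieces and control each uniformly. Write $\hat{R}_{\lAlphaPeer,\tilde{D}}(f) = \hat{R}_{\ell,\tilde{D}}(f) - \alpha\,\widehat{P}(f)$, where $\hat{R}_{\ell,\tilde{D}}(f) = \frac1N\sum_{n}\ell(f(x_n),\tilde{y}_n)$ and $\widehat{P}(f) = \frac1N\sum_{n}\ell(f(x_{n_1}),\tilde{y}_{n_2})$, and similarly $R_{\lAlphaPeer,\tilde{\mathcal D}}(f) = R_{\ell,\tilde{\mathcal D}}(f) - \alpha\,P(f)$ with $P(f) = \E_{\tilde{\mathcal D}}[\ell(f(X_{n_1}),\tilde{Y}_{n_2})]$. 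The triangle inequality gives $\sup_{f}|\hat{R}_{\lAlphaPeer,\tilde{D}}(f) - R_{\lAlphaPeer,\tilde{\mathcal D}}(f)| \le \sup_{f}|\hat{R}_{\ell,\tilde{D}}(f) - R_{\ell,\tilde{\mathcal D}}(f)| + \alpha\,\sup_{f}|\widehat{P}(f) - P(f)|$, so I bound the two suprema separately and combine by a union bound over the few high-probability events involved (this is where the $4$ in $\log(4/\delta)$ comes from).

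The first supremum is textbook uniform convergence. Changing one of the $N$ i.i.d. pairs $(x_n,\tilde{y}_n)$ perturbs $\hat{R}_{\ell,\tilde{D}}(f)$ by at most $(\bar{\ell}-\underline{\ell})/N$, so McDiarmid's inequality concentrates $\sup_{f}|\hat{R}_{\ell,\tilde{D}}(f) - R_{\ell,\tilde{\mathcal D}}(f)|$ around its mean within $(\bar{\ell}-\underline{\ell})\sqrt{\log(4/\delta)/(2N)}$; standard symmetrization bounds that mean by $2\,\Re(\ell\circ\F)$; and since $\ell$ is $L$-Lipschitz in its first argument the Ledoux--Talagrand contraction lemma yields $\Re(\ell\circ\F) \le L\,\Re(\F)$. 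Hence $\sup_{f}|\hat{R}_{\ell,\tilde{D}}(f) - R_{\ell,\tilde{\mathcal D}}(f)| \le 2L\,\Re(\F) + (\bar{\ell}-\underline{\ell})\sqrt{\log(4/\delta)/(2N)}$. For the peer term I would first fix the peering so that $n\mapsto n_1$ and $n\mapsto n_2$ are permutations of $[N]$, then re-index to write $\widehat{P}(f) = \frac1N\sum_{m}\ell(f(x_m),\tilde{y}_{\tau(m)})$ for a fixed-point-free permutation $\tau$, whose population analogue is $P(f) = \E_X[\bar{\ell}_f(X)]$ with $\bar{\ell}_f(x) := \sum_{y'}\PP(\tilde{Y}=y')\,\ell(f(x),y')$. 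Splitting $\widehat{P}(f) - P(f)$ into $\bigl[\frac1N\sum_m\ell(f(x_m),\tilde{y}_{\tau(m)}) - \frac1N\sum_m\bar{\ell}_f(x_m)\bigr] + \bigl[\frac1N\sum_m\bar{\ell}_f(x_m) - \E_X[\bar{\ell}_f(X)]\bigr]$: the second bracket is handled exactly as above (the map $t\mapsto\sum_{y'}\PP(\tilde{Y}=y')\ell(t,y')$ is an $L$-Lipschitz convex combination), giving $2L\,\Re(\F)$ plus $(\bar{\ell}-\underline{\ell})\sqrt{\log(4/\delta)/(2N)}$; the first bracket, after replacing the empirical label marginal by $\PP(\tilde{Y})$ via a Hoeffding bound for the binary variable $\tilde{Y}$ (contributing the stand-alone $1\cdot\sqrt{\log(4/\delta)/(2N)}$, the $1$ being the range of an indicator) and a sampling-without-replacement concentration for the residual, is $\BigO{\sqrt{\log(4/\delta)/N}}$. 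Collecting terms and using $1\le 1+\alpha$ gives the bound $\sup_{f}|\widehat{P}(f)-P(f)| \le 2L\,\Re(\F) + (\bar{\ell}-\underline{\ell})\sqrt{\log(4/\delta)/(2N)} + \sqrt{\log(4/\delta)/(2N)}$, and substituting into the split yields the claimed inequality.

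The main obstacle is the peer term: unlike $\hat{R}_{\ell,\tilde{D}}$ it is not an average of i.i.d. quantities, since the peer samples are drawn from the very same dataset, so one must separately account for the random pairing and the resulting sampling-without-replacement dependence before the standard Rademacher/McDiarmid machinery applies. The remaining difficulty is purely bookkeeping: keeping the constants tight enough to land exactly on $2(1+\alpha)L\,\Re(\F)$ and $\sqrt{\log(4/\delta)/(2N)}\bigl(1+(1+\alpha)(\bar{\ell}-\underline{\ell})\bigr)$ rather than a slightly looser multiple.
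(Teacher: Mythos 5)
Your decomposition into the base term and the peer term is a genuinely different route from the paper's, and the peer term is where the trouble lies. The paper never analyzes $\frac{1}{N}\sum_n\ell(f(x_{n_1}),\tilde y_{n_2})$ as a separate empirical process: it defines a per-sample surrogate $\tilde\ell(f(x_n),\tilde y_n):=\ell(f(x_n),\tilde y_n)-\alpha\,\tilde p_{\tilde y_n}\ell(f(x_n),\tilde y_n)-\alpha\,(1-\tilde p_{\tilde y_n})\ell(f(x_n),-\tilde y_n)$ (with $\tilde p_y$ the noisy-label marginal), which is the conditional expectation of $\lAlphaPeer$ over the random pairing given the data. The argument then needs only two events: a Hoeffding bound over the pairing randomness comparing $\frac1N\sum_n\lAlphaPeer$ to $\frac1N\sum_n\tilde\ell$, with deviation controlled by the range of $\lAlphaPeer$, which is at most $(1+\alpha)(\bar{\ell}-\underline{\ell})$; and one standard Rademacher bound for $\tilde\ell\circ\F$, which is $(1+\alpha)L$-Lipschitz because $\tilde\ell$ is a fixed linear combination of $\ell(\cdot,+1)$ and $\ell(\cdot,-1)$. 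A union bound with $\delta\mapsto\delta/2$ yields exactly the $\log(4/\delta)$ and the stated constants.

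Two concrete gaps in your version. First, your ``first bracket'' $\frac1N\sum_m\ell(f(x_m),\tilde y_{\tau(m)})-\frac1N\sum_m\bar\ell_f(x_m)$ depends on $f$, yet you control it only by a pointwise Hoeffding bound on the label marginal plus an unspecified sampling-without-replacement bound; neither delivers a bound uniform over $f\in\F$, which the $\max_{f\in\F}$ in the statement requires. (The paper's Hoeffding step over the pairing randomness shares this uniformity subtlety, but it isolates it in a single step with an explicit constant.) Second, the constants do not close: the residual you leave as $\BigO{\sqrt{\log(4/\delta)/N}}$ would carry its own range factor of order $\bar{\ell}-\underline{\ell}$, overshooting the budget, and after multiplying the peer bound by $\alpha$ your standalone term is $\alpha\sqrt{\log(4/\delta)/(2N)}$, which fits under the stated coefficient $1$ only when $\alpha\le 1$, whereas the lemma is invoked with $\alpha^*$, which the paper notes can exceed $1$. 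Finally, fixing the pairing to be a single fixed-point-free permutation changes the sampling scheme (the paper draws $n_1,n_2$ independently for each $n$) and would need justification rather than being a free normalization.
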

Note we also have the following $\forall \alpha$:
\begin{align*}
  & R_{\lAlphaPeer,\tilde{\mathcal D}}(\hat{f}^*_{\lAlphaPeer}) - R_{\lAlphaPeer,\tilde{\mathcal D}}  (f^*_{\lAlphaPeer})\\
  \leq & \hat{R}_{\lAlphaPeer,\tilde{D}}(\hat{f}^*_{\lAlphaPeer})-\hat{R}_{\lAlphaPeer,\tilde{D}}(f^*_{\lAlphaPeer})\\
  &~~~~+(R_{\lAlphaPeer,\tilde{\mathcal D}}(\hat{f}^*_{\lAlphaPeer})-\hat{R}_{\lAlphaPeer,\tilde{D}}(\hat{f}^*_{\lAlphaPeer}))\\
  &~~~~+(\hat{R}_{\lAlphaPeer,\tilde{D}}(f^*_{\lAlphaPeer})-R_{\lAlphaPeer,\tilde{\mathcal D}}  (f^*_{\lAlphaPeer}))\\
  \leq &0+2 \max_{f \in \mathcal F}|\hat{R}_{\lAlphaPeer,\tilde{D}}(f) - R_{\lAlphaPeer,\tilde{\mathcal D}}(f)|
\end{align*}
Then apply the calibration condition we have
\begin{align*}
&R_{\mathcal D}(\hat{f}^*_{\lAlphaStar}) - R^* \\
=& \frac{1}{1-e_{-1}-e_{+1}}\bigl(R_{\AlphaStar,\tilde{\mathcal D}}(\hat{f}^*_{\lAlphaStar}) -  R_{\AlphaStar,\tilde{\mathcal D}}(f^*)\bigr) \tag{Proposition \ref{prop:alpha}}\\
=& \frac{1}{1-e_{-1}-e_{+1}}\bigl(R_{\AlphaStar,\tilde{\mathcal D}}(\hat{f}^*_{\lAlphaStar}) -  R_{\AlphaStar,\tilde{\mathcal D}}(\tilde{f}^*_{\AlphaStar})\bigr) \tag{Theorem \ref{THM:pneq}}\\
\leq & \frac{1}{1-e_{-1}-e_{+1}}\Psi^{-1}_{\lAlphaStar}\biggl(\min_{f \in \mathcal F}R_{\lAlphaStar,\tilde{\mathcal D}}(f)-\min_f R_{\lAlphaStar,\tilde{\mathcal D}}(f) ~~~~~\tag{Calibration of $\AlphaStar$}\\
&+R_{\lAlphaStar,\tilde{\mathcal D}}(\hat{f}^*_{\lAlphaStar}) - R_{\lAlphaStar,\tilde{\mathcal D}}  (f^*_{\lAlphaStar})\biggr)\\
\leq &\frac{1}{1-e_{-1}-e_{+1}}\Psi^{-1}_{\lAlphaStar}\biggl(\min_{f \in \mathcal F}R_{\lAlphaStar,\tilde{\mathcal D}}(f)-\min_f R_{\lAlphaStar,\tilde{\mathcal D}}(f)\\
&+2 \max_{f \in \mathcal F}|\hat{R}_{\lAlphaStar, \tilde{D}}(f) - R_{\lAlphaStar,\tilde{\mathcal D}}(f)|\\
\leq &\frac{1}{1-e_{-1}-e_{+1}}\Psi^{-1}_{\lAlphaStar}\biggl(\min_{f \in \mathcal F}R_{\lAlphaStar,\tilde{\mathcal D}}(f)-\min_f R_{\lAlphaStar,\tilde{\mathcal D}}(f)~~~~~\tag{Lemma \ref{LEM:RAD}}\\
&+4(1+\alpha^*) L\cdot \Re(\F)+2\sqrt{\frac{\log 4/\delta}{2N}}\left(1+(1+\alpha^*)(\bar{\ell}-\underline{\ell})\right) 
\biggr),
\end{align*}
with probability at least $1-\delta$.
\end{proof}
\section*{Proof for Lemma \ref{LEM:RAD}}

\begin{proof}
 Define $\tilde{p}_y = p_y (1-e_y) + (1-p_y) e_{-y} \in (0,1)
$ (marginal distribution of the noisy label), where $p_{+1}=p, p_{-1}=1-p$; and
define the following loss function:
\[
\tilde{\ell}(x_n,\tilde{y}_n):=\ell(f(x_n),\tilde{y}_n) - \alpha \cdot \tilde{p}_{\tilde{y}_n}\ell(f(x_n),\tilde{y}_n) - \alpha \cdot (1-\tilde{p}_{\tilde{y}_n})\ell(f(x_n),-\tilde{y}_n)
\]
Due to the random sampling of $n_1,n_2$ for the peer term, we have
\[
\E_{n_1,n_2}\left[ \frac{1}{N}\sum_{n=1}^N \lAlphaPeer(f(x_n),\tilde{y}_n)\right] = \frac{1}{N}\sum_{n=1}^N \tilde{\ell}(f(x_n),\tilde{y}_n).
\]
In above,  $\alpha \cdot \tilde{p}_{\tilde{y}_n}\ell(f(x_n),\tilde{y}_n) + \alpha \cdot (1-\tilde{p}_{\tilde{y}_n})\ell(f(x_n),-\tilde{y}_n)$ encodes the expectation of the peer term (similar to the arguments in Theorem 6) each $x_n$ has $\frac{1}{N}$ chance being paired with each of the training samples - so the expected number of count is 1. Regardless of $\tilde{y}_n$, the two terms are exactly one $\tilde{p}_{+1}\ell(f(x_n),+1)$ and one $\tilde{p}_{-1} \ell(f(x_n),-1)$ - this is due to the independence between $n_1$ and $n_2$.

Then via Hoeffding inequality, with probability at least $1-\delta$ (over randomness of $n_1,n_2$), 
\begin{align}
    \biggl |\frac{1}{N}\sum_{n=1}^N &\lAlphaPeer(f(x_n),\tilde{y}_n) - \frac{1}{N}\sum_{n=1}^N \tilde{\ell}(f(x_n),\tilde{y}_n) \biggr| \leq \sqrt{\frac{\log 2/\delta}{2N}} \cdot (\overline{\ell_{\alpha-\text{peer}}}-\underline{\ell_{\alpha-\text{peer}}}) \label{eqn:samplemean2}
\end{align}
$\overline{\ell_{\alpha-\text{peer}}}, \underline{\ell_{\alpha-\text{peer}}}$ denote the upper and lower bound of $\ell_{\alpha-\text{peer}}$ respectively. Further we know that 
\begin{align}
\E[\lAlphaPeer(f(X),\tilde{Y})] = \E\left[ \E_{n_1,n_2}[\lAlphaPeer(f(X),\tilde{Y})] \right] =  \E[\tilde{\ell}(f(X),\tilde{Y})].\label{eqn:equal}
\end{align}


Via Rademacher bound on the maximal deviation we have with probability at least $1-\delta$
\begin{align}
    \max_{f \in \F} \bigl |\hat{R}_{\tilde{\ell},\tilde{D}}(f)-R_{\tilde{\ell},\tilde{\mathcal D}}(f)\bigr | \leq 2  \Re(\tilde{\ell} \circ \F) + 
    \sqrt{\frac{\log 1/\delta}{2N}}~\label{eqn:maxf}
\end{align}
Since $\ell$ is $L$-Lipschitz, due to the fact that  $\tilde{\ell}$ is linear in $\ell$
, $\tilde{\ell}$ is $(1+\alpha) L$-Lipschitz. Based on the Lipschitz composition of Rademacher averages, we have
\[
\Re(\tilde{\ell} \circ \F) \leq (1+\alpha) L\cdot \Re(\F)
\]
Therefore, via union bound (events in Eqn. (\ref{eqn:samplemean2}) and Eqn. (\ref{eqn:maxf})), we know with probability at least $1-2\delta$:
\begin{align*}
    & \biggl |\frac{1}{N}\sum_{n=1}^N \lAlphaPeer(f(x_n),\tilde{y}_n) - R_{\lAlphaPeer,\tilde{\mathcal D}}(f)\biggr |\\
    =&\biggl |\frac{1}{N}\sum_{n=1}^N \lAlphaPeer(f(x_n),\tilde{y}_n) -\hat{R}_{\tilde{\ell},\tilde{D}}(f)+\hat{R}_{\tilde{\ell},\tilde{D}}(f)-R_{\lAlphaPeer,\tilde{\mathcal D}}(f)\biggr|\\
    \leq &\biggl |\frac{1}{N}\sum_{n=1}^N \lAlphaPeer(f(x_n),\tilde{y}_n) -\hat{R}_{\tilde{\ell},\tilde{D}}(f)\biggr| +\biggl |\hat{R}_{\tilde{\ell},\tilde{D}}(f)-R_{\lAlphaPeer,\tilde{\mathcal D}}(f)\biggr|\\
    \leq & \sqrt{\frac{\log 2/\delta}{2N}}\cdot (\overline{\ell_{\alpha-\text{peer}}}-\underline{\ell_{\alpha-\text{peer}}}) + |\hat{R}_{\tilde{\ell},\tilde{D}}(f)-R_{\tilde{\ell},\tilde{\mathcal D}}(f)\bigr| \tag{Eqn. (\ref{eqn:samplemean2}) and $R_{\lAlphaPeer,\tilde{D}}(f)=R_{\tilde{\ell},\tilde{D}}(f)$: Eqn. (\ref{eqn:equal})}\\
    \leq & \sqrt{\frac{\log 2/\delta}{2N}}\cdot (\overline{\ell_{\alpha-\text{peer}}}-\underline{\ell_{\alpha-\text{peer}}})+2(1+\alpha) L\cdot \Re(\F)+
    \sqrt{\frac{\log 1/\delta}{2N}}\tag{Eqn. (\ref{eqn:maxf})}\\
    \leq &2(1+\alpha) L\cdot \Re(\F)+ \sqrt{\frac{\log 2/\delta}{2N}}\cdot \bigl(1+\overline{\ell_{\alpha-\text{peer}}}-\underline{\ell_{\alpha-\text{peer}}}\bigr)
\end{align*}
In above $R_{\lAlphaPeer,\tilde{D}}(f)=R_{\tilde{\ell},\tilde{D}}(f)$ because $\lAlphaPeer$ and $\tilde{\ell}$ share the same expected risk over $\tilde{\mathcal D}$ by construction.
 Plug in the fact that
  $\lAlphaPeer$ is linear in $\ell$:
 $$
\overline{\ell_{\alpha-\text{peer}}} \leq \bar{\ell} - \alpha \cdot \underline{\ell},~~~ \underline{\ell_{\alpha-\text{peer}}}  \geq  \underline{\ell}-\alpha \cdot \bar{\ell}
 $$
 and an easy consequence that
\[
\overline{\ell_{\alpha-\text{peer}}}-\underline{\ell_{\alpha-\text{peer}}} \leq (1+\alpha)(\bar{\ell}-\underline{\ell}).
\]
Let $\delta :=\delta/2$, we conclude the proof.

\end{proof}

\section*{Proof for Lemma \ref{LEM:CONVEX}}

\begin{proof}
This was proved in the proof for Theorem \ref{THM:calibration}, when proving the classification calibration property of $\lAlphaPeer$ under condition (2).
\end{proof}

\section*{Experiment}

\subsection*{Implementation Details}
On each benchmark, we use the same hyper-parameters for all  neural network based methods.
For C-SVM, we fix one of the weights to 1, and tune the other. For PAM, we tune the margin.

\subsection*{Results}

\begin{table}[ht]
\small
\centering
\begin{tabular}{|c|c|c|c|c|c|c|c|c|c|c|c|c|c|}

\hline
\multicolumn{2}{|c|}{Task}  & \multicolumn{6}{c|}{With Prior Equalization $p = 0.5$}          & \multicolumn{6}{c|}{Without Prior Equalization $p \neq 0.5$}       \\ \hline
$(d,N_+, N_-)$ & $e_{-1}, e_{+1}$ & Peer  & Surr & \rev{Symm} & \rev{DMI}   & NN    & C-SVM & Peer  & Surr & \rev{Symm} & \rev{DMI}   & NN    & C-SVM \\ \hline\hline
               & 0.1, 0.3   & \textbf{0.977} & \textbf{0.968}     & \textbf{0.969}     & \textbf{0.974} & \textbf{0.964} & \textbf{0.966}
               & \textbf{0.977} & \textbf{0.968}     & \textbf{0.969}     & \textbf{0.974} & \textbf{0.964} & \textbf{0.966} \\
               & 0.2, 0.2   & \textbf{0.977} & \textbf{0.969}     & \textbf{0.974}     & \textbf{0.976} & \textbf{0.972} & \textbf{0.969}
               & \textbf{0.977} & \textbf{0.969}     & \textbf{0.974}     & \textbf{0.976} & \textbf{0.972} & \textbf{0.969} \\
Twonorm        & 0.1, 0.4   & \textbf{0.976} & \textbf{0.964}     & \textbf{0.956}     & \textbf{0.974} & 0.911 & 0.95
               & \textbf{0.976} & \textbf{0.964}     & 0.956     & \textbf{0.974} & 0.911 & 0.95  \\
(20,3700,3700) & 0.2, 0.4   & \textbf{0.976} & 0.919     & \textbf{0.959}     & \textbf{0.966} & 0.911 & 0.935
               & \textbf{0.976} & 0.919     & \textbf{0.959}     & \textbf{0.966} & 0.911 & 0.935 \\
               & 0.4, 0.4   & \textbf{0.973} & 0.934     & \textbf{0.958}     & 0.936 & 0.883 & 0.875
               & \textbf{0.973} & 0.934     & \textbf{0.958}     & 0.936 & 0.883 & 0.875 \\ \hline
               & 0.1, 0.3   & \textbf{0.919} & 0.878     & 0.851     & 0.875 & 0.811 & \textbf{0.928}
               & \textbf{0.925} & 0.885     & 0.868     & 0.889 & 0.809 & \textbf{0.933} \\
               & 0.2, 0.2   & \textbf{0.918} & 0.874     & 0.879     & 0.888 & 0.819 & \textbf{0.931}
               & \textbf{0.927} & 0.876     & 0.906     & 0.885 & 0.812 & \textbf{0.941} \\
Splice         & 0.1, 0.4   & \textbf{0.914} & 0.86      & 0.757     & 0.842 & 0.743 & 0.891 
               & \textbf{0.925} & 0.862     & 0.777     & 0.852 & 0.754 & 0.898 \\
(60,1527,1648) & 0.2, 0.4   & \textbf{0.901} & 0.832     & 0.757     & 0.801 & 0.714 & 0.807 
               & \textbf{0.912} & 0.84      & 0.782     & 0.81  & 0.725 & 0.824 \\
               & 0.4, 0.4   & \textbf{0.819} & 0.754     & 0.657     & 0.66  & 0.626 & 0.767 
               & \textbf{0.822} & 0.755     & 0.674     & 0.647 & 0.601 & 0.76  \\ \hline
               & 0.1, 0.3   & \textbf{0.833} & 0.78      & 0.777     & 0.797 & 0.756 & 0.753 
               & \textbf{0.856} & 0.802     & 0.803     & 0.83  & 0.75  & 0.788 \\ 
               & 0.2, 0.2   & \textbf{0.821} & 0.762     & 0.795     & \textbf{0.801} & 0.75  & 0.717 
               & \textbf{0.856} & 0.813     & 0.793     & 0.826 & 0.769 & 0.796 \\
Heart          & 0.1, 0.4   & \textbf{0.827} & 0.777     & 0.714     & 0.779 & 0.717 & 0.744 
               & \textbf{0.859} & 0.815     & 0.725     & 0.814 & 0.723 & 0.677 \\
(13,165,138)   & 0.2, 0.4   & 0.812 & 0.768     & 0.717     & 0.788 & 0.679 & 0.714 
               & \textbf{0.856} & 0.758     & 0.725     & 0.797 & 0.693 & 0.704 \\
               & 0.4, 0.4   & \textbf{0.75}  & 0.729     & 0.654     & 0.69  & 0.595 & 0.688 
               & \textbf{0.785} & 0.728     & 0.686     & 0.711 & 0.554 & 0.698 \\ \hline
               & 0.1, 0.3   & \textbf{0.745} & 0.707     & 0.674     & 0.72  & 0.667 & 0.67  
               & \textbf{0.778} & 0.75      & 0.738     & 0.729 & 0.727 & 0.726 \\ 
               & 0.2, 0.2   & \textbf{0.755} & 0.708     & 0.72      & 0.729 & 0.671 & \textbf{0.745}
               & \textbf{0.759} & 0.736     & 0.753     & \textbf{0.743} & 0.706 & \textbf{0.759} \\
Diabetes       & 0.1, 0.4   & \textbf{0.745} & 0.682     & 0.612     & 0.701 & 0.627 & 0.568 
               & \textbf{0.777} & 0.724     & 0.694     & 0.713 & 0.71  & 0.688 \\
(8,268,500)    & 0.2, 0.4   & \textbf{0.755} & 0.681     & 0.634     & 0.682 & 0.596 & 0.59  
               & \textbf{0.739} & 0.705     & 0.695     & 0.707 & 0.672 & 0.7   \\
               & 0.4, 0.4   & \textbf{0.719} & 0.645     & 0.619     & 0.637 & 0.551 & 0.654 
               & 0.651 & \textbf{0.685}     & 0.68      & 0.633 & 0.583 & \textbf{0.702} \\ \hline
               & 0.1, 0.3   & \textbf{0.639} & 0.563     & 0.507     & 0.529 & 0.519 & 0.529 
               & \textbf{0.727} & 0.645     & \textbf{0.709}     & 0.666 & 0.648 & 0.698 \\ 
               & 0.2, 0.2   & \textbf{0.659} & 0.606     & 0.537     & 0.548 & 0.534 & 0.615 
               & \textbf{0.698} & 0.661     & 0.655     & 0.627 & 0.623 & \textbf{0.695} \\
Breast         & 0.1, 0.4   & \textbf{0.587} & \textbf{0.577}     & 0.504     & 0.504 & 0.519 & 0.553 
               & \textbf{0.735} & 0.654     & 0.685     & 0.621 & 0.66  & 0.698 \\
(9,85,201)     & 0.2, 0.4   & \textbf{0.63}  & 0.534     & 0.482     & 0.496 & 0.538 & 0.538 
               & \textbf{0.73}  & 0.674     & 0.666     & 0.58  & 0.672 & 0.698 \\
               & 0.4, 0.4   & \textbf{0.596} & 0.519     & 0.504     & 0.526 & 0.471 & 0.51  
               & 0.677 & 0.628     & 0.545     & 0.537 & 0.529 & \textbf{0.698} \\ \hline
               & 0.1, 0.3   & \textbf{0.928} & \textbf{0.922}     & \textbf{0.924}     & \textbf{0.934} & 0.873 & \textbf{0.924} 
               & \textbf{0.956} & \textbf{0.949}     & \textbf{0.943}     & \textbf{0.954} & 0.92  & \textbf{0.943} \\ 
               & 0.1, 0.4   & \textbf{0.932} & \textbf{0.938}     & \textbf{0.937}     & \textbf{0.944} & 0.83  & 0.85  
               & \textbf{0.951} & 0.929     & \textbf{0.946}     & \textbf{0.941} & 0.898 & 0.929 \\
Breast         & 0.2, 0.2   & 0.928 & 0.904     & 0.835     & 0.897 & 0.887 & \textbf{0.961} 
               & \textbf{0.952} & \textbf{0.952}     & 0.897     & \textbf{0.942} & \textbf{0.955} & \textbf{0.946} \\
(30,212,357)   & 0.2, 0.4   & \textbf{0.93}  & 0.885     & 0.844     & 0.89  & 0.844 & 0.865 
               & \textbf{0.933} & 0.898     & 0.898     & \textbf{0.918} & 0.831 & 0.862 \\
               & 0.4, 0.4   & \textbf{0.928} & 0.867     & 0.819     & 0.746 & 0.824 & 0.855 
               & \textbf{0.908} & 0.839     & 0.817     & 0.795 & 0.673 & 0.866 \\ \hline
               & 0.1, 0.3   & \textbf{0.701} & 0.624     & 0.614     & 0.637 & 0.581 & 0.611 
               & \textbf{0.68}  & \textbf{0.693}     & 0.603     & 0.605 & 0.6   & 0.671 \\ 
               & 0.2, 0.2   & \textbf{0.689} & 0.65      & 0.647     & 0.623 & 0.611 & 0.664 
               & 0.702 & 0.693     & 0.704     & 0.62  & 0.6   & \textbf{0.738} \\
German         & 0.1, 0.4   & \textbf{0.696} & 0.642     & 0.587     & 0.63  & 0.562 & 0.55  
               & \textbf{0.667} & \textbf{0.693}     & 0.54      & 0.594 & 0.54  & 0.553 \\
(23,300,700)   & 0.2, 0.4   & \textbf{0.664} & 0.59      & 0.6       & 0.618 & 0.572 & 0.469 
               & \textbf{0.676} & \textbf{0.681}     & 0.537     & 0.573 & 0.535 & 0.581 \\
               & 0.4, 0.4   & \textbf{0.606} & 0.55      & 0.573     & 0.573 & 0.556 & 0.572 
               & 0.654 & 0.632     & 0.549     & 0.611 & 0.553 & \textbf{0.696} \\ \hline
               & 0.1, 0.3   & \textbf{0.89}  & \textbf{0.895}     & \textbf{0.892}     & 0.856 & 0.868 & 0.862 
               & \textbf{0.893} & \textbf{0.898}     & \textbf{0.883}     & 0.785 & 0.863 & \textbf{0.878} \\ 
               & 0.2, 0.2   & \textbf{0.883} & \textbf{0.899}     & \textbf{0.9}       & 0.861 & \textbf{0.894} & \textbf{0.886} 
               & \textbf{0.901} & \textbf{0.899}     & \textbf{0.894}     & 0.792 & \textbf{0.898} & \textbf{0.897} \\
Waveform       & 0.1, 0.4   & \textbf{0.884} & \textbf{0.893}     & 0.762     & 0.856 & 0.771 & 0.804 
               & \textbf{0.888} & \textbf{0.894}     & 0.703     & 0.778 & 0.821 & 0.821 \\
(21,1647,3353) & 0.2, 0.4   & \textbf{0.881} & \textbf{0.89}      & 0.828     & 0.835 & 0.81  & 0.795 
               & \textbf{0.884} & \textbf{0.884}     & 0.745     & 0.761 & 0.837 & 0.837 \\
               & 0.4, 0.4   & \textbf{0.87}  & \textbf{0.866}     & \textbf{0.867}     & 0.773 & 0.835 & 0.776 
               & \textbf{0.853} & \textbf{0.852}     & \textbf{0.852}     & 0.672 & 0.828 & \textbf{0.848} \\ \hline
               & 0.1, 0.3   & \textbf{0.906} & \textbf{0.9}       & 0.89      & 0.87  & \textbf{0.909} & 0.881 
               & \textbf{0.943} & 0.909     & 0.897     & 0.811 & \textbf{0.93}  & \textbf{0.924} \\ 
               & 0.2, 0.2   & \textbf{0.913} & \textbf{0.894}     & \textbf{0.907}     & \textbf{0.897} & \textbf{0.899} & \textbf{0.918 }
               & 0.905 & 0.905     & 0.905     & 0.91  & \textbf{0.936} & \textbf{0.936} \\
Thyroid        & 0.1, 0.4   & \textbf{0.875} & \textbf{0.862}     & 0.834     & 0.784 & \textbf{0.88}  & \textbf{0.869 }
               & 0.902 & \textbf{0.924}     & 0.856     & 0.75  & \textbf{0.919} & \textbf{0.917} \\
(5,65,150)     & 0.2, 0.4   & \textbf{0.863} & \textbf{0.862}     & \textbf{0.85}      & 0.784 & 0.822 & 0.781 
               & \textbf{0.905} & 0.898     & 0.865     & 0.759 & 0.881 & \textbf{0.92}  \\
               & 0.4, 0.4   & 0.762 & 0.738     & \textbf{0.859}     & 0.788 & 0.764 & 0.781 
               & 0.769 & 0.818     & \textbf{0.876}     & 0.738 & 0.738 & 0.837 \\ \hline
               & 0.1, 0.3   & 0.856 & 0.875     & 0.843     & \textbf{0.896} & 0.866 & \textbf{0.892} 
               & 0.796 & 0.835     & \textbf{0.903}     & 0.896 & 0.878 & \textbf{0.892} \\ 
               & 0.2, 0.2   & \textbf{0.9}   & 0.835     & \textbf{0.911}     & 0.894 & \textbf{0.908} & \textbf{0.912} 
               & \textbf{0.931} & 0.896     & 0.917     & 0.883 & \textbf{0.934} & 0.908 \\
Image          & 0.1, 0.4   & 0.723 & 0.841     & 0.705     & \textbf{0.881} & 0.799 & 0.785 
               & 0.717 & 0.806     & 0.679     & \textbf{0.888} & 0.825 & 0.808 \\
(18,1320,990)  & 0.2, 0.4   & 0.836 & \textbf{0.862}     & 0.719     & \textbf{0.845} & 0.832 & 0.802 
               & 0.672 & 0.755     & 0.722     & \textbf{0.86}  & 0.599 & 0.825 \\
               & 0.4, 0.4   & 0.741 & 0.72      & 0.788     & 0.763 & 0.732 & \textbf{0.834} 
               & 0.806 & 0.803     & 0.823     & 0.762 & 0.8   & \textbf{0.86}  \\ \hline
\end{tabular}
\caption{Experiment Results on 10 UCI Benchmarks. Entries within 2\% from the best in each row are in bold. \rev{Surr: surrogate loss method \citep{natarajan2013learning}; DMI: \citep{xu2019l_dmi}; Symm: symmetric loss method \citep{ghosh2015making}.} All method-specific parameters are estimated through cross-validation. The proposed method (Peer) are competitive across all the datasets. Neural-network-based methods (Peer, Surrogate, NN, \rev{Symmetric, DMI}) use the same hyper-parameters. All the results are averaged across 8 random seeds.}
\label{tab:uci}
\end{table}

The full experiment results are shown in Table.\ref{tab:uci}. \textit{Equalized Prior} indicates that in the corresponding experiments, we resample to make sure $\PP(Y=+1) = \PP(Y=-1)$ and we fix $\alpha=1$ in these experiments. Our method is competitive in all the datasets and even able to outperform the surrogate loss method with access to the true noise rates in most of them. C-SVM is also robust when noise rates are symmetric, and is competitive in 8 datasets.

From Figure \ref{fig:test}, we can see our peer loss can prevent over-fitting, which is also part of the reason of its achieved high robustness across different datasets and noise rates.

\begin{figure}
    \centering
    \subfigure[Twonorm ($e_{-1}=0.2$, $e_{+1}=0.4$)]{\includegraphics[width=0.4\textwidth]{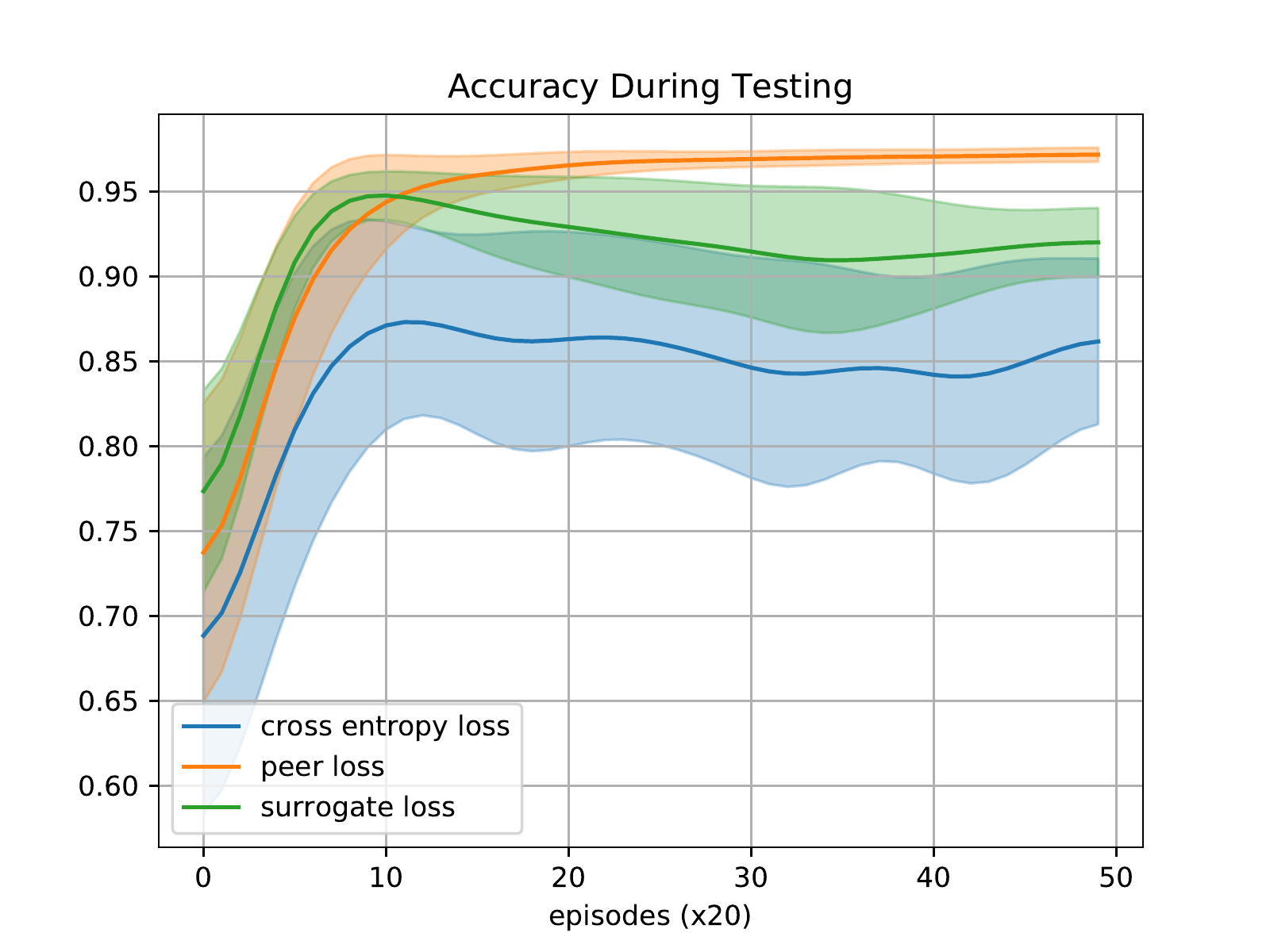}}
    \subfigure[Splice ($e_{-1}=0.1$, $e_{+1}=0.3$)]{\includegraphics[width=0.4\textwidth]{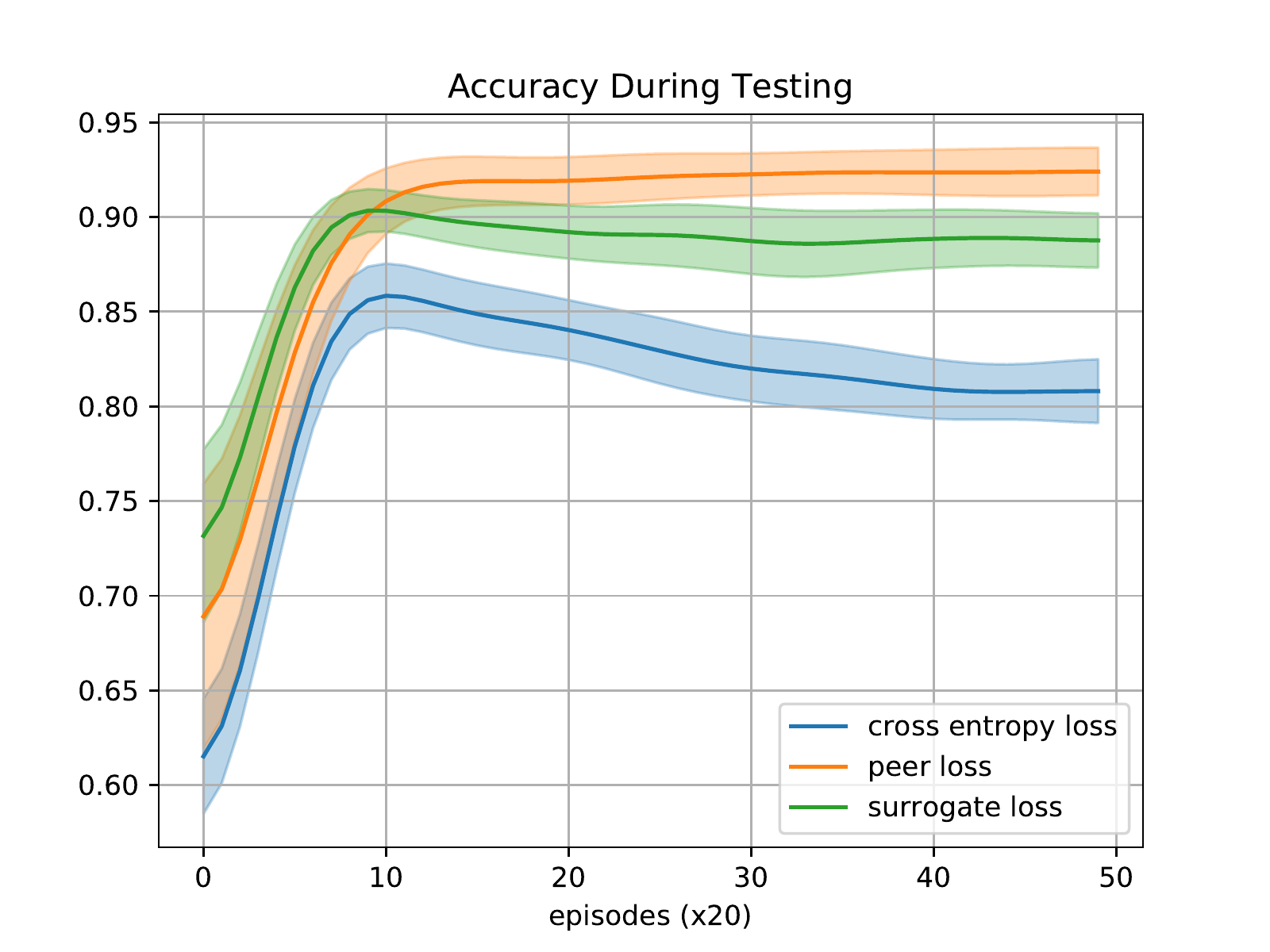}}
    \subfigure[Heart ($e_{-1}=0.2$, $e_{+1}=0.4$)]{\includegraphics[width=0.4\textwidth]{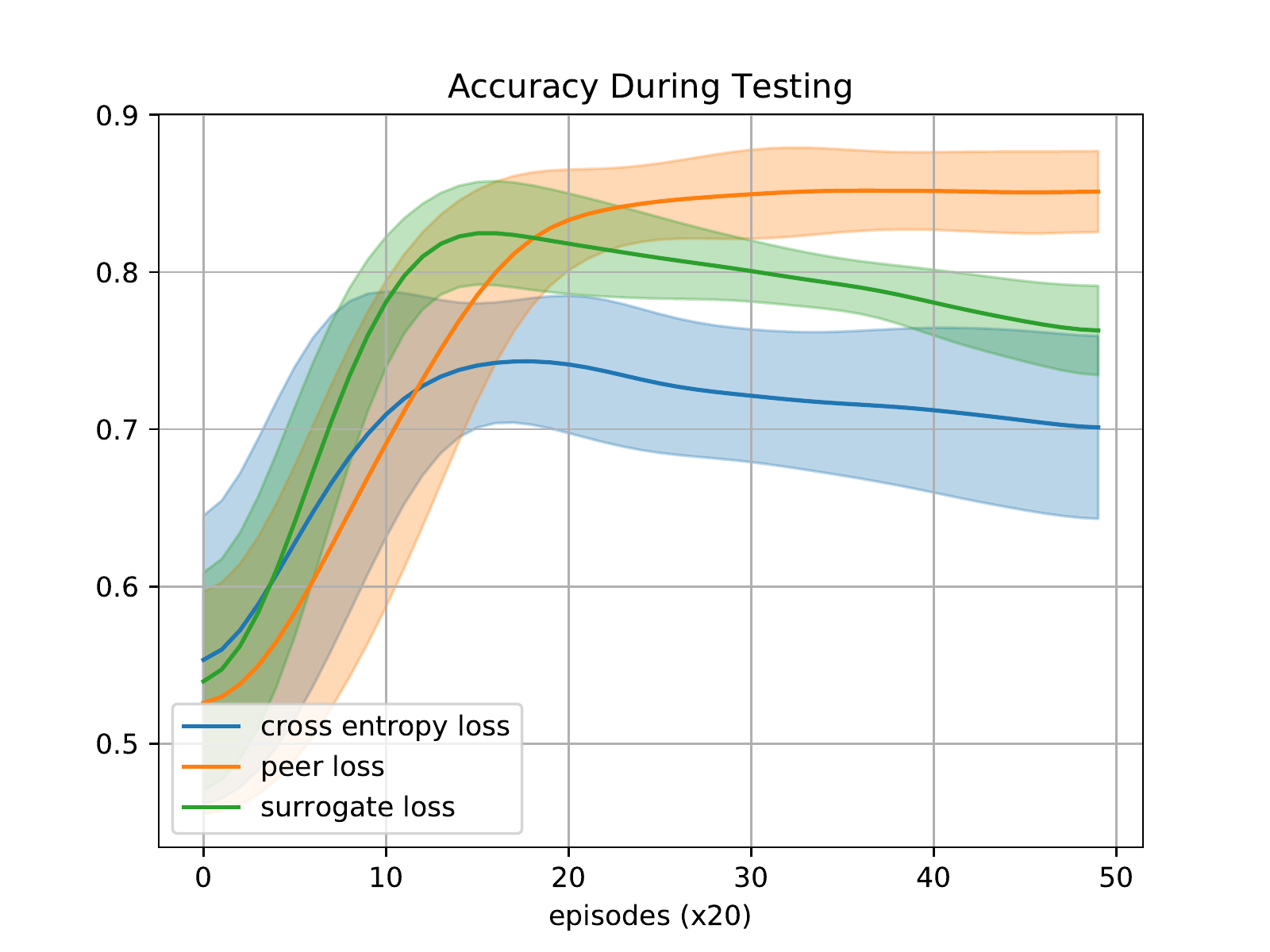}}
    \subfigure[Breast ($e_{-1}=0.4$, $e_{+1}=0.4$)]{\includegraphics[width=0.4\textwidth]{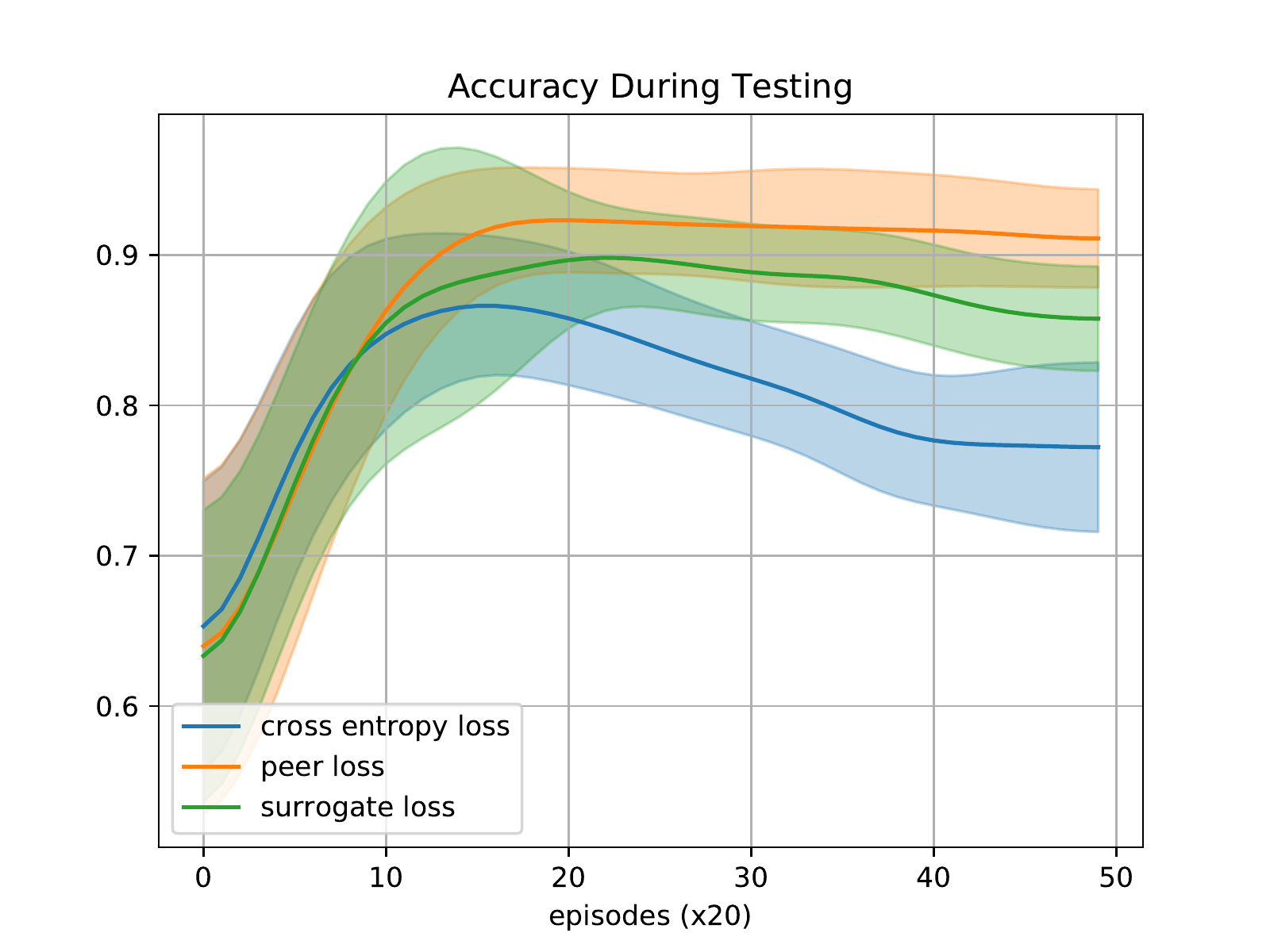}}
    \caption{Accuracy on test set during training}
    \label{fig:test}
\end{figure}

\subsection*{2D visualization of decision boundary}

\begin{figure}[!ht]
\centering
\includegraphics[width=0.25\textwidth]{figures/bcewithlogits_clean.png}\hspace{-0.1in}
\includegraphics[width=0.25\textwidth]{figures/bcewithlogits_random-0_2.png}\hspace{-0.1in}
\includegraphics[width=0.25\textwidth]{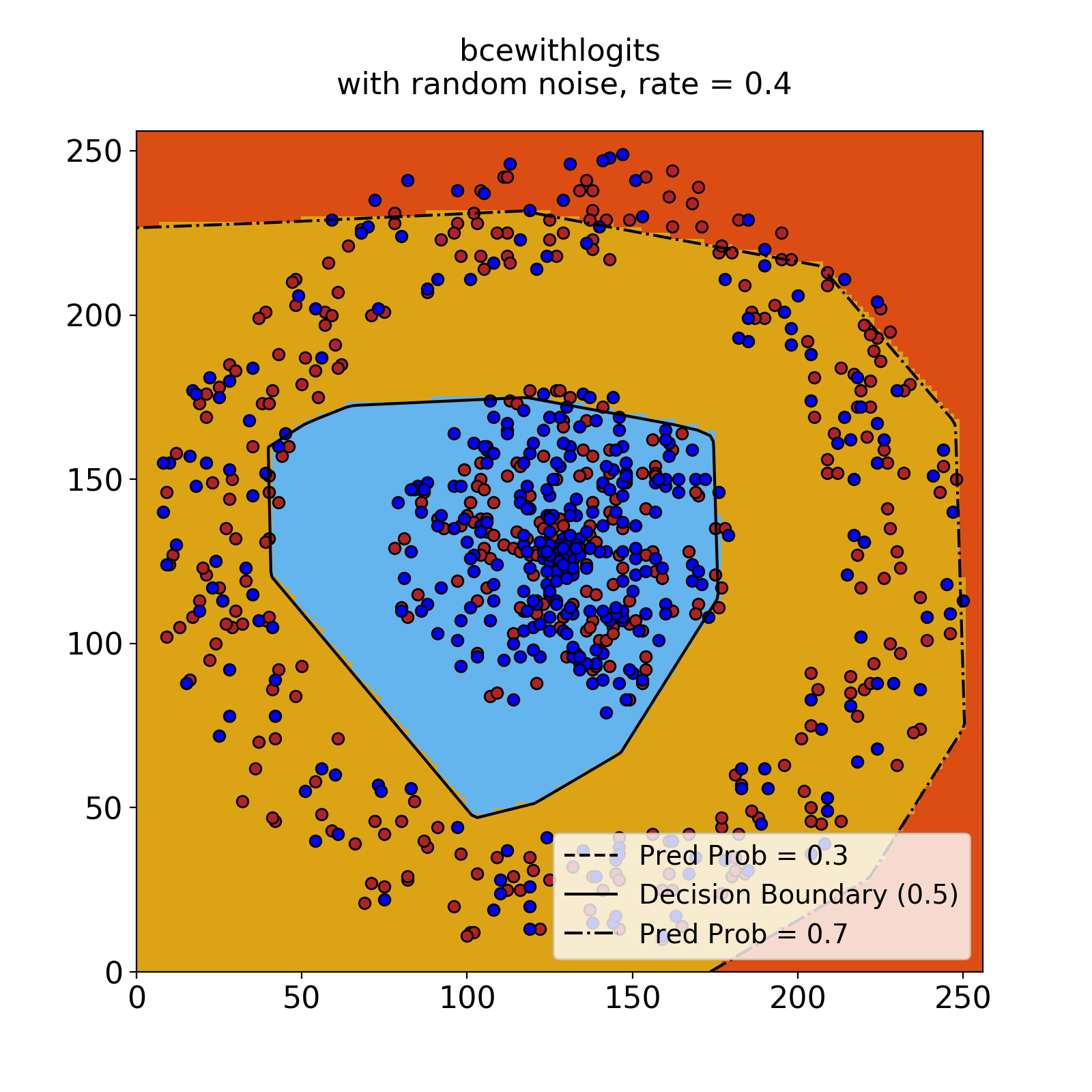}\hspace{-0.1in}
\includegraphics[width=0.25\linewidth]{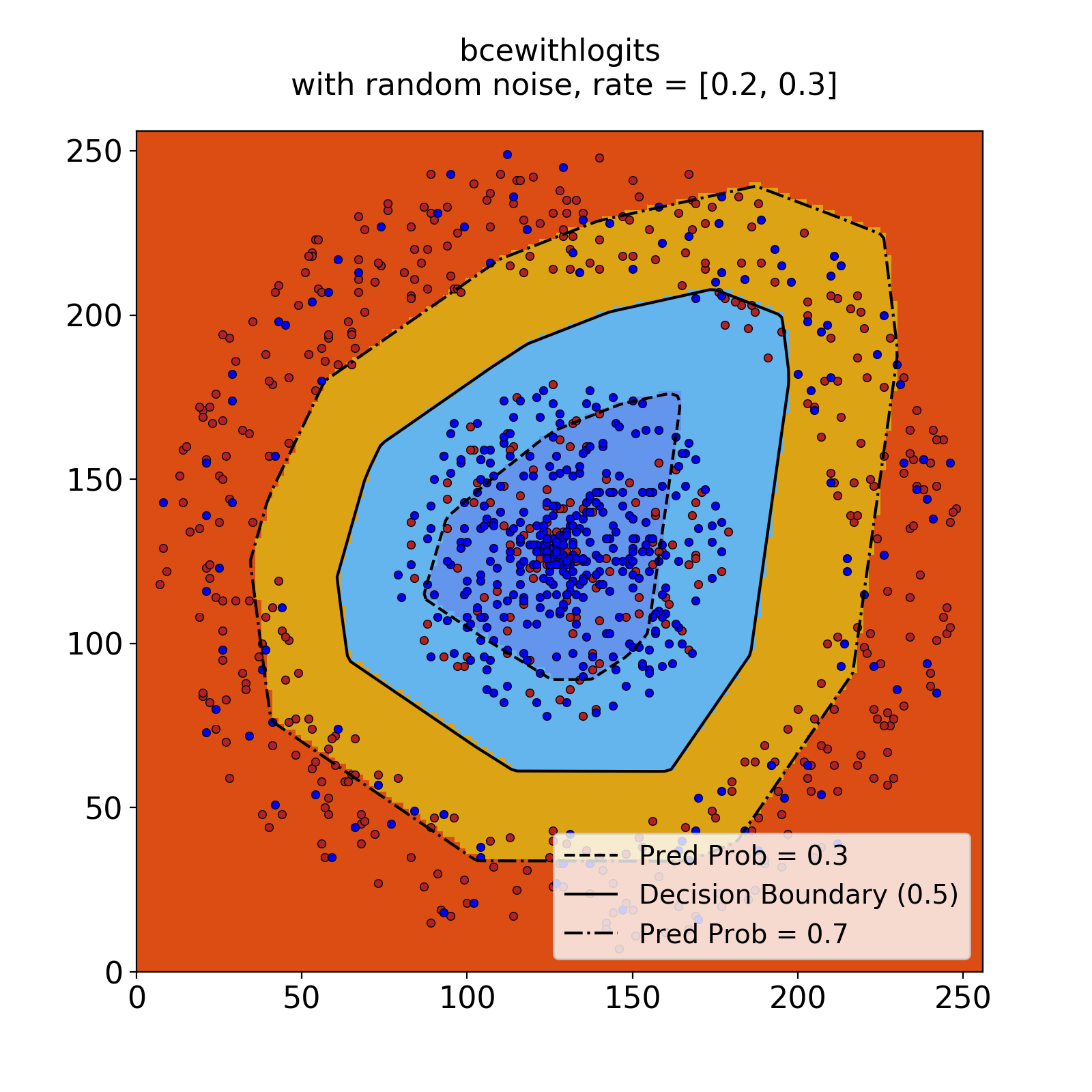}
\caption{Decision boundary for cross entropy trained on clean and noisy data (Left:  trained on noisy labels, $e_{+1}=e_{-1} = 0.2$. Middle: trained on noisy labels, $e_{+1}=e_{-1} = 0.4$. Right: asymmetric noise $e_{+1}=0.2, e_{-1} = 0.3$ ).}\label{CE:db:clean}
\end{figure}

\begin{figure}[!ht]
\centering
\includegraphics[width=0.3\textwidth]{figures/bcewithlogits_peer_random-0_2.png}\hspace{-0.15in}
\includegraphics[width=0.3\textwidth]{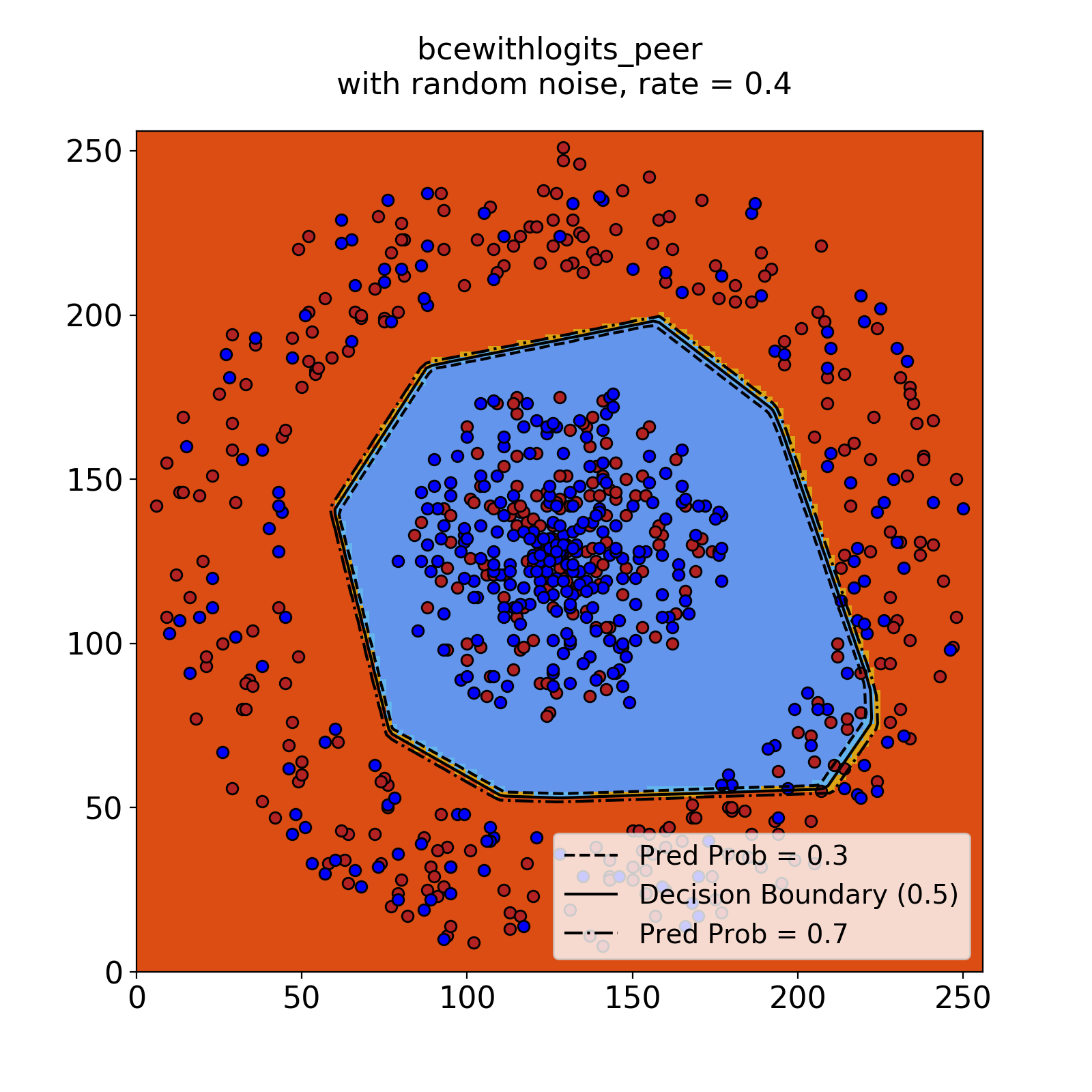}\hspace{-0.15in}
\includegraphics[width=0.3\linewidth]{figures/bcewithlogits_peer_random-2-3.png}
\caption{Decision boundary for peer loss. Left: $e_{+1}=e_{-1} = 0.2$. Middle: $e_{+1}=e_{-1} = 0.4$. Right: asymmetric noise $e_{+1}=0.2, e_{-1} = 0.3$} 
\end{figure}


\end{document}